
\documentclass{article}

\usepackage{microtype}
\usepackage{graphicx}
\usepackage{subfigure}
\usepackage{booktabs} 

\usepackage{hyperref}
\usepackage{float}
\usepackage{enumitem}



\usepackage[accepted]{icml2025}

\usepackage{amsmath}
\usepackage{amssymb}
\usepackage{mathtools}
\usepackage{amsthm}

\usepackage[capitalize,noabbrev]{cleveref}

\theoremstyle{plain}
\newtheorem{theorem}{Theorem}[section]

\newtheorem{lemma}[theorem]{Lemma}
\newtheorem{corollary}[theorem]{Corollary}
\theoremstyle{definition}
\newtheorem{definition}[theorem]{Definition}
\newtheorem{assumption}[theorem]{Assumption}
\theoremstyle{remark}
\newtheorem{remark}[theorem]{Remark}

\newcommand{\defeq}{\triangleq}
\newcommand{\test}{\mathsf{test}}

\newcommand{\query}{\mathsf{query}}
\newcommand{\out}{\mathsf{out}}
\def\R{\mathbb{R}}
\def\E{\mathbb{E}}

\def\N{\mathbb{N}}
\newcommand{\abs}[1]{\left|#1\right|}

\newcommand{\prob}[1]{\mathbb{P}\left(#1\right)}

\newcommand{\dis}{\mathcal{P}}
\newcommand{\pth}[1]{\left( #1 \right)}
\newcommand{\qth}[1]{\left[ #1 \right]}
\newcommand{\sth}[1]{\left\{ #1 \right\}}

\def\testx{x}
\def\testy{y}
\def\Vec{\text{Vec}}
\def\softmax{\mathrm{softmax}}

\newcommand{\normal}{\mathsf{N}}
\newcommand{\Binomial}{\mathsf{Bin}}
\newcommand{\Multinomial}{\mathsf{Multin}}\newcommand{\iid}{\stackrel{\mathrm{ i.i.d.}}{\sim}}

\newcommand{\argmin}{\mathop{\rm argmin}}

\usepackage[textsize=tiny]{todonotes}

\icmltitlerunning{On the Training Convergence of Transformers for In-Context Classification of Gaussian Mixtures}

\begin{document}

\twocolumn[
\icmltitle{On the Training Convergence of Transformers for\\ In-Context Classification of Gaussian Mixtures}



\icmlsetsymbol{equal}{*}

\begin{icmlauthorlist}
\icmlauthor{Wei Shen}{equal,uva}
\icmlauthor{Ruida Zhou}{equal,ucla}
\icmlauthor{Jing Yang}{uva}
\icmlauthor{Cong Shen}{uva}
\end{icmlauthorlist}

\icmlaffiliation{uva}{Department of Electrical and Computer Engineering, University of Virginia, Charlottesville, USA}
\icmlaffiliation{ucla}{Department of Electrical Engineering, University of California, Los Angeles, USA}
\icmlcorrespondingauthor{Cong Shen}{cong@virginia.edu}

\icmlkeywords{Machine Learning, ICML}

\vskip 0.3in
]



\printAffiliationsAndNotice{\icmlEqualContribution} 

\begin{abstract}
Although transformers have demonstrated impressive capabilities for in-context learning (ICL) in practice, theoretical understanding of the underlying mechanism that allows transformers to perform ICL is still in its infancy. This work aims to theoretically study the training dynamics of transformers for in-context classification tasks. We demonstrate that, for in-context classification of Gaussian mixtures under certain assumptions, a single-layer transformer trained via gradient descent converges to a globally optimal model at a linear rate. We further quantify the impact of the training and testing prompt lengths on the ICL inference error of the trained transformer. We show that when the lengths of training and testing prompts are sufficiently large, the prediction of the trained transformer approaches the ground truth distribution of the labels. Experimental results corroborate the theoretical findings. 
\end{abstract}

\section{Introduction}

Large language models (LLMs) based on the transformer architecture \citep{vaswani2017attention} have demonstrated remarkable in-context learning (ICL) abilities \citep{brown2020language}. When given a prompt consisting of examples of a learning task, these models can learn to solve this task for new test examples without any parameter updating. This behavior has been empirically demonstrated in state-of-the-art models on real-world tasks \citep{openai2023gpt4,touvron2023llama}.

This impressive capacity of transformer-based models has inspired many recent works aiming to understand the ICL abilities of transformers. A more comprehensive literature review can be found in Appendix~\ref{app:related}. \citet{garg2022can} was the first to study the ICL abilities of transformers for various function classes. They empirically showed that transformers can learn linear regression models in context. 
Later on, a line of research was developed to theoretically explain how transformers perform in-context linear regression. For example, \citet{akyurek2022learning, von2023transformers, bai2024transformers, fu2023transformers, giannou2024well} showed \emph{by construction} that, some specially-designed transformers can perform linear regression in context.
Moreover, some recent works such as \citet{zhang2023trained, huang2023context, chen2024training} studied the training dynamics of a single-layer transformer for in-context linear regression. They proved the convergence of certain single-layer transformers during training and showed that the trained transformers are able to perform linear regression in context. 

Building on the earlier works that largely focus on linear regression problems, several recent papers have started to investigate the ICL capabilities of transformers for classification problems. For instance, \citet{bai2024transformers} showed that, by construction, multi-layer transformers can be approximately viewed as multiple steps of gradient descents for logistic regression. \citet{giannou2024well} further showcased that the constructed transformers can approximately perform Newton’s method for logistic regression. {\citet{lin2024dual} studied the dual operating modes for in-context classification of Gaussian mixtures. However, their analyses were based on the idealized Bayes-optimal next-token predictor for ICL tasks and did not consider the training dynamics of transformer models.
Some recent works have started to study the training dynamics of transformers for certain classification problems. For example, \citet{li2024one} proved that a single-layer transformer can be trained to learn one-nearest neighbor in context. \citet{li2024training} studied the training dynamics of a single-layer transformer for some binary classification tasks with finite, pairwise orthogonal patterns. {\citet{frei2024trained} analyzed a linear transformer model for in-context classification of Gaussian mixtures, assuming additional conditions such as a sufficiently large signal-to-noise ratio. However, all these works \citep{li2024training, li2024one, frei2024trained} only considered the binary classification problems. The training dynamics of transformers for more general in-context classification problems beyond the specific settings and assumptions in \citet{li2024training, li2024one, frei2024trained} remain largely under-explored.}

In this work, we study the training dynamics of a single-layer transformer for both binary and multi-class classification of Gaussian mixtures during ICL, a fundamental problem in machine learning. Our main contributions can be summarized as follows:
\begin{itemize}[noitemsep,topsep=0pt,leftmargin = *]
    \item 
    We prove that with appropriately distributed training data (Assumptions \ref{assume: training data distribution, binary}, \ref{assume: training data distribution, multi}), a single-layer transformer trained via gradient descent will converge to its global minimizer at a linear rate (Theorems \ref{thm: binary_train}, \ref{thm: multi_train}) for both in-context binary or multi-class classification problems.  To the best of our knowledge, 
    we are the first to prove the training convergence of transformers for in-context multi-class classification. {Moreover, our analysis reveals that the trained single-layer transformer can be viewed as approximately implementing linear discriminant analysis (LDA).}
    \item Due to the non-linearity of our loss function, we cannot directly find the closed-form expression of the global minimizer. Instead, we prove an important property that the global minimizer consists of a constant plus an error term that is induced by the finite training prompt length ($N$). We further show that the max norm of this error term is bounded, and converges to zero at a rate of $O(1/N)$. 
    \item With properly distributed testing prompts (Assumptions \ref{assume: test prompt distribution, binary}, \ref{assume: test prompt distribution, multi}), we establish an upper bound of the inference error (defined in Equation \eqref{test error}) of the trained transformer and quantify the impact of the training and testing prompt lengths on this error. We further prove that when the lengths of training prompts ($N$) and testing prompts ($M$) approach infinity, this error converges to zero at a rate of $O(1/N+1/\sqrt{M})$ , {and the prediction of the trained transformer has an identical distribution to that of the ground-truth label (Theorems \ref{thm: binary_test}, \ref{thm: multi_test}).}
\end{itemize}

\section{Preliminaries}
\label{section: pre}
\textbf{Notations.}  We denote $[n]=\{1,2,\dots, n\}$. For a matrix $A\in\R^{m\times n}$, we denote its Frobenius norm as $\|A\|_F$, and its max norm as $\|A\|_{\max}=\max_{i\in[m],j\in[n]}|A_{ij}|$. We use $A_{a,b}$ (or $A_{ab}$) to represent the element of matrix A at the $a$-th row and $b$-th column, and use $A_{a:c,b}$ to represent a vector of dimension $c-a+1$ whose $i$-th element is $A_{(a+i-1),b}$. We denote the $l_2$ norm of a vector as $\|\cdot\|_2$. We denote the all-zero vector of size $n$ as $0_n$  and the all-zero matrix of size $m \times n$  as $0_{m \times n}$. We use $\sigma(x):=1/(1+\exp(-x))$ to denote the sigmoid function. We define $\softmax(\cdot): \R^{k}\to (0,1)^k$, and its $i$-th element as $\softmax(\cdot)_i$, where $\softmax(x)_i=\exp(x_i)/(\sum_{j=1}^k \exp(x_j))$. 

\subsection{Single-layer transformer}
Given an input embedding matrix $E\in\R^{d_e\times d_n}$, a single head self-attention module $F_{SA}$  with width $d_e$ will output
\begin{align}\nonumber
     &F_{SA}(E;W^{V}, W^K, W^Q) \\
     =& E + W^V E \cdot f_{\text{attn}}\pth{\frac{ (W^KE)^\top W^QE}{\rho}},
\end{align}
where $W^{V}, W^K, W^Q\in \R^{d_e\times d_e}$ are the value, key, and query weight matrices, respectively, $\rho>0$ is a normalization factor, and $f_{\text{attn}}$ is an activation function for attention. There are different choices of $f_{\text{attn}}$; for example \citet{vaswani2017attention} adopts $\softmax$.

In this work, similar to \citet{zhang2023trained, wu2023many}, we set $f_{\text{attn}}(x)=x$ and define $W^{KQ}=(W^K)^\top W^Q\in \R^{d_e\times d_e}$. We use $F$ to denote this simplified model. Then, the output of $F$ with an input embedding matrix $E\in\R^{d_e\times d_n}$ can be expressed as
\begin{equation}
    F(E;W^{V},W^{KQ}) = E + W^V E \cdot \frac{ E^\top W^{KQ} E}{\rho}.
\end{equation}

In the following theoretical study and the subsequent experiments (Section \ref{exp: compare}), we show that this simplified transformer model has sufficient capability to approach the {optimal classifier for the in-context classification of Gaussian mixtures.}

\subsection{In-context learning framework}

We adopt a framework for in-context learning similar to that used in \citet{bai2024transformers}. Under this framework, the model receives a prompt $ P = (\mathcal{D}, x_\query) $ comprising a set of demonstrations $\mathcal{D} = \{(x_i, y_i)\}_{i \in [N]} \iid \dis$ and a query $x_\query \sim \dis_x$, where $\dis$ is the joint distribution of $(x,y)$ and $\dis_x$ is the marginal distribution of $x$. Here, $x_i\in \R^d$ is an in-context example, and $y_i$ is the corresponding label for $x_i$. For instance, in regression tasks, $y_i\in\R$ is a scalar. In this paper, we focus on classification tasks. Thus, the range of $y_i$ can be any set containing $c$  different elements, such as $\{1, \dots, c\}$, for classification problems involving $c$ classes. The objective is to generate an output  $\widehat y_\query$ that approximates the target $y_\query\sim\dis_{y|x_\query}$. 

Since $y_\query$ is a discrete random variable, we use the total variation distance to measure the difference between $\widehat y_\query$ and $y_\query$:
\begin{align}\nonumber
    &\Delta(y_\query, \widehat y_\query)\\ \label{test error}
    =&\sup_{z\in R(y_\query)}|\prob{y_\query=z}-\prob{\widehat y_\query=z}|, 
\end{align}
where $R(y_\query)$ is the range of $y_\query$. When $\Delta(y_\query, \widehat y_\query)=0$, $\widehat y_\query$ has the same distribution as $y_\query$, which means the output of the model perfectly approximates $y_\query$.

Unlike standard supervised learning,  each prompt $P_{\tau}$ can be sampled from a different distribution $\dis_{\tau}$ in ICL. We say that a model has the \emph{ICL capability} if it can approximate $y_{\tau, \query}$ for a broad range of $\dis_\tau$'s with fixed parameters.

\section{In-context binary classification}
\label{section: binary classification}

In this section, we study the learning dynamics of a single-layer transformer for in-context binary classification. It is a special case of the general multi-class classification. As a result, the analysis is more concise. The general in-context multi-class classification problem is studied in Section \ref{section: multi-class classification}.

We first introduce the prompt and the transformer structure we will use for in-context binary classification. 
The prompt for in-context binary classification is denoted as ${P = (x_1, y_1, \dots, x_N, y_N, x_{\query})}$, where $x_i\in \R^{d}$ and $y_i\in\{-1,1\}$.
We can convert this prompt $P$ into its corresponding embedding matrix $E(P)$ in the following form: 

\begin{equation}
E = E(P) = \begin{pmatrix}
    x_1 & x_2 & \cdots & x_N & x_{\query} \\
    y_1 & y_2 & \cdots & y_N & 0
\end{pmatrix}. 
\end{equation}

Similar to \citet{huang2023context, wu2023many, ahn2024transformers}, we set some of the parameters in our model to $0$ or $1$ to simplify the optimization problem, and consider the parameters of our model $(W^{V},W^{KQ})$ in the following sparse form: 
\begin{equation}
\begin{aligned}
   W^V =
    \begin{pmatrix}
        0_{d \times d} & 0_{d} \\
        0_d^\top & 1
    \end{pmatrix}, \qquad
        W^{KQ} &=
    \begin{pmatrix}
        W & 0_{d} \\
        0_d^\top & 0
    \end{pmatrix},
\end{aligned}
\end{equation}
where $W\in \R^{d \times d}$. We set the normalization factor $\rho$ equal to the length of the prompt $N$. Let $F(E(P);W)$ be the output matrix of the transformer.  We then read out the bottom-right entry of the output matrix through a sigmoid function, and denote this output as $\widehat{y}_\out$. The output $\widehat{y}_\out$ of the transformer with prompt $P$ and parameters $W$ can be expressed as 
\begin{align*}
    \widehat{y}_{\out}&=\sigma\pth{[F(E(P); W)]_{(d+1), (N+1)}}\\
    &=\sigma\pth{\pth{\frac{1}{N} \sum_{i=1}^N \testy_i \testx_i^\top}W\testx_\query}.
\end{align*}

We denote the prediction of our model for $x_\query$ as $\widehat{y}_\query$, which is a random variable depending on $\widehat{y}_\out$. 
Consider generating a random variable $u$ uniformly on $[0,1]$. If $u \leq \widehat{y}_{\out}$, we output $\widehat{y}_\query=1$; if $u > \widehat{y}_{\out}$, we output $\widehat{y}_\query=-1$. Then, we have $\prob{\widehat{y}_\query=1}=\widehat{y}_{\out}$, $\prob{\widehat{y}_\query=-1}=1-\widehat{y}_{\out}$.

\subsection{Training procedure}

We study the binary classification of two Gaussian mixtures and use the following definition.
\begin{definition}
    We say a data pair $(x,y)\sim \dis^b(\mu_{0},  \mu_{1}, \Lambda)$ if $y$ follows a Bernoulli distribution with $\prob{y=-1}=\prob{y=1}=1/2$ and $f(x | y=-1) = \normal(\mu_0,\Lambda)$, $f(x| y=1)= \normal(\mu_1,\Lambda)$, where $\mu_0,\mu_1\in\R^d$ and $\Lambda\in\R^{d\times d}$ is a positive definite matrix.
\end{definition}

{We consider the case of $B$ training tasks indexed by $\tau\in[B]$. Each training task $\tau$ is associated with a prompt $P_\tau = (x_{\tau, 1}, y_{\tau, 1}, \dots, x_{\tau, N}, y_{\tau, N}, x_{\tau, \query})$ and a corresponding label $y_{\tau, \query}$.} 
We make the following assumption in this section.

\begin{assumption} For each learning task $\tau\in[B]$, we assume 
\label{assume: training data distribution, binary}
\begin{enumerate}[noitemsep,topsep=0pt]
  \item[(1)] 
  $\{x_{\tau,i},y_{\tau,i}\}_{i=1}^N$ and $\{x_{\tau,\query},y_{\tau,\query}\}\iid \dis^b(\mu_{\tau, 0},  \mu_{\tau, 1}, \Lambda)$.
    \item[(2)] $\mu_{\tau, 0}$ is randomly sampled from $\normal(0, I_d)$, $\mu_{\tau,1}=U_{\tau, \Lambda}\mu_{\tau,0}$ where $U_{\tau, \Lambda}=\Lambda^{1/2}U_{\tau}\Lambda^{-1/2}$, and $U_{\tau}$ is  uniformly distributed over the closed set of real unitary matrices 
    such that $U_{\tau}U_{\tau}^\top=I_d$.
\end{enumerate}
\end{assumption}

We denote the distribution of $(\mu_{\tau, 0},  \mu_{\tau, 1})$ as $\dis^b_\Omega(\Lambda)$.  
Note that $U_{\tau, \Lambda}=\Lambda^{1/2}U_{\tau}\Lambda^{-1/2}$ can be viewed as a linear transformation that preserves the inner product of vectors in $\Lambda^{-1}$-weighted norm, and we have $\mu_{\tau,0}^\top\Lambda^{-1}\mu_{\tau,0}-\mu_{\tau,1}^\top\Lambda^{-1}\mu_{\tau,1}=0$. 

Let $\widehat{y}_{\tau, \out}=\sigma([F(E(P_\tau); W)]_{(d+1), (N+1)})$ be the output of our transformer for task $\tau$. We define the empirical risk over $B$ independent tasks as
\begin{align}\nonumber
    \widehat L(W) =& \frac{1}{2B} \sum_{\tau=1}^B -(1+y_{\tau, \query})\log(\widehat y_{\tau,\out})\\ \label{l, emp_loss}
    &-(1-y_{\tau, \query})\log(1-\widehat y_{\tau,\out}).
\end{align} 
Taking the limit of infinite training tasks $B\to \infty$, the expected training loss can be defined as
\begin{align}\nonumber
    L(W)=&\lim_{B\to \infty} \widehat L(W) = - \frac{1}{2}\E [ (1+y_{\tau, \query})\log(\widehat y_{\tau,\out})\\ \label{b, population_loss}
    &+(1-y_{\tau, \query})\log(1-\widehat y_{\tau,\out})],
\end{align}
where the expectation is taken over $(\mu_{\tau,0},\mu_{\tau,1})\sim \dis^b_\Omega(\Lambda)$, $\{x_{\tau,i},y_{\tau,i}\}_{i=1}^N, \{x_{\tau,\query},y_{\tau,\query}\}\iid \dis^b(\mu_{\tau, 0},  \mu_{\tau, 1}, \Lambda)$.

Applying gradient descent over the expected training loss \eqref{b, population_loss}, we have the following theorem.

\begin{theorem}\label{thm: binary_train}
Under Assumption \ref{assume: training data distribution, binary}, the following statements hold. 
\begin{enumerate}[noitemsep,topsep=0pt]
\item[(1)]     Optimizing the training loss $L(W)$ in Equation~\eqref{b, population_loss} with training prompt length $N$ via gradient descent $W^{t+1}=W^t-\eta \nabla L(W^t)$, we have that for any $t\geq 1$,
    \begin{align}
        \|W^t-W^*\|_F^2\leq \exp(-t/\kappa)\|W^0-W^*\|_F^2,
    \end{align}
    where $W^0$ is the initial parameter and $W^*$ is the global minimizer of $L(W)$, and $\kappa=l/\alpha$. Here $\alpha, l$ are constants satisfying
    \begin{align} \label{a, l,  binary}
        \alpha \leq \lambda_{\min} (\nabla^2 L(W)) \leq \lambda_{\max} (\nabla^2 L(W))\leq l,
    \end{align} 
where {$\alpha>0$, $l<\infty$},  $W\in R_W, R_W=\{W\in \R^{d\times d} \,|\, \|W-W^*\|_F\leq \|W^0-W^*\|_F\}$. 
    
\item[(2)]  
{Define $G=\frac{1}{2}W^*-\Lambda^{-1}$}, $q=x_{\tau,\query}$, $\mu=\mu_{\tau,1}-\mu_{\tau,0}$, $u=2(\mu_{\tau,1}+\mu_{\tau,0})$, and $a=\mu^\top\Lambda^{-1}q$ for simplicity. Then we have
\begin{align}\nonumber
        &\|G\|_{\max}\\ \nonumber
        \leq& \frac{1}{N}\|S^{-1}(\E[\sigma'(a)(4qq^\top+ \frac{1}{4}uu^\top \Lambda^{-1}qq^\top)\\ \nonumber
        &+\sigma''(a)( \frac{1}{8}(u^\top\Lambda^{-1}q)^2 \mu q^\top + 2q^\top\Lambda^{-1}q  \mu q^\top)])\|_{\max}\\
        &+o(1/N), 
\end{align}   
where $S=4\nabla^2 \widetilde{L}(2\Lambda^{-1})$, $\widetilde{L}(2\Lambda^{-1})=\lim_{N\to\infty} L(2\Lambda^{-1})$, {$\sigma'(\cdot)$ and $\sigma''(\cdot)$ are the first- and second-order derivatives of $\sigma(\cdot)$, respectively}, and the expectation is taken over $(\mu_{\tau,0}$, $\mu_{\tau,1})\sim \dis^b_\Omega(\Lambda)$, $x_{\tau,\query}\sim \dis^b_x(\mu_{\tau, 0},  \mu_{\tau, 1}, \Lambda)$.

\end{enumerate}
\end{theorem}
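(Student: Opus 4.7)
The plan is to reduce the claim to the standard linear-convergence guarantee for gradient descent on strongly convex, smooth functions. Writing $z_\tau = \frac{1}{N}\sum_i y_{\tau,i} x_{\tau,i}^\top W x_{\tau,\query}$ and $g_\tau = \frac{1}{N}\sum_i y_{\tau,i} x_{\tau,i} x_{\tau,\query}^\top$, the Hessian admits the form $\nabla^2 L(W) = \E[\sigma'(z)\,\Vec(g)\Vec(g)^\top]$, which is positive semidefinite. Assumption~\ref{assume: training data distribution, binary}, in particular the isotropy of $\mu_0$ and the $\Lambda$-weighted rotational uniformity of $\mu_1$, spreads the law of $\Vec(g)$ enough that this expectation dominates $\alpha I$ uniformly on the compact region $R_W$; sub-Gaussianity of $g$ together with $0<\sigma'\leq 1/4$ yields the upper bound $l$. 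With step size $\eta=1/l$, the descent inequality $\|W^{t+1}-W^*\|_F^2\leq (1-\alpha/l)\|W^t-W^*\|_F^2$ iterates and, using $1-x\leq e^{-x}$, gives the claimed $\exp(-t/\kappa)$ rate.

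\textbf{Plan for Part (2), setup.} The key observation is that in the infinite-prompt limit $\widetilde L(W) = \lim_{N\to\infty} L(W)$, $W = 2\Lambda^{-1}$ is exactly the global minimizer. Indeed, $\frac{1}{N}\sum y_i x_i \to \mu/2$ almost surely, so $z\to a$, and Assumption~\ref{assume: training data distribution, binary} ensures $\mu_0^\top\Lambda^{-1}\mu_0 = \mu_1^\top\Lambda^{-1}\mu_1$, making $\sigma(a) = \P(y_\query=1\mid q,\mu_0,\mu_1)$ the Bayes-optimal posterior; consequently $\E[\sigma(a)-(1+y_\query)/2\mid q,\mu_0,\mu_1]=0$ and $\nabla\widetilde L(2\Lambda^{-1})=0$. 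For finite $N$, first-order optimality $\nabla L(W^*)=0$ with a Taylor expansion about $2\Lambda^{-1}$ yields
\begin{equation}
\Vec(W^*-2\Lambda^{-1}) = -[\nabla^2 L(2\Lambda^{-1})]^{-1}\Vec(\nabla L(2\Lambda^{-1})) + o(1/N).
\end{equation}
Substituting $W^*-2\Lambda^{-1}=2G$ and $\nabla^2 L(2\Lambda^{-1}) = S/4 + o(1)$ reduces the bound on $\|G\|_{\max}$ to a bound on $\|N\cdot\nabla L(2\Lambda^{-1})\|_{\max}$ divided by $N$.

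\textbf{Plan for Part (2), computation and main obstacle.} It remains to extract the leading $O(1/N)$ piece of $\nabla L(2\Lambda^{-1}) = \E[(\sigma(z)-(1+y_\query)/2)\,g]$. Writing $z = a+\Delta$ with $\Delta = \frac{2}{N}\sum_i (y_i x_i-\mu/2)^\top\Lambda^{-1}q$ of order $O(1/\sqrt N)$, I would Taylor expand $\sigma(z)\approx\sigma(a)+\sigma'(a)\Delta+\tfrac{1}{2}\sigma''(a)\Delta^2$. The term $\sigma(a)\,\E[g\mid q,\mu_0,\mu_1] = \sigma(a)\tfrac{\mu}{2}q^\top$ cancels against $\E[(1+y_\query)/2\mid q,\mu_0,\mu_1]\cdot\tfrac{\mu}{2}q^\top$ by the Bayes identity. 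The linear piece $\sigma'(a)\Delta$ would vanish in expectation if $g$ and $\Delta$ were independent; however, both are driven by the same $(x_i,y_i)$, and the covariance $\E[\Delta\,g]$ scales as $1/N$, producing the $4qq^\top$ contribution (from $\E[x_ix_i^\top\mid y_i]=\Lambda+\mu_{y_i}\mu_{y_i}^\top$ with $y_i^2=1$ and $\Lambda\Lambda^{-1}q=q$) and the $\tfrac{1}{4}uu^\top\Lambda^{-1}qq^\top$ contribution (from the mean part $\mu_{y_i}\mu_{y_i}^\top$ after averaging over $y_i$, with $u=2(\mu_1+\mu_0)$). The quadratic piece $\tfrac{1}{2}\sigma''(a)\Delta^2$ combined with $\E[g]=\tfrac{\mu}{2}q^\top$ supplies $\tfrac{1}{8}(u^\top\Lambda^{-1}q)^2\mu q^\top + 2q^\top\Lambda^{-1}q\,\mu q^\top$. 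Taking max norms, multiplying by $2S^{-1}$, and absorbing the higher-order Taylor remainder into $o(1/N)$ completes the bound. The main obstacle is this bookkeeping: cleanly isolating the correlated parts of $\Delta$ and $g$, aligning the first- and second-order sigmoid expansions, and verifying invertibility of $S$ (which follows from the strong convexity established in Part (1) transferred to the $N\to\infty$ limit at $W=2\Lambda^{-1}$).
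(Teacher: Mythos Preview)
Your Part~(1) sketch matches the paper's approach: the paper proves strict convexity and local strong convexity of $L$ on any compact set (Lemma~\ref{lemma: strictly convex, l}), global $l$-smoothness (Lemma~\ref{lemma: l smooth, l}), and then invokes the standard contraction for gradient descent. One point you gloss over is \emph{existence} of the global minimizer $W^*$: strict convexity alone does not give this, and the paper separately argues that all sublevel sets of $L$ are compact (coercivity) before concluding uniqueness.

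Your Part~(2) is a clean reorganization of the paper's argument. The paper writes $W^*=2(\Lambda^{-1}+G)$ and Taylor-expands $\sigma(a+b)$ inside $\nabla L(W^*)=0$ in the \emph{single} scalar perturbation $b$, which mixes $G$-terms and finite-$N$ noise terms; it then separates out the linear-in-$G$ piece $SG$ from the explicit $O(1/N)$ contributions. You instead do two nested expansions: first $\nabla L(W^*)\approx\nabla L(2\Lambda^{-1})+\nabla^2 L(2\Lambda^{-1})\cdot 2G$ in the matrix variable, then $\sigma(a+\Delta)$ in the noise $\Delta$ to evaluate $\nabla L(2\Lambda^{-1})$. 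The algebra lines up and your identification of the four terms is correct.

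There is, however, a genuine gap. Your displayed equation
\[
\Vec(W^*-2\Lambda^{-1}) = -[\nabla^2 L(2\Lambda^{-1})]^{-1}\Vec(\nabla L(2\Lambda^{-1})) + o(1/N)
\]
is asserted without justification. The Taylor remainder here is of order $\|W^*-2\Lambda^{-1}\|_F^2 = 4\|G\|_F^2$, and you have no a~priori control on $G$: observing that $2\Lambda^{-1}$ minimizes the \emph{limiting} loss $\widetilde L$ does not, by itself, tell you anything about the finite-$N$ minimizer $W^*$. The paper closes this with a two-step bootstrap. First (Lemma~\ref{lemma: convergence of L, l}) it shows $|L(W)-\widetilde L(W)|\leq C\|W\|_{\max}^2/N$ uniformly on a fixed ball, so that for $N$ large enough $W^*$ lies in that ball; combined with strong convexity of $\widetilde L$ there, this yields the crude bound $\|G\|_{\max}=O(N^{-1/2})$. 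Only then (Lemma~\ref{lemma: stationary point, l}) does it run the stationary-point expansion, now knowing that quadratic-in-$G$ terms are $o(\|G\|_{\max})$; the equation becomes $SG = [\text{explicit }O(1/N)] + o(\|G\|_{\max})+o(1/N)$, giving $\|G\|_{\max}=O(1/N)$, which in turn upgrades the $o(\|G\|_{\max})$ to $o(1/N)$ and yields the precise constant. Your proposal needs exactly this a~priori step and iteration; without it the $o(1/N)$ remainder is circular. A secondary point in the same vein: your claim $\nabla^2 L(2\Lambda^{-1})=S/4+o(1)$ (and hence invertibility for large $N$) also requires a quantitative $L\to\widetilde L$ argument, which you have not supplied.
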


The detailed proof of Theorem \ref{thm: binary_train} can be found in Appendix \ref{app: binary_train}. In the following, we provide a brief proof sketch to highlight the key ideas.

\textbf{Proof sketch for Theorem \ref{thm: binary_train}.}
As a first step, we prove in Lemma \ref{lemma: strictly convex, l} that the expected loss function $L(W)$ in Equation~\eqref{b, population_loss} is strictly convex with respect to (w.r.t.) $W$ and is strongly convex in any compact set of $\R^{d\times d}$. Moreover, we prove $L(W)$ has one unique global minimizer $W^*$. Since the loss function $L(W)$ we consider is highly non-linear, we cannot directly find the closed-form expression of $W^*$, as is often done in the prior literature. This poses a significant challenge to our analysis.

We address this technical challenge via the following method. First, in Lemma \ref{lemma: convergence of L, l}, by analyzing the Taylor expansion of $L(W)$, we prove that as $N\to\infty$, our loss function $L(W)$ converges to $\widetilde{L}(W)$ pointwisely (defined in Equation~\eqref{tilde L, l}), and the global minimizer $W^*$ converges to $2\Lambda^{-1}$. Thus, we denote $W^*=2(\Lambda^{-1}+G)$, and prove {$\|G\|_{\max}$ is bounded and scales as}  $\|G\|_{\max}=O(N^{-1/2})$. Next, in Lemma \ref{lemma: stationary point, l}, by further analyzing the Taylor expansion of the equation $\nabla L(W^*)=0$ at the point $2\Lambda^{-1}$, we establish a tighter bound $\|G\|_{\max}=O(N^{-1})$. In Lemma \ref{lemma: l smooth, l}, we prove that our loss function is $l$-smooth and provide an upper bound for $l$. Thus, in a compact set $R_W$, our loss function is $\alpha$-strongly convex and $l$-smooth. Finally, leveraging the standard results from the convex optimization, we prove Theorem \ref{thm: binary_train}.

According to Theorem \ref{thm: binary_train}, we have $W^t=W^*+H^t$ where $\|H^t\|_{\max}\leq\exp(-t/(2\kappa))\|W^0-W^*\|_F$. If we set $T\geq2\kappa \log(N\cdot\|W^0-W^*\|_F)$, we have $\|H^T\|_{\max}\leq 1/N$. Denoting $\widehat W = W^T$, we have $\widehat W=2(\Lambda^{-1}+G+H^T/2)=2(\Lambda^{-1}+\widehat{G})$, where $\widehat{G}=G+H^T/2, \|\widehat{G}\|_{\max} \leq \|G\|_{\max}+\|H^T\|_{\max}=O(1/N)$. Thus, we have the following corollary.

\begin{corollary}\label{coro: binary_train}
     If we optimize the expected loss $L(W)$ in Equation~\eqref{b, population_loss} via gradient descent with training prompt length $N$, initial parameters $W^0$, and learning rate $\eta=1/l$, then, under Assumption \ref{assume: training data distribution, binary}, after $T\geq2\kappa \log (N\|W^0-W^*\|_F)$ steps, the updated model $\widehat W$ satisfies
    \begin{align}\label{trained para, b}
        \widehat W=2(\Lambda^{-1}+\widehat{G}),
    \end{align} 
    where $\|\widehat{G}\|_{\max}=O(1/N)$, $\kappa=l/\alpha$, 
    and $\alpha, l$ are constants defined in \eqref{a, l,  binary}.
\end{corollary}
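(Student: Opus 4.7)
The plan is to combine the two parts of Theorem~\ref{thm: binary_train} directly; in fact the paragraph preceding the corollary already outlines the argument, so the proposal is to verify that each step goes through cleanly for the stated choice of $T$ and learning rate $\eta = 1/l$.

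First I would invoke Theorem~\ref{thm: binary_train}(1) to control the Frobenius distance after $T$ gradient steps. Taking square roots of the geometric decay bound yields
\[
\|W^T - W^*\|_F \leq \exp\!\bigl(-T/(2\kappa)\bigr) \, \|W^0 - W^*\|_F,
\]
and the stated choice $T \geq 2\kappa \log\bigl(N\|W^0 - W^*\|_F\bigr)$ is calibrated exactly so the right-hand side is at most $1/N$. Since $\|\cdot\|_{\max} \leq \|\cdot\|_F$, this gives $\|H^T\|_{\max} \leq 1/N$ for the residual $H^T := W^T - W^*$. I would also note that the hypothesis ``$W \in R_W$'' in Theorem~\ref{thm: binary_train}(1) is automatically maintained along the trajectory, because the Frobenius distance to $W^*$ is monotonically contracting, so $W^t \in R_W$ for every $t \geq 0$.

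Next I would unpack Theorem~\ref{thm: binary_train}(2), which writes $W^* = 2(\Lambda^{-1} + G)$ with an explicit max-norm bound on $G$. The key observation is that this bound has the form $\tfrac{1}{N} \cdot C + o(1/N)$, where the constant $C$ — a function of $S$, the Gaussian-mixture parameters, and the expectation over $(\mu_{\tau,0}, \mu_{\tau,1}, x_{\tau, \query})$ — depends only on $\Lambda$ and the task distribution, not on the training prompt length $N$. Hence $\|G\|_{\max} = O(1/N)$.

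Combining the two pieces, I would decompose $\widehat W = W^T = W^* + H^T = 2\Lambda^{-1} + 2G + H^T$ and define $\widehat G := G + H^T/2$, so that $\widehat W = 2(\Lambda^{-1} + \widehat G)$ as required. The triangle inequality for the max norm then delivers $\|\widehat G\|_{\max} \leq \|G\|_{\max} + \tfrac{1}{2}\|H^T\|_{\max} = O(1/N)$. I do not anticipate a substantive obstacle: the argument is a routine bundling of the linear-convergence conclusion of Theorem~\ref{thm: binary_train}(1) with the $O(1/N)$ structural bound from Theorem~\ref{thm: binary_train}(2). The only points that warrant checking are the compatibility of norms (Frobenius dominates max), that the iterate stays in the region $R_W$ where strong convexity and smoothness hold, and that the implicit constant in the $O(1/N)$ bound on $G$ is indeed $N$-independent — all of which are immediate from the theorem statement.
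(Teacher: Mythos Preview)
Your proposal is correct and follows essentially the same approach as the paper: the paper's own proof is precisely the paragraph preceding the corollary, which combines the square-rooted convergence bound from Theorem~\ref{thm: binary_train}(1) with the $O(1/N)$ bound on $\|G\|_{\max}$ from Theorem~\ref{thm: binary_train}(2) via the decomposition $\widehat W = 2(\Lambda^{-1} + G + H^T/2)$. Your additional remarks about the iterates staying in $R_W$ and the $N$-independence of the constant in the $O(1/N)$ bound are useful sanity checks that the paper leaves implicit.
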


Theorem \ref{thm: binary_train} and Corollary \ref{coro: binary_train} show that training a single-layer transformer with properly distributed data (Assumption \ref{assume: training data distribution, binary}) for binary classification via gradient descent can \emph{linearly} converge to its global minimum $W^*=2(\Lambda^{-1}+G)$. Furthermore, when the prompt length $N$ grows, this global minimum  $W^*$ will converge to $2\Lambda^{-1}$ at a rate of $O(1/N)$.

\subsection{In-context inference}
\label{section: test, binary}
Next, we analyze the performance of the trained transformer in Equation~\eqref{trained para, b} for in-context binary classification tasks. We make the following assumption.

\begin{assumption}
\label{assume: test prompt distribution, binary}
For an in-context test prompt $P_\test = (x_{1}, y_{1}, \dots, x_{M}, y_{M}, x_{\query})$, we assume 
\begin{enumerate}[noitemsep,topsep=0pt]

  \item[(1)] $\{x_i, y_i\}_{i=1}^M \iid \dis^b(\mu_0, \mu_1, \Lambda)$, $x_{\query}\in \R^d$.

    \item[(2)] $\mu_{0}^\top\Lambda^{-1}\mu_{0}=\mu_{1}^\top\Lambda^{-1}\mu_{1}.$

\end{enumerate}
\end{assumption}

With this assumption, for  $y_\query\sim \dis^b_{y| x_\query}(\mu_0, \mu_1, \Lambda)$, according to the Bayes’ theorem, we have
\begin{align*}
    &\prob{y_\query=1 | x_\query}\\
    =&\frac{f(x_\query| y_\query=1)\prob{y_\query=1}}{\sum_{z\in\{\pm 1\}}f(x_\query| y_\query=z)\prob{y_\query=z}}\\
    =&\sigma((\mu_1-\mu_0)^\top \Lambda^{-1} x_\query).
\end{align*}

If we test the trained transformer with parameters $\widehat W$ in Equation~\eqref{trained para, b} and $P_\test$, by a simple calculation, we have 
    \begin{align}\label{l, in-context test}
        \widehat{y}_{\out}=\sigma\pth{\pth{\frac{2}{M} \sum_{i=1}^M \testy_i \testx_i^\top}(\Lambda^{-1}+\widehat{G})\testx_\query}.
    \end{align} 
Intuitively, when the training prompt length $N\to\infty$, we have $\widehat{G}\to 0$, and when the test prompt length $M\to\infty$, we have $\frac{2}{M} \sum_{i=1}^M \testy_i \testx_i^\top\to (\mu_1-\mu_0)^\top$. Thus, when $N, M\to\infty$,  $\prob{\widehat y_\query=1}=\widehat y_\out \to \sigma((\mu_1-\mu_0)^\top \Lambda^{-1} x_\query)= \prob{y_\query=1 | x_\query}$, and the prediction of the trained transformer $\widehat y_\query$ perfectly {matches} with the distribution of the ground truth label $y_\query$. 

{By analyzing the Taylor expansion of $\widehat{y}_{\out}$ at point $\sigma((\mu_1-\mu_0)^\top \Lambda^{-1} x_\query)$, we formally present the aforementioned intuition in the following theorem, which establishes an upper bound of the total variation distance between $y_\query$ and $\widehat y_\query$.}

\begin{theorem}\label{thm: binary_test}
 Consider a test prompt $P_\test$ satisfying Assumption
 \ref{assume: test prompt distribution, binary}, and let $y_\query\sim \dis^b_{y| x_\query}(\mu_0, \mu_1, \Lambda)$.  Let  $\widehat y_\query$ be the prediction of the trained transformer with parameters $\widehat W$ in Equation~\eqref{trained para, b}. Then, for the inference error defined in Equation~\eqref{test error}, we have
    \begin{align*}
         &\E[\Delta(y_\query, \widehat y_\query)]\\
        &\leq\sigma'(\mu^\top\Lambda^{-1}q)\Bigg[\|\widehat{G}\|_{\max}\sum_{i,j\in[d]}|\mu_iq_j|\\
        &+\frac{1}{\sqrt{M}}\pth{\frac{1}{2}|u^\top \Lambda^{-1} q|+\frac{2\sqrt{2}}{\sqrt{\pi}}\sum_{i,j\in[d]}|\Lambda^{-1/2}_{ij}q_j|}\Bigg]\\
        &\quad +o\pth{\frac{1}{N}+\frac{1}{\sqrt{M}}},
    \end{align*}
where $\mu=\mu_1-\mu_0$,
$u=2(\mu_1+\mu_0)$,
$q=x_\query$, and the expectation is taken over $\{x_i, y_i\}_{i=1}^M\iid \dis^b(\mu_0, \mu_1, \Lambda)$.

\end{theorem}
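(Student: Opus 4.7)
The plan is to reduce the total variation distance to the scalar quantity $|\sigma(A^*)-\sigma(A)|$, linearize $\sigma$ via Taylor expansion, and then cleanly split $A-A^*$ into an $\widehat G$-part (size $O(1/N)$) and a finite-$M$ fluctuation part (size $O(1/\sqrt{M})$). Concretely, write $A^*=\mu^\top\Lambda^{-1}q$ and $A=(\frac{2}{M}\sum_{i=1}^M y_i x_i^\top)(\Lambda^{-1}+\widehat G)q$, so that conditional on the prompt, $\widehat y_\query$ is Bernoulli with parameter $\sigma(A)$ while $\prob{y_\query=1\mid x_\query}=\sigma(A^*)$ by Assumption~\ref{assume: test prompt distribution, binary}(2) and Bayes' rule. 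Since both labels take values in $\{\pm 1\}$, the total variation collapses to $\Delta(y_\query,\widehat y_\query)=|\sigma(A^*)-\sigma(A)|$.

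Next I would apply a first-order Taylor expansion with integral remainder, giving $|\sigma(A)-\sigma(A^*)|\leq \sigma'(A^*)|A-A^*|+\tfrac12\|\sigma''\|_\infty(A-A^*)^2$; since $\|\sigma''\|_\infty\leq 1$, the quadratic term will contribute $O(\E[(A-A^*)^2])$, which I will show is $O(1/N^2+1/M)=o(1/N+1/\sqrt M)$. Then I would decompose $A-A^*=T_1+T_2$ with $T_1=\bar V^\top \widehat G q$ and $T_2=(\bar V-\mu)^\top\Lambda^{-1}q$ where $\bar V=\frac{2}{M}\sum_{i=1}^M y_i x_i$. For $T_1$ I split $\bar V=\mu+(\bar V-\mu)$; the deterministic piece $|\mu^\top\widehat G q|\leq\|\widehat G\|_{\max}\sum_{i,j}|\mu_iq_j|$ yields the first bracketed term, while the cross term contributes $O(\|\widehat G\|_{\max}/\sqrt M)=O(1/(N\sqrt M))$ which goes into the $o(\cdot)$ remainder.

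For $T_2$, the key identity is the exact decomposition
\begin{equation*}
\bar V-\mu=\beta\,u+\Lambda^{1/2}\xi,\qquad \beta=\frac{N_1-M/2}{M},\qquad \xi=\frac{2}{M}\sum_{i=1}^M y_i\epsilon_i,
\end{equation*}
obtained by writing $x_i=\mu_{y_i}+\Lambda^{1/2}\epsilon_i$ with $\epsilon_i\sim\normal(0,I_d)$ independent of $y_i$, where $N_1=|\{i:y_i=1\}|$. Using the symmetry $y_i\epsilon_i\stackrel{d}{=}\epsilon_i$, one gets $\xi\sim\normal(0,\tfrac{4}{M}I_d)$ unconditionally, so $\E|\xi_i|=\tfrac{2\sqrt2}{\sqrt{\pi M}}$. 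Since $N_1\sim\Binomial(M,1/2)$, Jensen gives $\E|\beta|\leq\sqrt{\mathrm{Var}(\beta)}=1/(2\sqrt M)$. Then $T_2=\beta\,u^\top\Lambda^{-1}q+\xi^\top\Lambda^{-1/2}q$ and
\begin{equation*}
\E|T_2|\leq\frac{|u^\top\Lambda^{-1}q|}{2\sqrt M}+\sum_{i,j}|\Lambda^{-1/2}_{ij}q_j|\cdot\E|\xi_i|=\frac{1}{\sqrt M}\Big(\tfrac12|u^\top\Lambda^{-1}q|+\tfrac{2\sqrt 2}{\sqrt\pi}\sum_{i,j}|\Lambda^{-1/2}_{ij}q_j|\Big),
\end{equation*}
which is exactly the second bracketed term after multiplication by $\sigma'(A^*)$.

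The main obstacle is the bookkeeping of lower-order terms: verifying that the quadratic Taylor remainder, the cross term $(\bar V-\mu)^\top\widehat G q$, and any higher-moment contributions can all be absorbed into $o(1/N+1/\sqrt M)$. This requires uniform moment bounds on $\bar V-\mu$ (which follow since $y_ix_i$ has finite sub-Gaussian moments) and the bound $\|\widehat G\|_{\max}=O(1/N)$ from Corollary~\ref{coro: binary_train}. Once these are in place, collecting the leading terms $\sigma'(A^*)\bigl[\|\widehat G\|_{\max}\sum|\mu_iq_j|+\E|T_2|\bigr]$ gives precisely the stated inequality.
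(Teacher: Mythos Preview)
Your proposal is correct and follows essentially the same approach as the paper: the paper likewise writes $p=\bar V=\mu+hu+g$ (your $\beta u+\Lambda^{1/2}\xi$), applies the same first-order Taylor expansion of $\sigma$ with a quadratic Lagrange remainder, bounds $\sigma'(a)\E|b|$ term-by-term exactly as you do, and shows $\E[b^2]=o(1/N+1/\sqrt M)$ by the same moment counting. The only differences are notational ($a,b,h,g$ versus $A^*,A,\beta,\xi$) and that the paper keeps $g\sim\normal(0,4\Lambda/M)$ unstandardized rather than factoring out $\Lambda^{1/2}$.
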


{The proof of Theorem \ref{thm: binary_test} can be found in Appendix \ref{app: binary_test}.}
Since $\|\widehat G\|_{\max}=O(1/N)$, Theorem \ref{thm: binary_test} suggests that if we ignore the constants regarding $\mu_0,\mu_1,\Lambda, x_\query$, the expected total variation distance between $y_\query$ and $\widehat y_\query$ is at most $O(1/N+1/\sqrt{M})$.  
On the other hand, for data pair $(x,y)\sim \dis^b(\mu_0, \mu_1, \Lambda)$, {the distribution of $y$, $\prob{y=1| x}=\sigma((\mu_1-\mu_0)^\top \Lambda^{-1} x)$, can be characterized by a logistic regression model }
$\sigma(w^\top x+b)$ with parameters $w=\Lambda^{-1}(\mu_1-\mu_0)$ and $b=0$. Therefore, when $N,M\to\infty$, the prediction of the trained transformer 
is equivalent to the optimal logistic regressor for binary classification problems with distribution $\dis^b(\mu_0, \mu_1, \Lambda)$.

Note that different from Assumption \ref{assume: training data distribution, binary} which states that $\mu_{\tau,0}, \mu_{\tau,1}, x_{\tau,\query}$ are sampled according to some specific distributions during training, Assumption \ref{assume: test prompt distribution, binary} does not impose strong distributional constraints on $\mu_0,\mu_1$ and $x_\query$, which shows the strong generalization ability of the trained transformer.
Moreover, even if $M\to\infty$, the distribution variation between $y_\query$ and $\widehat y_\query$ does not disappear unless $N\to\infty$. Thus, the ICL ability of trained transformers for binary classification is limited by the finite length of training prompts. Similar behaviors have also been observed in \citet{zhang2023trained} for in-context linear regression.

\begin{remark}
\label{remark: binary, assumptions not hold}
    Theorem \ref{thm: binary_test} requires Assumption \ref{assume: test prompt distribution, binary} to hold. For example, we need the covariance matrix $\Lambda$ in training and testing to be the same. A similar consistency requirement of $\Lambda$ in training and testing had also been observed for in-context linear regression in \citet{zhang2023trained}. 
    Here, we discuss the consequences when Assumption \ref{assume: test prompt distribution, binary} does not hold. For example, suppose the labels of our data in test prompts are not balanced where $\prob{y=1}=p_1, \prob{y=-1}=p_0$. Besides, $\mu_0,\mu_1$ do not have the same $\Lambda^{-1}$ weighted norm, and the covariance matrix of test data satisfies $\Gamma\neq \Lambda$. Then, as $N,M\to\infty$, we have 
    \begin{align*}\frac{2}{M} \sum_{i=1}^M \testy_i \testx_i^\top\to 2(p_1\mu_1-p_0\mu_0)^\top,
    \end{align*}
    and
    \begin{align*}
    \prob{\widehat y_\query=1}\to \sigma(2(p_1\mu_1-p_0\mu_0)^\top\Lambda^{-1}x_\query).
    \end{align*}
    On the other hand, the distribution of the ground truth label is $\prob{y_\query=1}=\sigma((\mu_1-\mu_0)^\top\Gamma^{-1}x_\query+(\mu_0^\top\Lambda^{-1}\mu_0-\mu_1^\top\Lambda^{-1}\mu_1)/2+\log(p_1/p_0)).$
    Define $z\triangleq (\mu_1-\mu_0)^\top\Gamma^{-1}x_\query+(\mu_1^\top\Lambda^{-1}\mu_1-\mu_0^\top\Lambda^{-1}\mu_0)/2+\log(p_1/p_0)$ and $\hat{z}\triangleq 2(p_1\mu_1-p_0\mu_0)^\top\Lambda^{-1}x_\query$. Then, we can see that unless $\hat z=z$ or $\abs{\sigma(\hat z)-\sigma(z)}$ is sufficiently small, the transformer cannot correctly perform the in-context binary classification. 
\end{remark}

\begin{remark}
    Another important insight of our analysis is that the pre-trained single-layer transformer can be viewed as approximately implementing linear discriminant analysis (LDA). For example, suppose we are given $\{x_i, y_i\}_{i=1}^M$ and $x_\query$, and we need to predict the label $y_\query$ for $x_\query$. LDA assumes that $\{x_i, y_i\}_{i=1}^M$ and $\{x_\query, y_\query\}$ are i.i.d. samples, with $\prob{y_i=1}=\prob{y_i=-1}$, and the conditional probability density functions $f(x_i|y_i=1)$ and $f(x_i|y_i=-1)$ are Gaussian with means $\mu_1$, $\mu_{-1}$ and same covariance $\Sigma$. Under these assumptions, it can be derived that the optimal decision criterion for $x_\query$ is to predict $y_\query=1$ if $(\mu_1-\mu_{-1})^\top \Sigma^{-1}x_\query+\frac{1}{2}(\mu_{-1}^\top\Sigma^{-1}\mu_{-1}-\mu_{1}^\top\Sigma^{-1}\mu_{1})>0$ and $y_\query=-1$, otherwise. LDA can estimate $\hat{\mu}_1$ as the average of $x_i$ with $y_i=1$, estimate $\hat{\mu}_{-1}$ as the average of $x_i$ with $y_i=-1$, and estimate the covariance $\hat{\Sigma}$
 from the within-class variances. For the single-layer transformer, it can compute the in-context estimate 
$\hat{\mu}_1-\hat{\mu}_{-1}=\frac{1}{M}\sum_{i=1}^M y_ix_i$, however, it is hard for the single-layer transformer to estimate $\hat{\Sigma}$ in context. Thus, in our paper, we make the following assumptions (Assumptions \ref{assume: training data distribution, binary} and \ref{assume: test prompt distribution, binary}). We assume the pre-train data and test data have the same covariance matrix $\Lambda$ so that the transformer can learn an approximation of $\Lambda$ during pre-training. Moreover, we assume the two class means $\mu_0$, $\mu_1$ have the same $\Lambda$-weighted norm so that $\mu_{0}^\top\Sigma^{-1}\mu_{0}-\mu_{1}^\top\Sigma^{-1}\mu_{1}=0$. Under these assumptions, the quadratic term cancels out, and the estimated decision criterion simplifies to $(\frac{1}{M}\sum_{i=1}^M y_ix_i)^\top\Lambda^{-1}x_\query$, which is very close to Equation~\eqref{l, in-context test} in our paper. When we use $\widehat W$ to approximate $2\Lambda^{-1}$, the estimated decision criterion becomes exactly Equation~\eqref{l, in-context test}. Therefore, when $\widehat W=2\Lambda^{-1}$ and the in-context examples are balanced across classes, the transformer's decision criterion is the same as that of the LDA with exact knowledge of $\Lambda$. Our experiments also  corroborate this theoretical findings. For example, in Figure \ref{fig:compare}, since the pre-trained transformer has already learned a relatively good approximation of $\Lambda^{-1}$, while LDA must estimate $\Lambda^{-1}$ in context, the trained transformer significantly outperforms LDA when the number of in-context examples is small. As the context length increases, LDA's performance approaches that of the trained transformer. 

\end{remark}

\section{In-context multi-class classification} 
\label{section: multi-class classification}

We now extend the study of the learning dynamics of a single-layer transformer to in-context multi-class classification, generalizing the results of the previous section. 
We will present the detailed formulation and then focus on the main differences to binary classification.

We first introduce the prompt and the transformer structure that will be used for in-context multi-class classification. The prompt for in-context multi-class classification involving $c\geq 2$ classes can be expressed as ${P = (x_1, y_1, \dots, x_N, y_N, x_{\query})}$, where $x_i\in \R^{d}$, $y_i\in\{\textbf{e}_1, \textbf{e}_2, \dots,\textbf{e}_c\}$, and $\textbf{e}_i$ is the $i$-th standard unit vector of $\R^c$.  
Its embedding matrix can be formulated as
\begin{equation}
E = E(P) = \begin{pmatrix}
    x_1 & x_2 & \cdots & x_N & x_{\query} \\
    y_1 & y_2 & \cdots & y_N & 0_{c}
\end{pmatrix}. 
\end{equation}

Similar to the binary case, we set some of the parameters in our model as $0$ and $1$ to simplify the optimization problem and consider the parameters of our model $(W^{V},W^{KQ})$ in the following sparse form:
\begin{equation}
\begin{aligned}
   W^V =
    \begin{pmatrix}
        0_{d \times d} & 0_{d\times c} \\
        0_{c\times d} & I_{c}
    \end{pmatrix}, \qquad
        W^{KQ} &=
    \begin{pmatrix}
        W & 0_{d \times c} \\
        0_{c\times d} & 0_{c\times c}
    \end{pmatrix},
\end{aligned}
\end{equation}
where $W\in \R^{d \times d}$.  We set the normalization factor $\rho$ equal to the length of the prompt $N$. We read out the bottom-right $c$-dimensional column vector from the output matrix with a softmax function as the output, denoted as $\widehat{y}_\out$. With parameters $W$ and a prompt $P = (x_1, y_1, \dots, x_N, y_N, x_{\query})$, the output can be expressed as 
    \begin{align*}
        \widehat{y}_{\out}=&\softmax\pth{[F(E(P); W)]_{(d+1):(d+c), (N+1)}} \\
        =&\softmax\pth{\pth{ \frac{1}{N} \sum_{i=1}^N y_ix_i^\top} W \testx_\query}.
    \end{align*}

We denote the prediction of the model for $x_\query$ as $\widehat{y}_\query$, which is a random variable depending on $\widehat{y}_\out$. 
Randomly sample a random variable $u$ that is uniformly distributed on $[0,1]$. If $u \in \big[ \sum_{j=1}^{i-1}(\widehat{y}_{\out})_j,\sum_{j=1}^{i}(\widehat{y}_{\out})_j \big) $, where $(\widehat{y}_{\out})_j$ is the $j$-th element of $\widehat{y}_{\out}$, we let $\widehat{y}_\query=\textbf{e}_i$. Thus, $\prob{\widehat{y}_\query=\textbf{e}_i}=(\widehat{y}_{\out})_i$.

\subsection{Training procedure}
We focus on the multi-class classification of Gaussian mixtures and use the following definition.
\begin{definition}
    We say a data pair $(x,y)\sim \dis^m(\mu, \Lambda)$ if $\prob{y=\textbf{e}_i}=1/c$ and $f(x | y=\textbf{e}_i) = \normal(\mu_i,\Lambda)$ for $i\in[c]$, where $\mu=(\mu_1,\dots,\mu_c)\in \R^{d\times c}$ and $\Lambda\in\R^{d\times d}$ is a positive definite matrix.
\end{definition}

We consider the case of $B$ training tasks indexed by $\tau\in[B]$. Each training task $\tau$ is associated with a prompt $P_\tau = (x_{\tau, 1}, y_{\tau, 1}, \dots, x_{\tau, N}, y_{\tau, N}, x_{\tau, \query})$ and a corresponding label $y_{\tau, \query}$.
We make the following assumption in this section.
\begin{assumption}
\label{assume: training data distribution, multi}
For each learning task $\tau\in[B]$, we assume: 
\begin{enumerate}[noitemsep,topsep=0pt]
    \item[(1)] $\{x_{\tau,i},y_{\tau,i}\}_{i=1}^N$ and $\{x_{\tau,\query},y_{\tau,\query}\}\iid \dis^m(\mu_{\tau}=(\mu_{\tau,1}, \dots, \mu_{\tau,c}),  \Lambda)$.
    \item[(2)] $\mu_{\tau, 1}$ is sampled from $\normal(0, I_d)$, $\mu_{\tau,k}=U_{\tau, k, \Lambda}\mu_{\tau,1}, k=2,3,\dots,c$, where $U_{\tau, k, \Lambda}=\Lambda^{1/2}U_{\tau, k}\Lambda^{-1/2}$, and $U_{\tau,k}$ are  uniformly distributed over the closed set of real unitary matrices 
    such that $U_{\tau, k}U_{\tau, k}^\top=I_d$.
\end{enumerate}

\end{assumption}

We denote the distribution of $\mu_{\tau}$ as $\dis^m_\Omega(\Lambda)$. Note that $U_{\tau, k, \Lambda}=\Lambda^{1/2}U_{\tau, k}\Lambda^{-1/2}$ can be viewed as linear transformation that preserves the inner product of vectors in the $\Lambda^{-1}$ weighted norm, and we have $\mu_{\tau,i}^\top\Lambda^{-1}\mu_{\tau,i}=\mu_{\tau,j}^\top\Lambda^{-1}\mu_{\tau,j}$, for $i,j\in[c]$.  
Let $\widehat{y}_{\tau, \out}=\softmax\pth{[F(E(P_\tau); W)]_{(d+1):(d+c), (N+1)}}$ be the output of the transformer for task $\tau$. We define the empirical risk over $B$ independent tasks as
\begin{equation}\label{s, emp_loss}
    \widehat L(W) = \frac{1}{B} \sum_{\tau=1}^B\sum_{k=1}^c -(y_{\tau, \query})_k\log((\widehat y_{\tau,\out})_k).
\end{equation}
Taking the limit of infinite training tasks $B\to \infty$, the expected training loss can be defined as
\begin{align}\nonumber
    L(W) =& \lim_{B\to \infty} \widehat L(W)\\ \label{s, population_loss}
    =& - \E \qth{ \sum_{k=1}^c (y_{\tau, \query})_k\log((\widehat y_{\tau,\out})_k)},
\end{align}
where the  expectation is  taken over $\mu_{\tau}\sim\dis^m_\Omega(\Lambda)$,  $\{x_{\tau,i},y_{\tau,i}\}_{i=1}^N, \{x_{\tau,\query},y_{\tau,\query}\} \iid \dis^m(\mu_{\tau},  \Lambda)$.

Applying gradient descent over the expected training loss in Equation~\eqref{s, population_loss}, we have the following theorem.

\begin{theorem}(Informal)\label{thm: multi_train}

Under Assumption \ref{assume: training data distribution, multi}, the following statements hold. 
\begin{enumerate}[noitemsep,topsep=0pt]
\item[(1)]     Optimizing training loss $L(W)$ in Equation~\eqref{s, population_loss} with  training prompt length $N$ via gradient descent $W^{t+1}=W^t-\eta \nabla L(W^t)$, for any $t\geq 1$, we have
    \begin{align}
        \|W^t-W^*\|_F^2\leq \exp(-t/\kappa)\|W^0-W^*\|_F^2,
    \end{align}
    where $W^0$ is the initial parameter and $W^*$ is the global minimizer of $L(W)$,  $\kappa=l/\alpha$. Here, $\alpha, l$ are constants such that 
    \begin{align}
        \alpha \leq \lambda_{\min} (\nabla^2 L(W)) \leq \lambda_{\max} (\nabla^2 L(W))\leq l,
    \end{align}
where {$\alpha>0$, $l<\infty$}, $W\in R_W, R_W=\{W\in \R^{d\times d} \,|\, \|W-W^*\|_F\leq \|W^0-W^*\|_F\}$.  
\item[(2)] {Defining $G=W^*/c-\Lambda^{-1}$}, we have $W^*=c(\Lambda^{-1}+G)$ and $\|G\|_{\max}=O(c/N)$.
\item[(3)] After $T\geq2\kappa \log (N\cdot\|W^0-W^*\|_F)$ steps, denoting the updated model $\widehat W$ satisfies
    \begin{align}\label{trained para, m}
        \widehat W=c(\Lambda^{-1}+\widehat{G}),
    \end{align}
    where $\|\widehat{G}\|_{\max}=O(c/N)$.

\end{enumerate}
\end{theorem}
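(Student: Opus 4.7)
\textbf{Proof proposal for Theorem \ref{thm: multi_train}.}
The plan is to mirror the four-step strategy that worked for Theorem \ref{thm: binary_train}, generalising each ingredient from the sigmoid / binary setting to the softmax / multi-class setting. First I would establish that $L(W)$ in Equation~\eqref{s, population_loss} is strictly convex on $\mathbb{R}^{d\times d}$ and strongly convex on any compact set, with a unique global minimizer $W^*$. The convexity follows because cross-entropy composed with softmax is convex in its logits, and the map $W\mapsto (\tfrac1N\sum_i y_i x_i^\top)W x_\query$ is linear, so $L$ is an expectation of convex functions of $W$; strict convexity uses the full-rank structure provided by $x_\query$ having a non-degenerate Gaussian-mixture distribution under Assumption~\ref{assume: training data distribution, multi}. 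On compact subsets of $\mathbb{R}^{d\times d}$, I would lower-bound $\lambda_{\min}(\nabla^2 L(W))$ by exploiting that the softmax Jacobian is uniformly positive definite on the $(c-1)$-simplex interior restricted to any bounded logit range, yielding $\alpha>0$.

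Next, to pin down $W^*$ I would introduce the limit loss $\widetilde{L}(W)=\lim_{N\to\infty} L(W)$ obtained by replacing $\tfrac1N\sum_{i=1}^N y_i x_i^\top$ with its mean $\tfrac1c\mu^\top$, so that the argument of the softmax becomes $\tfrac1c \mu^\top W x_\query$. Using the symmetry imposed by Assumption~\ref{assume: training data distribution, multi}(2) (namely $\mu_{\tau,k}^\top\Lambda^{-1}\mu_{\tau,k}$ is independent of $k$, so $\E[\mu_{\tau}|\mu_{\tau,1}]$-style symmetries hold in the $\Lambda^{-1}$-weighted inner product), I would verify that $\widetilde{L}$ is minimised exactly at $c\Lambda^{-1}$: at that point the softmax logits coincide up to an additive constant with $\Lambda^{-1}(\mu_k-\text{const})^\top x_\query$, which matches the Bayes posterior for $\dis^m(\mu,\Lambda)$, so the first-order condition $\nabla\widetilde{L}(c\Lambda^{-1})=0$ is satisfied. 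Writing $W^*=c(\Lambda^{-1}+G)$, a Taylor expansion of $L$ around $c\Lambda^{-1}$ together with the strong convexity of $\widetilde{L}$ first gives a crude bound $\|G\|_{\max}=O(N^{-1/2})$ analogous to Lemma \ref{lemma: convergence of L, l}. I would then tighten this to $\|G\|_{\max}=O(c/N)$ by Taylor-expanding $\nabla L(W^*)=0$ around $c\Lambda^{-1}$ and using that the $O(N^{-1/2})$ zero-mean fluctuations in $\tfrac1N\sum_i y_i x_i^\top - \tfrac1c\mu^\top$ average to $O(1/N)$ after being hit by the softmax Jacobian in expectation (the linear-in-fluctuation piece vanishes in expectation, leaving only the second-order contribution).

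For the smoothness constant, I would bound $\|\nabla^2 L(W)\|_{\mathrm{op}}\leq l<\infty$ uniformly on $R_W$ using that the softmax Jacobian and Hessian are bounded in operator norm and that all moments of $x_\query$, $\mu_{\tau,k}$ up to fourth order are finite under Assumption~\ref{assume: training data distribution, multi}. With $\alpha$-strong convexity and $l$-smoothness established on $R_W$, the standard convex-optimisation result yields the linear contraction $\|W^{t+1}-W^*\|_F^2 \le (1-\alpha/l)\|W^{t}-W^*\|_F^2$, and since the gradient step keeps iterates in $R_W$ (because the loss decreases monotonically), induction gives the claimed exponential rate with $\kappa=l/\alpha$. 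Finally, part (3) follows by the same packaging as Corollary~\ref{coro: binary_train}: after $T\geq 2\kappa\log(N\|W^0-W^*\|_F)$ steps, $\widehat W-W^*$ has $\max$-norm at most $1/N$, so $\widehat W = c(\Lambda^{-1}+\widehat G)$ with $\|\widehat G\|_{\max}\leq \|G\|_{\max}+\|\widehat W-W^*\|_{\max}/c = O(c/N)$.

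The main obstacle I anticipate is the second step, namely obtaining the sharp $O(c/N)$ bound on $\|G\|_{\max}$. In the binary case the sigmoid derivatives produce tractable scalar coefficients, whereas in the multi-class setting the softmax Jacobian is a $c\times c$ matrix coupling all classes, and the Taylor expansion of $\nabla L(W^*)=0$ at $c\Lambda^{-1}$ produces cross-class terms involving $\mu_{\tau,k}\mu_{\tau,k'}^\top$ that must be handled using the $\Lambda^{-1}$-norm symmetry to show cancellation of the leading $O(N^{-1/2})$ term. Tracking the explicit $c$-dependence (so that the final bound scales as $c/N$ rather than, say, $c^2/N$) requires care in how the unitary-invariance symmetry from Assumption~\ref{assume: training data distribution, multi}(2) is used to average out the cross-class contributions.
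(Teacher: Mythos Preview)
Your proposal is correct and follows essentially the same four-step approach as the paper (strict/strong convexity via Lemma~\ref{lemma: strictly convex, s}, the limit loss $\widetilde{L}$ with minimizer $c\Lambda^{-1}$ and a crude bound on $G$ via Lemma~\ref{lemma: convergence of L, s}, the refined $O(c/N)$ bound via Taylor-expanding $\nabla L(W^*)=0$ in Lemma~\ref{lemma: stationary point, s}, and smoothness via Lemma~\ref{lemma: l smooth, s}). The only minor discrepancy is that the paper's crude intermediate bound is $\|G\|_{\max}=O(N^{-1/4})$ rather than your $O(N^{-1/2})$, because it uses a first-order Taylor bound $|\widetilde{L}-L|\leq O(N^{-1/2})$ instead of exploiting $\E[r_l]=0$ to go one order further; this does not affect the final argument since any $o(1)$ crude bound suffices for the bootstrapping step.
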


The formal statement and complete proof of Theorem \ref{thm: multi_train} can be found in Appendix \ref{app: multi_train}. 
{Technically, the proof of Theorem \ref{thm: multi_train} builds on that of Theorem \ref{thm: binary_train}, but the more complicated cross terms in the Taylor expansions of the softmax functions, which are due to the nature of \emph{multi-class} classification,  bring new challenges to the analysis. To address these issues, we derive new bounds on the expected errors of the cross terms in Lemma \ref{lemma: h, s}, \ref{lemma: hg, s}, which may be of independent interest to other similar problems.

Theorem \ref{thm: multi_train} shows that training a single-layer transformer with properly distributed data (Assumption \ref{assume: training data distribution, multi}) for in-context multi-class classification via gradient descent can linearly converge to its global minimum  $W^*=c(\Lambda^{-1}+G)$. When the prompt length $N$ grows, this global minimum  $W^*$ will converge to $c\Lambda^{-1}$ at a rate of $O(c/N)$. {Compared to the binary case, the new results establish the scaling behavior w.r.t. the number of classes $c$.}

\subsection{In-context inference}
\label{section: test, multi}

\begin{assumption}
\label{assume: test prompt distribution, multi}
For an in-context test prompt $P_\test = (x_{1}, y_{1}, \dots, x_{M}, y_{M}, x_{\query})$, we assume 
\begin{enumerate}[noitemsep,topsep=0pt]

  \item[(1)] $\{x_i, y_i\}_{i=1}^M\iid \dis^m(\mu, \Lambda)$, $\mu=(\mu_1,\dots, \mu_c)\in\R^{d\times c}$, $x_{\query}\in \R^d$.

    \item[(2)] $\mu_{i}^\top\Lambda^{-1}\mu_{i}=\mu_{j}^\top\Lambda^{-1}\mu_{j}$, for $i,j\in[c]$.

\end{enumerate}

\end{assumption}

With this assumption, for $y_\query\sim \dis^m_{y| x_\query}(\mu, \Lambda)$, according to the Bayes’ theorem, we have
\begin{align*}
    &\prob{y_\query=\textbf{e}_k | x_\query}\\
    =&\frac{f(x_\query| y_\query=\textbf{e}_k)\prob{y_\query=\textbf{e}_k}}{\sum_{j=1}^c f(x_\query| y_\query=\textbf{e}_j)\prob{y_\query=\textbf{e}_j}}\\
    =&\softmax(\mu^\top \Lambda^{-1} x_\query)_k.
\end{align*}
If we test the trained transformer with parameters $\widehat W$ in Equation~\eqref{trained para, m} and prompt $P_\test$, by a simple calculation, we have 
    \begin{align}\label{s, in-context test}
        \widehat{y}_{\out}=\softmax\pth{\pth{\frac{c}{M} \sum_{i=1}^M \testy_i \testx_i^\top}(\Lambda^{-1}+\widehat{G})\testx_\query}.
    \end{align}
Note that, when the training prompt length $N\to\infty$, we have $\widehat{G}\to 0$, and when the test prompt length $M\to\infty$, we have $\frac{c}{M} \sum_{i=1}^M \testy_i \testx_i^\top\to \mu^\top$. Thus, when $N, M\to\infty$, $\prob{\widehat y_\query=\textbf{e}_k}=(\widehat y_\out)_k \to \softmax(\mu^\top \Lambda^{-1} x_\query)_k=\prob{y_\query=\textbf{e}_k | x_\query}$, i.e., the prediction of the trained transformer $\widehat y_\query$ {matches} the ground truth label $y_\query$. 

{By analyzing the Taylor expansion of $\widehat{y}_{\out}$ at point $\softmax(\mu^\top \Lambda^{-1} x_\query)$, we crystallize the aforementioned intuition in the following theorem, which establishes an upper bound of the total variation distance between $y_\query$ and $\widehat y_\query$.}

\begin{figure*}
    \centering
    \begin{center}
    	    \subfigure[\small $c=10$]{\includegraphics[width=22em]{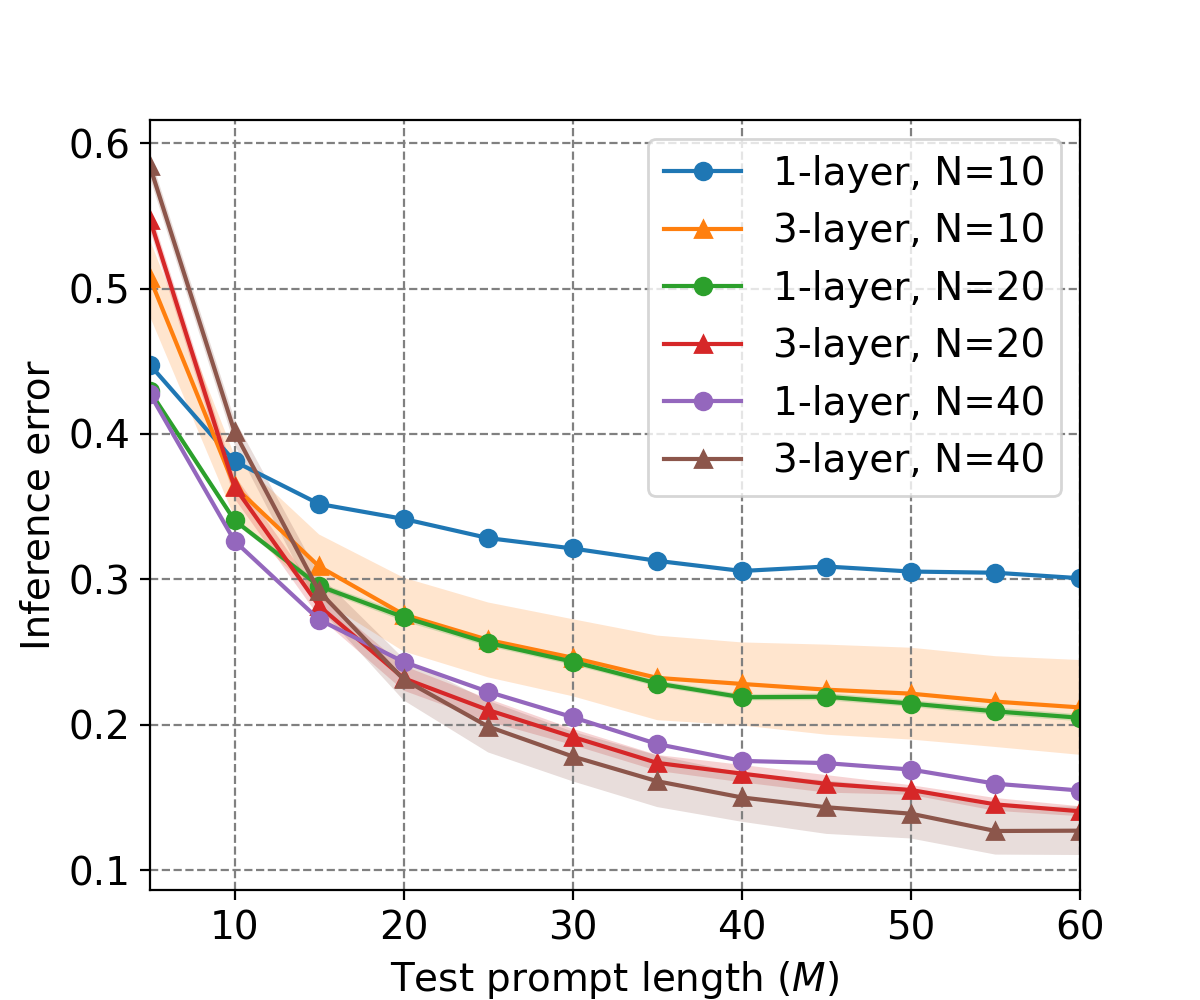}}
    	    \subfigure[\small $N=80$]{\includegraphics[width=22em]{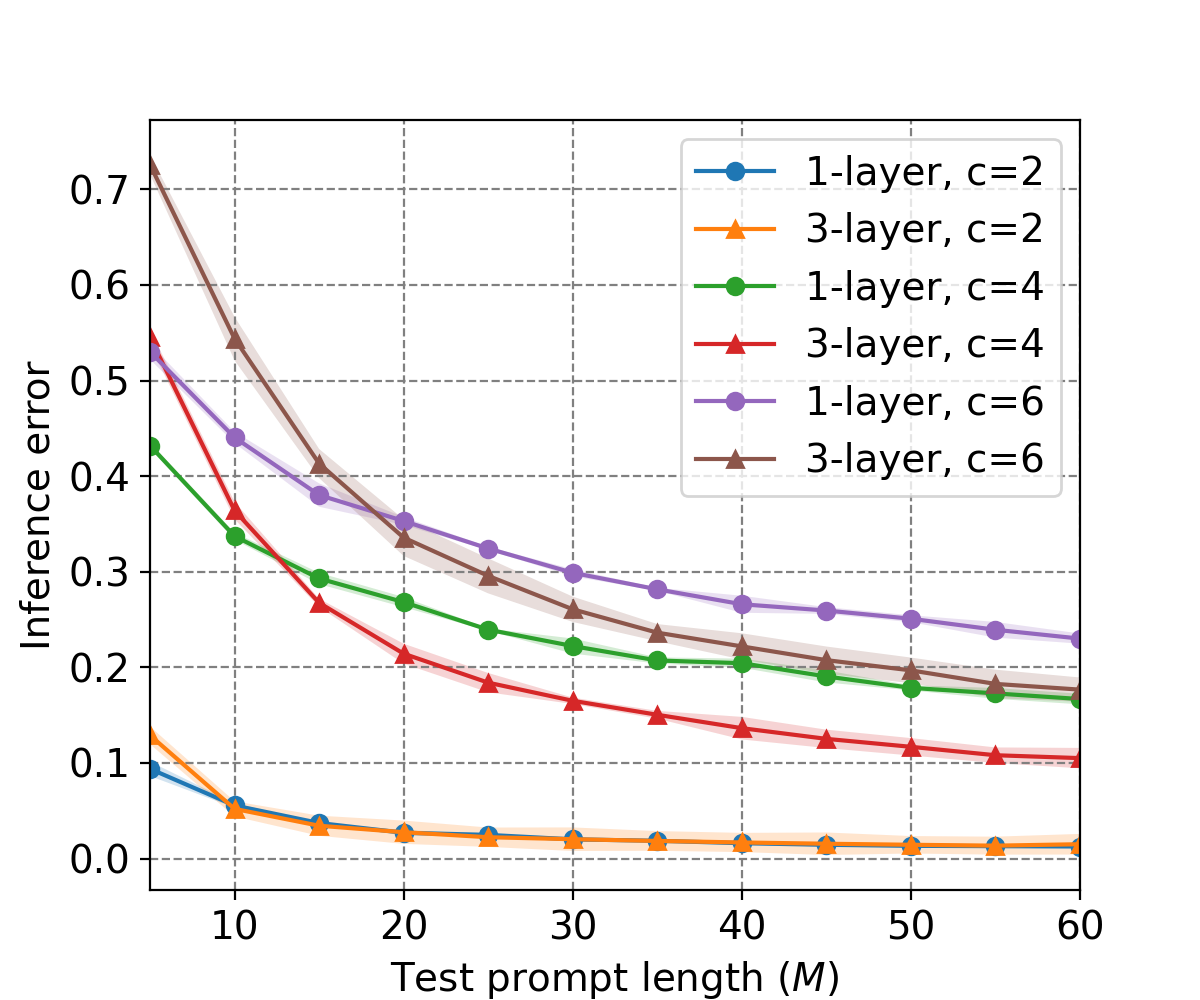}}
    \end{center}
    \vspace{-0.2in}
    \caption{\small {'1-layer': single-layer transformer defined in Section \ref{section: multi-class classification}, '3-layer': 3-layer transformers with softmax attention. $N$: training prompt length. $c$: number of Gaussian mixtures.
    }}
    \label{fig:nmc}
\end{figure*}

\begin{theorem}(Informal)\label{thm: multi_test}
  Let $P_\test$ satisfy Assumption
 \ref{assume: test prompt distribution, multi} and $y_\query\sim \dis^m_{y| x_\query}(\mu, \Lambda)$. Denote $\widehat y_\query$ as the prediction of the trained transformer with parameter $\widehat W$ in Equation~\eqref{trained para, m}. Then, for the inference error defined in Equation~\eqref{test error}, we have
    \begin{align*}
         &\E[\Delta(y_\query, \widehat y_\query)]=O(c^2N^{-1}+c^{3/2}M^{-1/2}),
    \end{align*}
where the expectation is taken over $\{x_i, y_i\}_{i=1}^M\iid \dis^m(\mu, \Lambda)$.
\end{theorem}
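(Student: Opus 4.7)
The plan is to mimic the proof of Theorem~\ref{thm: binary_test}, replacing the sigmoid with the $c$-dimensional softmax and the scalar logit with a $c$-dimensional logit vector. Define $\hat A \defeq \frac{c}{M}\sum_{i=1}^M y_i x_i^\top \in \R^{c \times d}$, so that by Equation~\eqref{s, in-context test} the output logit is $\hat z \defeq \hat A (\Lambda^{-1}+\widehat G) x_\query$ while the Bayes-optimal logit is $z^* \defeq \mu^\top \Lambda^{-1} x_\query$. Then $\Delta(y_\query,\widehat y_\query) = \sup_{k\in[c]}|\softmax_k(z^*) - \softmax_k(\hat z)|$, so it suffices to bound $\E\,|\softmax_k(z^*) - \softmax_k(\hat z)|$ uniformly in $k$ and sum over $k$.

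First I would Taylor expand $\softmax_k$ about $z^*$. The Jacobian of softmax at $z^*$ equals $\mathrm{diag}(p^*) - p^*(p^*)^\top$ with $p^* = \softmax(z^*)$, so the linear term at index $k$ is $p^*_k\bigl[(\hat z_k - z^*_k) - \sum_j p^*_j (\hat z_j - z^*_j)\bigr]$, whose absolute value is at most $2\max_j |\hat z_j - z^*_j|$; the Hessian of $\softmax_k$ has uniformly bounded entries, so the second-order remainder contributes at most $O(\|\hat z - z^*\|_\infty^2)$. Applying $\sup_k \leq \sum_k$ inflates these bounds by a factor of $c$, so I must control $c \cdot \max_k \E\,|\hat z_k - z^*_k|$ together with the summed squared remainder, which has to be shown to be lower order.

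Then I would decompose
\begin{equation*}
\hat z - z^* = (\hat A - \mu^\top)\Lambda^{-1} x_\query + \mu^\top \widehat G\, x_\query + (\hat A - \mu^\top)\widehat G\, x_\query.
\end{equation*}
Under Assumption~\ref{assume: test prompt distribution, multi}(1), $\E[\hat A] = \mu^\top$ and each entry of $\hat A - \mu^\top$ has variance of order $c/M$: the prefactor $c^2/M^2$ multiplies $M$ i.i.d.\ summands whose variance scales like $1/c$, because $\prob{y_i = \textbf{e}_k}=1/c$ forces most of the products $(y_i)_k (x_i)_\ell$ to vanish. Hence $\E\,|(\hat A-\mu^\top)_{k\ell}| = O(\sqrt{c/M})$, and the sampling term contributes $O(\sqrt{c/M})$ per coordinate $k$ (with $d$, $\Lambda$, and $x_\query$ absorbed into constants). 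Theorem~\ref{thm: multi_train} gives $\|\widehat G\|_{\max} = O(c/N)$, so $\mu^\top \widehat G\, x_\query$ contributes $O(c/N)$ per coordinate, and the cross term is the product $O(\sqrt{c/M}\cdot c/N)$, dominated by the other two. Summing over $k\in[c]$ picks up one further factor of $c$, yielding the claimed $O(c^2/N + c^{3/2}/\sqrt M)$.

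The main obstacle I expect is the variance calculation of $\hat A - \mu^\top$: the one-hot structure of $y_i$ must be exploited carefully so that the $c^2$ in $(c/M)^2$ is reduced by the factor $1/c$ from $\prob{y_i=\textbf{e}_k}$, producing a $c/M$ rather than $c^2/M$ variance; otherwise the target $c^{3/2}$ rate is missed. A secondary bookkeeping task is verifying that the second-order softmax remainder, once summed over $k$, is $o(c^2/N + c^{3/2}/\sqrt M)$: since $\E\,\|\hat z - z^*\|_\infty^2 = O(c/M + c^2/N^2)$, multiplying by $c$ yields $O(c^2/M + c^3/N^2)$, each lower order than the corresponding term in the claim whenever $c = o(M)$ and $c = o(N)$, i.e., in the regime of interest.
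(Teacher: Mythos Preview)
Your proposal is correct and follows essentially the same approach as the paper: Taylor-expand the softmax around the Bayes-optimal logit $z^*=\mu^\top\Lambda^{-1}x_\query$, split the logit error $\hat z-z^*$ into the training-induced bias $\mu^\top\widehat G\,x_\query$ (controlled by $\|\widehat G\|_{\max}=O(c/N)$ from Theorem~\ref{thm: multi_train}) and the sampling fluctuation $(\hat A-\mu^\top)\Lambda^{-1}x_\query$ (per-entry standard deviation $O(\sqrt{c/M})$), then pick up one more factor of $c$ from the $c$ components. The only cosmetic differences are that the paper retains the explicit partials $\partial\zeta_k/\partial a_l$ in the final bound and carries out the fluctuation analysis via the multinomial decomposition $p_k=\mu_k+ch_k\mu_k+g_k$ of Lemmas~\ref{lemma: h, s}--\ref{lemma: hg, s}, rather than your direct variance computation on $\hat A-\mu^\top$ and the cruder Lipschitz-type bound on softmax.
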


The formal statement and proof of Theorem \ref{thm: multi_test} can be found in Appendix \ref{app: multi_test}. 
We can see that the convergence rate of the inference error in multi-class classification w.r.t. $N$  and 
$M$ is similar to that in the binary classification, except for the constant coefficient $c$. This suggests that classification tasks with more classes may have higher errors than those with fewer classes. On the other hand, for data pair $(x,y)\sim \dis^m(\mu, \Lambda)$,
{the distribution of $y$, $\prob{y=\textbf{e}_k| x}=\softmax(\mu^\top \Lambda^{-1} x)_k$, can be characterized by a softmax regression model} $\softmax(W x+b)$ with parameters $W=\mu^\top \Lambda^{-1}$ and $b=0$. When $N,M\to\infty$, the prediction of the trained transformer
is equivalent to the optimal softmax regressor for multi-class classification problems with distribution $\dis^m(\mu, \Lambda)$. Note that different from Assumption~\ref{assume: training data distribution, multi} which states that $\mu_{\tau}, x_{\tau,\query}$ are sampled according to some specific distributions during training, Assumption~\ref{assume: test prompt distribution, multi} does not impose 
strong distributional constraints on $\mu$ or $x_\query$, which shows the strong generalization ability of the trained transformer. We also discuss the consequences when Assumption \ref{assume: test prompt distribution, multi} does not hold in Remark \ref{remark: multi, assumptions not hold}, which highlights the necessity of Assumption \ref{assume: test prompt distribution, multi}.
Moreover, even if $M\to\infty$, the distribution variation between $y_\query$ and $\widehat y_\query$ does not disappear unless $N\to\infty$. Thus, the ICL ability of the trained transformers for multi-class classification is limited by the finite length of training prompts. Similar behavior has also been observed in \citet{zhang2023trained} for in-context linear regression and in Section \ref{section: test, binary} for in-context binary classification.

\begin{figure*}
    \centering
    \vspace{-0.1in}
    \begin{center}
    	    \subfigure[\small Norm]{\includegraphics[width=22em]{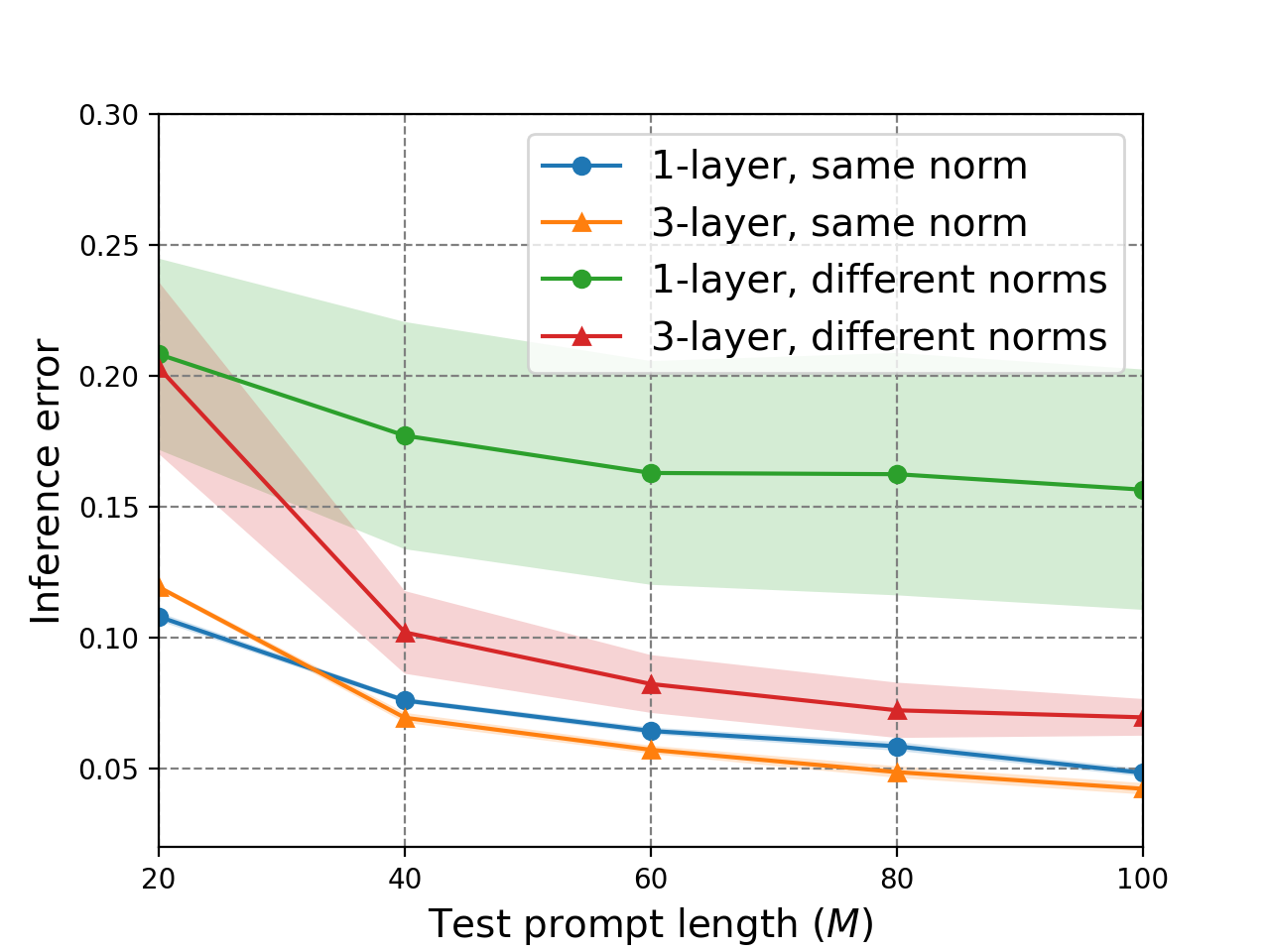}}
    	    \subfigure[\small Covariance]{\includegraphics[width=22em]{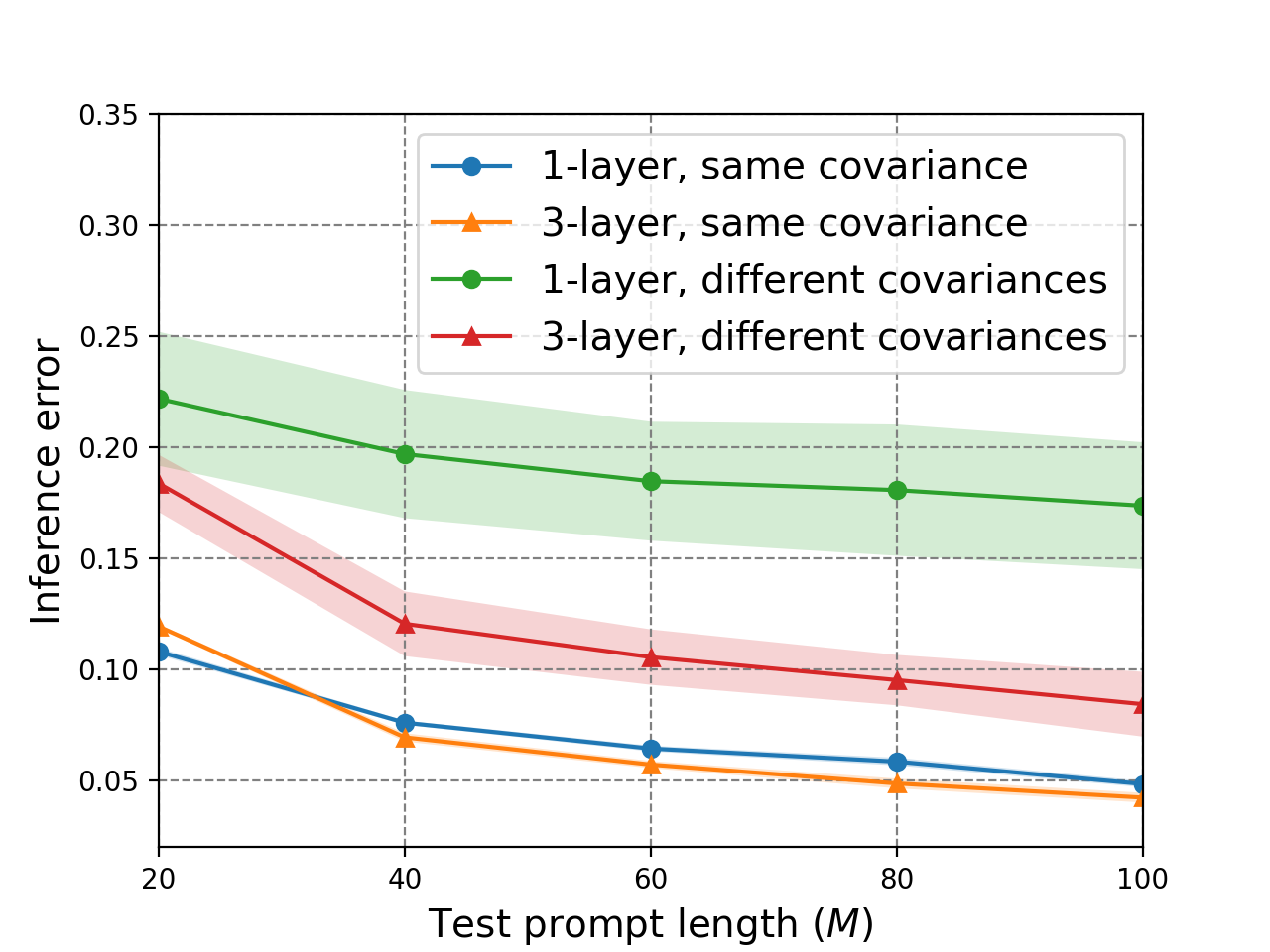}}
    \end{center}
    \vspace{-0.2in}
    \caption{\small We generate $\Lambda_i=\text{diag}(\lambda_{i1}, \dots, \lambda_{id})$, $i\in\{0,1,2,3\}$, where $\lambda_{ij}=|\hat \lambda_{ij}|$ and $\hat \lambda_{ij}\iid \normal(3,1)$. All models are trained with prompt length $N=100$, tested with prompts satisfying Assumption \ref{assume: test prompt distribution, multi} with {$\Lambda_0$}.  $c=3$. (a) `same norm': pre-training data are sampled according to Assumption \ref{assume: training data distribution, multi} with {$\Lambda_0$}. `different norms': For each $\tau$, with probability $\prob{k=j}=1/10, \mu_{\tau, i}\sim\normal(k, I_d), j=0,1,\ldots,9$. (b) `same covariance': pre-training data are sampled according to Assumption \ref{assume: training data distribution, multi} for the fixed {$\Lambda_0$}. `different covariances': Sample additional $\Lambda_1, \Lambda_2, \Lambda_3$. Then, generate pre-training data according to Assumption \ref{assume: training data distribution, multi} with $\Lambda_0, \Lambda_1, \Lambda_2, \Lambda_3$. }
    \vspace{-0.1in}
    \label{fig: cov and norm}
\end{figure*}

\section{Experiments} 

\label{section: exp}
In this section, we report the experiment results on multi-layer, nonlinear transformers to investigate their similarities and differences to the single-layer, linear transformer we theoretically analyzed in the pervious sections. Detailed experimental settings and additional results can be found in Appendix \ref{app: exp}.

We train single-layer and multi-layer transformers for in-context classification of Gaussian mixtures with different numbers of Gaussian mixtures $c$, different lengths of training prompts $N$, and test them with different test prompt lengths $M$. The results are reported in Figure \ref{fig:nmc}. We can see that for both single-layer and multi-layer transformers,
the inference errors decrease as $N$ and $M$ increase, and they increase as $c$ increases, which not only verifies our theoretical claims but also shows that, the simplified model we have studied indeed exhibits behavioral similarities to the more complex multi-layer, nonlinear transformers, and some of our observations for this simplified model also hold for more complex transformers.

\subsection{Varying covariances and norms}
Note that in Assumption \ref{assume: training data distribution, binary}, \ref{assume: training data distribution, multi}, \ref{assume: test prompt distribution, binary}, \ref{assume: test prompt distribution, multi}, we assume that the covariance $\Lambda$ during pre-training and during inference are the same, and the means of all Gaussian components $\{\mu_{\tau, i}, i\in[c] \}$ have the same $\Lambda^{-1}$ weighted norm. In Remark \ref{remark: binary, assumptions not hold}, \ref{remark: multi, assumptions not hold}, we also discuss the situation when Assumptions \ref{assume: test prompt distribution, binary}, \ref{assume: test prompt distribution, multi} do not hold and show the necessities of them. In this subsection, we consider training transformers with data of varying covariances $\Lambda$ and with Gaussian component means of unequal $\Lambda^{-1}$ weighted norms, and examine how these factors affect the ICL abilities of transformers. Results are shown in Figure \ref{fig: cov and norm}.

From Figure \ref{fig: cov and norm} (a), we can see that both models perform better when their $\mu_{\tau,i}$ have the same $\Lambda^{-1}$ weighted norm (`same norm'), however, in the `different norms' setting, the performance of `1-layer' deteriorates more significantly, while transformers with a more complex structure ('3-layer') show better robustness under this distribution shift. Similar situations also happen in Figure \ref{fig: cov and norm} (b), where `3-layer' also shows better tolerance to the covariance shifts than `1-layer'.

Experimental results in Figure \ref{fig: cov and norm} show the necessities of Assumptions \ref{assume: training data distribution, binary}, \ref{assume: training data distribution, multi}, \ref{assume: test prompt distribution, binary}, \ref{assume: test prompt distribution, multi} for the single-layer transformers considered in this paper, and also demonstrate the better robustness of multi-layer, nonlinear transformers. Developing a better understanding of the robustness of more complex transformers is an intriguing direction for future research.

\subsection{Comparison of transformers with other ML algorithms}
\label{exp: compare}

Additionally, we conduct experiments comparing the ICL performances of the transformers with other machine learning algorithms for the classification of three Gaussian mixtures. 
From Figure \ref{fig:compare}, we can see that all three transformer models significantly outperform the classical methods (softmax regression, linear discriminant analysis), demonstrating the strong ICL capacities of transformers.

\begin{figure}[H]
    \centering
    \vspace{-0.15in}
    \begin{center}{\includegraphics[width=24em]{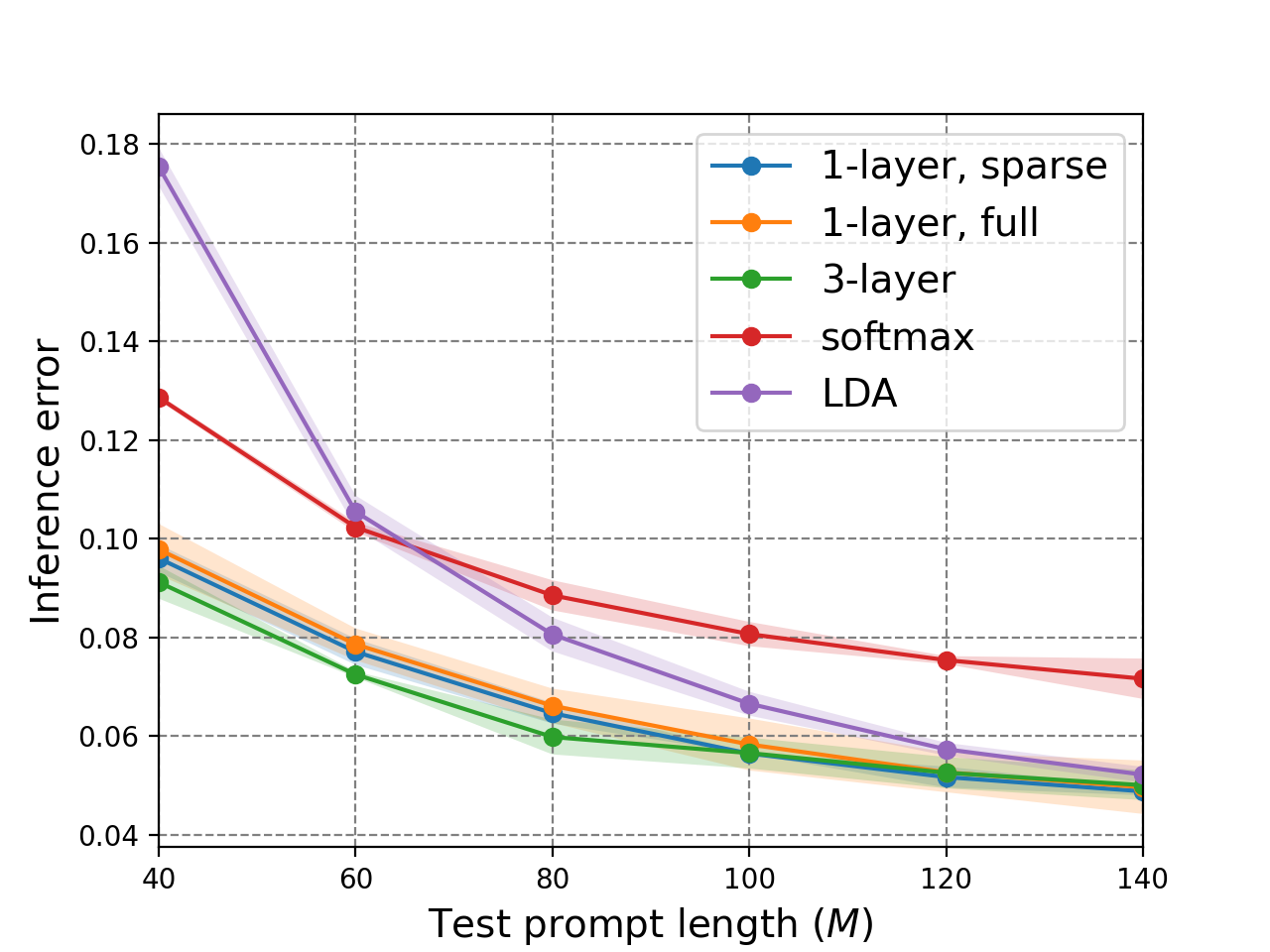}}
    \end{center}
    \vspace{-0.2in}
    \caption{\small {`1-layer, sparse': single-layer transformer defined in Section \ref{section: multi-class classification}, `1-layer, full': single-layer transformer with full parameters \eqref{full para, multi}, `3-layer': a 3-layer transformer with softmax attention,
    `softmax': softmax regression, 
    `LDA': linear discriminant analysis}. All three transformers are trained with prompt length $N=100$.}
    \vspace{-0.2in}
    \label{fig:compare}
\end{figure}

\section{Conclusion}
\label{section: conclusion}
We studied the learning dynamics of transformers for in-context classification of Gaussian mixtures, and showed that with properly distributed data, a single-layer transformer trained via gradient descent converges to its global minimum. Moreover, we established the upper bounds of the inference errors of the trained transformers and discussed how the training and test prompt lengths influence the performance of the model. Experimental results also corroborated the theoretical claims. 
{There are some directions worth further exploring. One potential avenue is to investigate whether the assumptions regarding the training and test prompts can be relaxed. Additionally, we have only examined single-layer transformers with linear attention and sparse parameters. The learning dynamics of multi-layer transformers with nonlinear attention (e.g., softmax) for in-context classification problems remain an interesting area for future investigation.}

\section*{Acknowledgements}
The work of Wei Shen and Cong Shen was supported in part by the US National Science Foundation (NSF) under awards ECCS-2033671, ECCS-2143559, CPS-2313110, and CNS-2002902. The work of Ruida Zhou was supported in part by NSF grants 2139304, 2146838 and the Army Research Laboratory grant under Cooperative Agreement W911NF-17-2-0196. {The work of Jing Yang was supported in part by NSF award ECCS-2531023.}

\section*{Impact Statement}
This paper presents work whose goal is to advance the field of 
Machine Learning. There are many potential societal consequences 
of our work, none of which we feel must be specifically highlighted here.

\bibliography{example_paper}

\begin{thebibliography}{48}
\providecommand{\natexlab}[1]{#1}
\providecommand{\url}[1]{\texttt{#1}}
\expandafter\ifx\csname urlstyle\endcsname\relax
  \providecommand{\doi}[1]{doi: #1}\else
  \providecommand{\doi}{doi: \begingroup \urlstyle{rm}\Url}\fi

\bibitem[Ahn et~al.(2024)Ahn, Cheng, Daneshmand, and Sra]{ahn2024transformers}
Ahn, K., Cheng, X., Daneshmand, H., and Sra, S.
\newblock Transformers learn to implement preconditioned gradient descent for in-context learning.
\newblock \emph{Advances in Neural Information Processing Systems}, 36, 2024.

\bibitem[Aky{\"u}rek et~al.(2022)Aky{\"u}rek, Schuurmans, Andreas, Ma, and Zhou]{akyurek2022learning}
Aky{\"u}rek, E., Schuurmans, D., Andreas, J., Ma, T., and Zhou, D.
\newblock What learning algorithm is in-context learning? investigations with linear models.
\newblock \emph{arXiv preprint arXiv:2211.15661}, 2022.

\bibitem[Bai et~al.(2024)Bai, Chen, Wang, Xiong, and Mei]{bai2024transformers}
Bai, Y., Chen, F., Wang, H., Xiong, C., and Mei, S.
\newblock Transformers as statisticians: Provable in-context learning with in-context algorithm selection.
\newblock \emph{Advances in neural information processing systems}, 36, 2024.

\bibitem[Brown et~al.(2020)Brown, Mann, Ryder, Subbiah, Kaplan, Dhariwal, Neelakantan, Shyam, Sastry, Askell, et~al.]{brown2020language}
Brown, T., Mann, B., Ryder, N., Subbiah, M., Kaplan, J.~D., Dhariwal, P., Neelakantan, A., Shyam, P., Sastry, G., Askell, A., et~al.
\newblock Language models are few-shot learners.
\newblock \emph{Advances in neural information processing systems}, 33:\penalty0 1877--1901, 2020.

\bibitem[Bu et~al.(2024)Bu, Huang, Han, Nitanda, Suzuki, Zhang, and Wong]{bu2024provably}
Bu, D., Huang, W., Han, A., Nitanda, A., Suzuki, T., Zhang, Q., and Wong, H.-S.
\newblock Provably transformers harness multi-concept word semantics for efficient in-context learning.
\newblock \emph{arXiv preprint arXiv:2411.02199}, 2024.

\bibitem[Bubeck(2015)]{bubeck2015convex}
Bubeck, S.
\newblock Convex optimization: Algorithms and complexity, 2015.

\bibitem[Chen et~al.(2024)Chen, Sheen, Wang, and Yang]{chen2024training}
Chen, S., Sheen, H., Wang, T., and Yang, Z.
\newblock Training dynamics of multi-head softmax attention for in-context learning: Emergence, convergence, and optimality.
\newblock \emph{arXiv preprint arXiv:2402.19442}, 2024.

\bibitem[Cheng et~al.(2023)Cheng, Chen, and Sra]{cheng2023transformers}
Cheng, X., Chen, Y., and Sra, S.
\newblock Transformers implement functional gradient descent to learn non-linear functions in context.
\newblock \emph{arXiv preprint arXiv:2312.06528}, 2023.

\bibitem[Dasgupta et~al.(2022)Dasgupta, Lampinen, Chan, Creswell, Kumaran, McClelland, and Hill]{dasgupta2022language}
Dasgupta, I., Lampinen, A.~K., Chan, S.~C., Creswell, A., Kumaran, D., McClelland, J.~L., and Hill, F.
\newblock Language models show human-like content effects on reasoning.
\newblock \emph{arXiv preprint arXiv:2207.07051}, 2022.

\bibitem[Frei \& Vardi(2024)Frei and Vardi]{frei2024trained}
Frei, S. and Vardi, G.
\newblock Trained transformer classifiers generalize and exhibit benign overfitting in-context.
\newblock \emph{arXiv preprint arXiv:2410.01774}, 2024.

\bibitem[Fu et~al.(2023)Fu, Chen, Jia, and Sharan]{fu2023transformers}
Fu, D., Chen, T.-Q., Jia, R., and Sharan, V.
\newblock Transformers learn higher-order optimization methods for in-context learning: A study with linear models.
\newblock \emph{arXiv preprint arXiv:2310.17086}, 2023.

\bibitem[Gao et~al.(2024)Gao, Cao, Li, He, Wang, Liu, Klusowski, and Fan]{gao2024global}
Gao, C., Cao, Y., Li, Z., He, Y., Wang, M., Liu, H., Klusowski, J., and Fan, J.
\newblock Global convergence in training large-scale transformers.
\newblock \emph{Advances in Neural Information Processing Systems}, 37:\penalty0 29213--29284, 2024.

\bibitem[Garg et~al.(2022)Garg, Tsipras, Liang, and Valiant]{garg2022can}
Garg, S., Tsipras, D., Liang, P.~S., and Valiant, G.
\newblock What can transformers learn in-context? a case study of simple function classes.
\newblock \emph{Advances in Neural Information Processing Systems}, 35:\penalty0 30583--30598, 2022.

\bibitem[Giannou et~al.(2023)Giannou, Rajput, Sohn, Lee, Lee, and Papailiopoulos]{giannou2023looped}
Giannou, A., Rajput, S., Sohn, J.-y., Lee, K., Lee, J.~D., and Papailiopoulos, D.
\newblock Looped transformers as programmable computers.
\newblock In \emph{International Conference on Machine Learning}, pp.\  11398--11442. PMLR, 2023.

\bibitem[Giannou et~al.(2024)Giannou, Yang, Wang, Papailiopoulos, and Lee]{giannou2024well}
Giannou, A., Yang, L., Wang, T., Papailiopoulos, D., and Lee, J.~D.
\newblock How well can transformers emulate in-context newton's method?
\newblock \emph{arXiv preprint arXiv:2403.03183}, 2024.

\bibitem[Guo et~al.(2023)Guo, Hu, Mei, Wang, Xiong, Savarese, and Bai]{guo2023transformers}
Guo, T., Hu, W., Mei, S., Wang, H., Xiong, C., Savarese, S., and Bai, Y.
\newblock How do transformers learn in-context beyond simple functions? a case study on learning with representations.
\newblock \emph{arXiv preprint arXiv:2310.10616}, 2023.

\bibitem[Hahn \& Goyal(2023)Hahn and Goyal]{hahn2023theory}
Hahn, M. and Goyal, N.
\newblock A theory of emergent in-context learning as implicit structure induction.
\newblock \emph{arXiv preprint arXiv:2303.07971}, 2023.

\bibitem[Huang et~al.(2023)Huang, Cheng, and Liang]{huang2023context}
Huang, Y., Cheng, Y., and Liang, Y.
\newblock In-context convergence of transformers.
\newblock \emph{arXiv preprint arXiv:2310.05249}, 2023.

\bibitem[Jelassi et~al.(2022)Jelassi, Sander, and Li]{jelassi2022vision}
Jelassi, S., Sander, M., and Li, Y.
\newblock Vision transformers provably learn spatial structure.
\newblock \emph{Advances in Neural Information Processing Systems}, 35:\penalty0 37822--37836, 2022.

\bibitem[Karimi et~al.(2016)Karimi, Nutini, and Schmidt]{karimi2016linear}
Karimi, H., Nutini, J., and Schmidt, M.
\newblock Linear convergence of gradient and proximal-gradient methods under the {Polyak-{\L}ojasiewicz} condition.
\newblock In \emph{Machine Learning and Knowledge Discovery in Databases: European Conference, ECML PKDD 2016, Riva del Garda, Italy, September 19-23, 2016, Proceedings, Part I 16}, pp.\  795--811. Springer, 2016.

\bibitem[Kim \& Suzuki(2024)Kim and Suzuki]{kim2024transformers}
Kim, J. and Suzuki, T.
\newblock Transformers learn nonlinear features in context: Nonconvex mean-field dynamics on the attention landscape.
\newblock \emph{arXiv preprint arXiv:2402.01258}, 2024.

\bibitem[Li et~al.(2023{\natexlab{a}})Li, Wang, Liu, and Chen]{li2023theoretical}
Li, H., Wang, M., Liu, S., and Chen, P.-Y.
\newblock A theoretical understanding of shallow vision transformers: Learning, generalization, and sample complexity.
\newblock \emph{arXiv preprint arXiv:2302.06015}, 2023{\natexlab{a}}.

\bibitem[Li et~al.(2024{\natexlab{a}})Li, Wang, Lu, Cui, and Chen]{li2024training}
Li, H., Wang, M., Lu, S., Cui, X., and Chen, P.-Y.
\newblock Training nonlinear transformers for efficient in-context learning: A theoretical learning and generalization analysis.
\newblock \emph{arXiv preprint arXiv:2402.15607}, 2024{\natexlab{a}}.

\bibitem[Li et~al.(2023{\natexlab{b}})Li, Ildiz, Papailiopoulos, and Oymak]{li2023transformers}
Li, Y., Ildiz, M.~E., Papailiopoulos, D., and Oymak, S.
\newblock Transformers as algorithms: Generalization and stability in in-context learning.
\newblock In \emph{International Conference on Machine Learning}, pp.\  19565--19594. PMLR, 2023{\natexlab{b}}.

\bibitem[Li et~al.(2023{\natexlab{c}})Li, Li, and Risteski]{li2023topic}
Li, Y., Li, Y., and Risteski, A.
\newblock How do transformers learn topic structure: Towards a mechanistic understanding.
\newblock In \emph{International Conference on Machine Learning}, pp.\  19689--19729. PMLR, 2023{\natexlab{c}}.

\bibitem[Li et~al.(2024{\natexlab{b}})Li, Cao, Gao, He, Liu, Klusowski, Fan, and Wang]{li2024one}
Li, Z., Cao, Y., Gao, C., He, Y., Liu, H., Klusowski, J.~M., Fan, J., and Wang, M.
\newblock One-layer transformer provably learns one-nearest neighbor in context.
\newblock \emph{arXiv preprint arXiv:2411.10830}, 2024{\natexlab{b}}.

\bibitem[Lin et~al.(2023)Lin, Bai, and Mei]{lin2023transformers}
Lin, L., Bai, Y., and Mei, S.
\newblock Transformers as decision makers: Provable in-context reinforcement learning via supervised pretraining.
\newblock \emph{arXiv preprint arXiv:2310.08566}, 2023.

\bibitem[Lin \& Lee(2024)Lin and Lee]{lin2024dual}
Lin, Z. and Lee, K.
\newblock Dual operating modes of in-context learning.
\newblock \emph{arXiv preprint arXiv:2402.18819}, 2024.

\bibitem[Mahankali et~al.(2023)Mahankali, Hashimoto, and Ma]{mahankali2023one}
Mahankali, A., Hashimoto, T.~B., and Ma, T.
\newblock One step of gradient descent is provably the optimal in-context learner with one layer of linear self-attention.
\newblock \emph{arXiv preprint arXiv:2307.03576}, 2023.

\bibitem[Nichani et~al.(2024)Nichani, Damian, and Lee]{nichani2024transformers}
Nichani, E., Damian, A., and Lee, J.~D.
\newblock How transformers learn causal structure with gradient descent.
\newblock \emph{arXiv preprint arXiv:2402.14735}, 2024.

\bibitem[Nye et~al.(2021)Nye, Andreassen, Gur-Ari, Michalewski, Austin, Bieber, Dohan, Lewkowycz, Bosma, Luan, et~al.]{nye2021show}
Nye, M., Andreassen, A.~J., Gur-Ari, G., Michalewski, H., Austin, J., Bieber, D., Dohan, D., Lewkowycz, A., Bosma, M., Luan, D., et~al.
\newblock Show your work: Scratchpads for intermediate computation with language models.
\newblock \emph{arXiv preprint arXiv:2112.00114}, 2021.

\bibitem[OpenAI(2023)]{openai2023gpt4}
OpenAI.
\newblock {GPT-4} technical report, 2023.

\bibitem[Pathak et~al.(2023)Pathak, Sen, Kong, and Das]{pathak2023transformers}
Pathak, R., Sen, R., Kong, W., and Das, A.
\newblock Transformers can optimally learn regression mixture models.
\newblock \emph{arXiv preprint arXiv:2311.08362}, 2023.

\bibitem[Tarzanagh et~al.(2023)Tarzanagh, Li, Thrampoulidis, and Oymak]{tarzanagh2023transformers}
Tarzanagh, D.~A., Li, Y., Thrampoulidis, C., and Oymak, S.
\newblock Transformers as support vector machines.
\newblock \emph{arXiv preprint arXiv:2308.16898}, 2023.

\bibitem[Touvron et~al.(2023)Touvron, Martin, Stone, Albert, Almahairi, Babaei, Bashlykov, Batra, Bhargava, Bhosale, et~al.]{touvron2023llama}
Touvron, H., Martin, L., Stone, K., Albert, P., Almahairi, A., Babaei, Y., Bashlykov, N., Batra, S., Bhargava, P., Bhosale, S., et~al.
\newblock Llama 2: Open foundation and fine-tuned chat models.
\newblock \emph{arXiv preprint arXiv:2307.09288}, 2023.

\bibitem[Vaswani et~al.(2017)Vaswani, Shazeer, Parmar, Uszkoreit, Jones, Gomez, Kaiser, and Polosukhin]{vaswani2017attention}
Vaswani, A., Shazeer, N., Parmar, N., Uszkoreit, J., Jones, L., Gomez, A.~N., Kaiser, {\L}., and Polosukhin, I.
\newblock Attention is all you need.
\newblock \emph{Advances in neural information processing systems}, 30, 2017.

\bibitem[Von~Oswald et~al.(2023)Von~Oswald, Niklasson, Randazzo, Sacramento, Mordvintsev, Zhmoginov, and Vladymyrov]{von2023transformers}
Von~Oswald, J., Niklasson, E., Randazzo, E., Sacramento, J., Mordvintsev, A., Zhmoginov, A., and Vladymyrov, M.
\newblock Transformers learn in-context by gradient descent.
\newblock In \emph{International Conference on Machine Learning}, pp.\  35151--35174. PMLR, 2023.

\bibitem[Wang et~al.(2023)Wang, Zhu, Saxon, Steyvers, and Wang]{wang2023large}
Wang, X., Zhu, W., Saxon, M., Steyvers, M., and Wang, W.~Y.
\newblock Large language models are implicitly topic models: Explaining and finding good demonstrations for in-context learning.
\newblock In \emph{Workshop on Efficient Systems for Foundation Models@ ICML2023}, 2023.

\bibitem[Wang et~al.(2024)Wang, Wei, Hsu, and Lee]{wang2024transformers}
Wang, Z., Wei, S., Hsu, D., and Lee, J.~D.
\newblock Transformers provably learn sparse token selection while fully-connected nets cannot.
\newblock In \emph{International Conference on Machine Learning}, pp.\  51854--51912. PMLR, 2024.

\bibitem[Wei et~al.(2022)Wei, Tay, Bommasani, Raffel, Zoph, Borgeaud, Yogatama, Bosma, Zhou, Metzler, et~al.]{wei2022emergent}
Wei, J., Tay, Y., Bommasani, R., Raffel, C., Zoph, B., Borgeaud, S., Yogatama, D., Bosma, M., Zhou, D., Metzler, D., et~al.
\newblock Emergent abilities of large language models.
\newblock \emph{arXiv preprint arXiv:2206.07682}, 2022.

\bibitem[Wu et~al.(2023)Wu, Zou, Chen, Braverman, Gu, and Bartlett]{wu2023many}
Wu, J., Zou, D., Chen, Z., Braverman, V., Gu, Q., and Bartlett, P.~L.
\newblock How many pretraining tasks are needed for in-context learning of linear regression?
\newblock \emph{arXiv preprint arXiv:2310.08391}, 2023.

\bibitem[Xie et~al.(2021)Xie, Raghunathan, Liang, and Ma]{xie2021explanation}
Xie, S.~M., Raghunathan, A., Liang, P., and Ma, T.
\newblock An explanation of in-context learning as implicit {Bayesian} inference.
\newblock \emph{arXiv preprint arXiv:2111.02080}, 2021.

\bibitem[Yang et~al.(2024)Yang, Huang, Liang, and Chi]{yang2024context}
Yang, T., Huang, Y., Liang, Y., and Chi, Y.
\newblock In-context learning with representations: Contextual generalization of trained transformers.
\newblock \emph{arXiv preprint arXiv:2408.10147}, 2024.

\bibitem[Zhang et~al.(2025)Zhang, Meng, and Cao]{zhang2025transformer}
Zhang, C., Meng, X., and Cao, Y.
\newblock Transformer learns optimal variable selection in group-sparse classification.
\newblock \emph{arXiv preprint arXiv:2504.08638}, 2025.

\bibitem[Zhang et~al.(2023{\natexlab{a}})Zhang, Frei, and Bartlett]{zhang2023trained}
Zhang, R., Frei, S., and Bartlett, P.~L.
\newblock Trained transformers learn linear models in-context.
\newblock \emph{arXiv preprint arXiv:2306.09927}, 2023{\natexlab{a}}.

\bibitem[Zhang et~al.(2024)Zhang, Wu, and Bartlett]{zhang2024context}
Zhang, R., Wu, J., and Bartlett, P.~L.
\newblock In-context learning of a linear transformer block: benefits of the {MLP} component and one-step {GD} initialization.
\newblock \emph{arXiv preprint arXiv:2402.14951}, 2024.

\bibitem[Zhang et~al.(2022)Zhang, Roller, Goyal, Artetxe, Chen, Chen, Dewan, Diab, Li, Lin, et~al.]{zhang2023opt}
Zhang, S., Roller, S., Goyal, N., Artetxe, M., Chen, M., Chen, S., Dewan, C., Diab, M., Li, X., Lin, X.~V., et~al.
\newblock {OPT}: Open pre-trained transformer language models.
\newblock \emph{arXiv preprint arXiv:2205.01068}, 2022.

\bibitem[Zhang et~al.(2023{\natexlab{b}})Zhang, Zhang, Yang, and Wang]{zhang2023and}
Zhang, Y., Zhang, F., Yang, Z., and Wang, Z.
\newblock What and how does in-context learning learn? {Bayesian} model averaging, parameterization, and generalization.
\newblock \emph{arXiv preprint arXiv:2305.19420}, 2023{\natexlab{b}}.

\end{thebibliography}
\bibliographystyle{icml2025}

\newpage
\appendix
\onecolumn

\section*{Appendix}
The Appendix is organized as follows. In Section \ref{app:related}, we provide a literature review of the related works that studied the ICL abilities of transformers. In Section \ref{app: notations}, we introduce the additional notations for the proofs in the Appendix. In Section \ref{app: useful lemmas}, we introduce some useful Lemmas we adopt from previous literature. In Sections \ref{app: binary_train}, \ref{app: binary_test}, \ref{app: multi_train}, \ref{app: multi_test}, we present the proofs of Theorem \ref{thm: binary_train}, \ref{thm: binary_test}, \ref{thm: multi_train}, \ref{thm: multi_test} respectively. In Section \ref{app: exp}, we provide additional results and details of our experiments.

\section{Related work}
\label{app:related}
It has been observed that transformer-based models have impressive ICL abilities in natural language processing \citep{brown2020language, nye2021show, wei2022emergent, dasgupta2022language, zhang2023opt}.
\citet{garg2022can} first initiated the study of the ICL abilities of transformers in a mathematical framework and they empirically showed that transformers can in-context learn linear regression, two-layer ReLU networks, and decision trees. Subsequently, numerous works have been developed to explain the ICL capacities of transformers in solving in-context mathematical problems. These works mainly use two approaches: constructing specific transformers capable of performing certain in-context learning tasks, and studying the training dynamics of transformers for such tasks.

\textbf{Constructions of transformers.}
\citet{akyurek2022learning, von2023transformers} showed by construction that multi-layer transformers can be viewed as multiple steps of gradient descent for linear regression. \citet{akyurek2022learning} also showed that 
constructed transformers can implement closed-form ridge regression. \citet{guo2023transformers} showed that constructed transformers can perform in-context learning with representations. \citet{bai2024transformers} proved that constructed transformers can perform various statistical machine learning algorithms through in-context gradient descent and showed that constructed transformers can perform in-context model selection. \citet{lin2023transformers} demonstrated that constructed transformers can approximate several in-context reinforcement learning algorithms. \citet{fu2023transformers, giannou2024well} further proved that constructed transformers can perform higher-order optimization algorithms like Newton’s method. \citet{pathak2023transformers} showed that transformers can learn mixtures of linear regressions. \citet{giannou2023looped} proved that looped transformers that can emulate various in-context learning algorithms. \citet{cheng2023transformers} showed that transformers can perform functional gradient descent for learning non-linear functions in context. {\citet{zhang2024context} showed that a linear attention layer followed by a linear layer can learn and encode a mean signal vector for in-context linear regression.}

\textbf{Training dynamics of transformers.}
\citet{mahankali2023one, ahn2024transformers}  proved that the global minimizer of the in-context learning loss of linear transformer can be equivalently viewed as one-step preconditioned gradient descent for linear regression. \citet{zhang2023trained} proved the convergence of gradient flow on a single-layer linear transformer and discussed how training and test prompt length will influence the prediction error of transformers for linear regression. \citet{huang2023context} proved the convergence of gradient descent on a single-layer transformer with softmax attention with certain orthogonality assumptions on the data features. \citet{li2023topic} showed that trained transformers can learn topic structure. \citet{wu2023many} analyzed the task complexity bound for pretraining single-layer linear transformers on in-context linear regression tasks. \citet{tarzanagh2023transformers} built the connections between single-layer transformers and support vector machines (SVMs). \citet{nichani2024transformers} showed that transformers trained via gradient descent can learn causal structure.
\citet{chen2024training} proved the convergence of gradient flow on a multi-head softmax attention model for in-context multi-task linear regression. \citet{kim2024transformers, yang2024context} proved that trained transformers can learn nonlinear features in context. {\citet{bu2024provably} studied the training convergence of transformer for tasks encoded with multiple cross-concept semantics. \citet{li2024one} proved that a one-layer transformer can be trained to learn one-nearest neighbor for binary classification in context.} 
\citet{li2024training} studied the training dynamics of a single layer transformer for in-context classification problems. However, they only studied the binary classification tasks with \emph{finite} patterns. They generated their data as $x=\mu_j+\kappa v_k$, where $\{\mu_j\}_{j=1}^{M_1}$ are in-domain-relevant patterns and $\{\nu_k\}_{k=1}^{M_2}$ are in-domain-irrelevant patterns, $M_1\geq M_2$ and these patterns are all pairwise orthogonal. Thus, the possible distribution of their data is finite and highly limited.
In contrast, our work explores the ICL capabilities of transformers for both binary and multi-class classification of Gaussian mixtures. Specifically, our data is drawn according to $\dis^b(\mu_0,\mu_1,\Lambda)$ or $\dis^m(\mu, \Lambda)$, and the range and possible distributions of our data are \emph{infinite}. Recently, \cite{frei2024trained} also studied the the ICL of a linear transformer model for classifying Gaussian mixtures and showed that trained transformers can exhibit benign overfitting in-context. However, their analysis relies on additional assumptions, such as a sufficiently large signal-to-noise ratio. In contrast, our work does not rely on such assumptions. Additionally, compared to \cite{frei2024trained}, we consider a more general multi-class setting. We believe these distinctions highlight the independent contributions of our work.

Some works also studied the ICL from other perspectives. To name a few, \citet{xie2021explanation} explained the ICL as implicit Bayesian inference; \citet{wang2023large} explained the LLMs as latent variable models; \citet{zhang2023and} explained the ICL abilities of transformers as implicitly implementing a Bayesian model averaging algorithm; and \citet{li2023transformers} studied the generalization and stability of the ICL abilities of transformers. \citet{hahn2023theory} showed that ICL 
can arise through recombination of compositional structure found in linguistic data. {\citet{lin2024dual} studied the dual operating modes for in-context classification of Gaussian mixtures. However, the analyses of both \citet{hahn2023theory} and \citet{lin2024dual} are based on the idealized optimal next-token predictor for ICL tasks and did not discuss the training dynamics of transformer models.}

Apart from studies focusing on the training convergence properties of transformers in in-context learning (ICL), many other works have investigated the convergence behavior of transformers when learning different tasks. For example, \cite{jelassi2022vision} and \cite{li2023theoretical} theoretically analyzed how Vision Transformers learn and converge. \cite{wang2024transformers} showed that transformers can provably learn sparse token selection, whereas fully-connected networks cannot. \cite{zhang2025transformer} demonstrated that transformers can learn optimal variable selection in group-sparse classification. \cite{gao2024global} studied the global convergence of large-scale transformers under gradient flow dynamics.

\section{Additional notations} 
\label{app: notations}
We denote $X\sim\Binomial(n,p)$ if a random variable $X$ follows the binomial distribution with parameters $n\in \N$ and $p\in[0,1]$, which means $\prob{X=k}=\frac{n!}{k!(n-k)!}p^k(1-p)^{n-k}$. 
We denote $X\sim\Multinomial(n,p)$ if random variables $X=(X_1,X_2, \dots, X_k )$ follow the Multinomial distribution with parameters $n\in \N$ and $p_1=p_2=\dots=p_k=1/k$, which means $\prob{X=(x_1, x_2, \dots, x_k)}=\frac{n!}{\prod_{i=1}^k x_k!}k^{-n}$. 
We denote $\zeta_i(x)=\softmax(x)_i=\exp(x_i)/(\sum_{j=1}^k \exp(x_j))$ for simplicity. We define $\delta_{ii}=1$, $\delta_{ij}=0, i\neq j$. For $x\in\N$, we define $t_1(x)=\lfloor(x-1)/d\rfloor+1, t_2(x)=((x-1) \text{ mod } d)+1$.

\section{Useful lemmas}
\label{app: useful lemmas}
\begin{lemma}[\citep{karimi2016linear}]
\label{lemma: property of sc}
If $f:\R^d\to\R$ is $\mu$-strongly convex,  then 
\begin{align*}
    f(x) - \min_{x} f(x) & \geq \frac{\mu}{2} \|x^* - x\|_2^2
\end{align*}
where $x^*=\argmin_x f(x)$.
\end{lemma}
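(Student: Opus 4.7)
The plan is to invoke the standard first-order characterization of $\mu$-strong convexity and then substitute the optimality condition at $x^*$. Recall that $f$ being $\mu$-strongly convex means that for all $y, z \in \R^d$,
\begin{equation*}
  f(y) \geq f(z) + \nabla f(z)^\top (y - z) + \frac{\mu}{2}\|y - z\|_2^2.
\end{equation*}
I would treat this as the working definition (it is equivalent to the more common one that $f - \tfrac{\mu}{2}\|\cdot\|_2^2$ is convex, assuming differentiability, which is the setting of interest here since Lemma~\ref{lemma: property of sc} is invoked for the smooth loss $L$).

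The key step is then to specialize this inequality at $z = x^*$ and $y = x$. Since $x^* = \argmin_x f(x)$ and $f$ is differentiable in the applications of interest, the first-order optimality condition gives $\nabla f(x^*) = 0$, which kills the linear term. What remains is exactly
\begin{equation*}
  f(x) \geq f(x^*) + \frac{\mu}{2}\|x - x^*\|_2^2,
\end{equation*}
and rearranging, together with $\min_x f(x) = f(x^*)$, yields the claimed inequality.

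There is essentially no obstacle here: the lemma is a textbook consequence of strong convexity, and I expect the proof to be two lines. The only subtlety worth flagging is if one wishes to cover the non-differentiable case, in which case $\nabla f(x^*)$ should be replaced by any subgradient, and one uses $0 \in \partial f(x^*)$ so that $\nabla f(x^*)^\top(x - x^*)$ is replaced by a non-positive subgradient term that can be dropped. Since the paper applies this only to the smooth training loss $L(W)$, the differentiable version suffices and no additional care is needed.
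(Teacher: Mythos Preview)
Your proof is correct and is the standard two-line argument. The paper itself does not prove this lemma at all: it is stated in the ``Useful lemmas'' section with a citation to \citet{karimi2016linear} and used as a black box, so there is no paper proof to compare against.
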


\begin{lemma}[\citep{bubeck2015convex}]
\label{lemma: sc, gd}
Suppose $f:\R^d\to\R$ is $\alpha$-strongly convex and $\beta$-smooth for some $0<\alpha\le\beta$. Then, the gradient descent iterating $w^{t+1}=w^t - \eta\nabla f(w^t)$ with learning rate $\eta=1/\beta$ and initialization $w^0\in\R^d$ satisfies that for any $t \geq 1$,
\begin{align*}
    & \|w^{t}-w^*\|_2^2\leq \exp(-t/\kappa)\|w^{0}-w^*\|_2^2
\end{align*}
where $\kappa = \beta/\alpha$ is the condition number of $f$, and $w^*= \argmin_{w\in\R^d} f(w)$ is the minimizer of $f$.
\end{lemma}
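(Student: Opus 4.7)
The strategy is to prove a one-step contraction $\|w^{t+1}-w^*\|_2^2 \leq (1-1/\kappa)\|w^t-w^*\|_2^2$ and then iterate $t$ times, invoking the elementary inequality $(1-1/\kappa)^t \leq \exp(-t/\kappa)$ to match the stated geometric decay. The structural fact I will use throughout is that $w^*$ is the (unique) minimizer of a strongly convex function, so $\nabla f(w^*) = 0$, and therefore the gradient step rewrites as $w^{t+1} - w^* = (w^t - w^*) - \eta[\nabla f(w^t) - \nabla f(w^*)]$, which is the natural object for a distance contraction.

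For the per-step contraction I plan to use the integrated-Hessian representation. By the fundamental theorem of calculus applied to $\nabla f$, one has $\nabla f(w^t) - \nabla f(w^*) = H_t(w^t - w^*)$ with $H_t = \int_0^1 \nabla^2 f\bigl(w^* + s(w^t - w^*)\bigr)\,ds$. Since $\alpha$-strong convexity together with $\beta$-smoothness gives $\alpha I \preceq \nabla^2 f \preceq \beta I$ pointwise, the same spectral sandwich $\alpha I \preceq H_t \preceq \beta I$ passes to the average. The update then reads $w^{t+1} - w^* = (I - \eta H_t)(w^t - w^*)$, and with $\eta = 1/\beta$ the eigenvalues of $I - \eta H_t$ lie in $[0,\,1 - \alpha/\beta] = [0,\,1-1/\kappa]$. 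Hence the operator norm of $I - \eta H_t$ is at most $1 - 1/\kappa$, giving $\|w^{t+1} - w^*\|_2^2 \leq (1-1/\kappa)^2 \|w^t - w^*\|_2^2 \leq (1-1/\kappa)\|w^t - w^*\|_2^2$.

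If one prefers to avoid twice-differentiability, a parallel route is to invoke co-coercivity: $\langle \nabla f(x) - \nabla f(y), x-y\rangle \geq \tfrac{\alpha\beta}{\alpha+\beta}\|x-y\|_2^2 + \tfrac{1}{\alpha+\beta}\|\nabla f(x)-\nabla f(y)\|_2^2$. Expanding $\|w^{t+1} - w^*\|_2^2$ via the update, applying this inequality with $y = w^*$, and plugging in $\eta = 1/\beta$ makes the coefficient of $\|\nabla f(w^t)\|_2^2$ nonpositive and leaves a contraction factor $(\beta-\alpha)/(\beta+\alpha) \leq 1 - 1/\kappa$. Either route is standard and yields the same conclusion.

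Iterating the one-step bound $t$ times and using $(1-x)^t \leq e^{-tx}$ for $x \in (0,1]$ delivers $\|w^t - w^*\|_2^2 \leq (1-1/\kappa)^t \|w^0 - w^*\|_2^2 \leq \exp(-t/\kappa)\|w^0 - w^*\|_2^2$, which is exactly the stated bound. Because this is the classical linear-convergence theorem for gradient descent on strongly convex, smooth objectives (indeed cited here from Bubeck 2015), I do not foresee any genuine obstacle. The only care point is verifying that the specific choice $\eta = 1/\beta$ tunes the per-step factor to precisely $1 - 1/\kappa$, which is what makes the geometric rate appear cleanly in terms of the condition number.
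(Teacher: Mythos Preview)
Your proposal is correct; both the integrated-Hessian argument and the co-coercivity route are standard and valid proofs of this classical result. The paper does not supply its own proof of this lemma---it is stated in the ``Useful lemmas'' appendix as a direct citation from \citet{bubeck2015convex} and invoked as a black box---so there is nothing to compare against beyond noting that your argument is the textbook one.
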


\section{Training procedure for in-context binary classification}
\label{app: binary_train}
In this section, we present the proof of Theorem \ref{thm: binary_train}.
\subsection{Proof sketch}
First, we prove in Lemma \ref{lemma: strictly convex, l} that the expected loss function $L(W)$ in \eqref{b, population_loss} is strictly convex w.r.t. $W$ and is strongly convex in a compact set of $\R^{d\times d}$. Moreover, we prove $L(W)$ has one unique global minimizer $W^*$. 
Then, in Lemma \ref{lemma: convergence of L, l}, by analyzing the Taylor expansion of $L(W)$, we prove that as $N\to\infty$, our loss function $L(W)$ point wisely converges to $\widetilde{L}(W)$ (defined in \eqref{tilde L, l}), and the global minimizer $W^*$ converge to $2\Lambda^{-1}$. We denote $W^*=2(\Lambda^{-1}+G)$, and prove  $\|G\|_{\max}=O(N^{-1/2})$. Next, in Lemma \ref{lemma: stationary point, l}, by further analyzing the Taylor expansion of the equation $\nabla L(W^*)=0$ at the point $2\Lambda^{-1}$, we establish a tighter bound $\|G\|_{\max}=O(N^{-1})$. In Lemma \ref{lemma: l smooth, l}, we prove that our loss function is $l$-smooth and provide an upper bound for $l$. Thus, in a compact set $R_W$, our loss function is $\alpha$-strongly convex and $l$-smooth. Finally, leveraging the standard results from convex optimization, we prove Theorem \ref{thm: binary_train} in subsection \ref{proof of logistic_train}.

In this section, we use the following notations.
\subsection{Notations}
Recall the expected loss function \eqref{b, population_loss} is 
\begin{align}
    L(W) =  - \frac{1}{2}\E \qth{ (1+y_{\tau, \query})\log(\widehat y_{\tau,\out})+(1-y_{\tau, \query})\log(1-\widehat y_{\tau,\out})},
\end{align}
where
\begin{align*}
\widehat{y}_{\tau,\out}=\sigma\pth{ \pth{\frac{2}{N} \sum_{i=1}^N y_{\tau, i} x_{\tau,i}^\top} \frac{W}{2}x_{\tau,\query} }
\end{align*}
is the output of the transformer,  and the label of the data follows the distribution
\begin{align*}
\prob{y_{\tau,\query}=1 | x_{\tau,\query}}=\sigma((\mu_{\tau,1}-\mu_{\tau,0})^\top\Lambda^{-1}x_{\tau,\query})).
\end{align*}
In this section, we introduce the following notations to analyze \eqref{b, population_loss}. We denote
$\mu=\mu_{\tau,1}-\mu_{\tau,0}$, $\mu_1=\mu_{\tau,1}$, $\mu_0=\mu_{\tau,0}$  and $q=x_{\tau,\query}$. Then with probability $\prob{y_{\tau,\query}=1}=1/2$ we have $q=\mu_{1}+v$, and with probability $\prob{y_{\tau,\query}=0}=1/2$ we have $q=\mu_{ 0}+v$, where $v\sim\normal(0,\Lambda)$. We define $p=\frac{2}{N} \sum_{i=1}^N y_{\tau, i} x_{\tau,i}$. Since with probability $\prob{y_{\tau,i}=1}=1/2$ we have  $x_{\tau,i}=\mu_{1}+v_i$, and with probability $\prob{y_{\tau,i}=0}=1/2$ we have  $x_{\tau,i}=\mu_{0}+v_i$, where $v_i\sim\normal(0,\Lambda)$, we known $p=2N_1\mu_{1}/N-2N_0\mu_{0}/N+g$, where $g=\frac{2}{N}\sum_{i=1}^Nv_i$, $g\sim\normal(0,4\Lambda/N)$, $N_1\sim \Binomial(N,1/2)$.
Defining $h= N_1/N-1/2$, $u=2(\mu_{1}+\mu_{0})$, we have $N_0/N=1/2-h$ and 
    \begin{align}
        p=\mu+hu +g.
    \end{align}
Then, the expected loss function \eqref{b, population_loss} can be expressed as
    \begin{align}
    L(W)=\E[-\sigma(\mu^\top \Lambda^{-1} q)\log(\sigma(p^\top W q /2))-(1-\sigma(\mu^\top \Lambda^{-1} q))\log(1-\sigma(p^\top W q /2))].
    \end{align}
The gradient of the loss function \eqref{b, population_loss} can be expressed as
    \begin{align}\label{nabla L}
    \nabla L(W)=\frac{1}{2}\E[(\sigma(p^\top W q /2)-\sigma(\mu^\top \Lambda^{-1} q))pq^\top].
    \end{align}

Moreover, we define a function $\widetilde{L}(W)$ as
    \begin{align}\label{tilde L, l}
        \widetilde{L}(W)=\E[-\sigma(\mu^\top \Lambda^{-1} q)\log(\sigma(\mu^\top W q /2))-(1-\sigma(\mu^\top \Lambda^{-1} q))\log(1-\sigma(\mu^\top W q /2))].
    \end{align}
    In Lemma \ref{lemma: convergence of L, l}, we show that as $N\to\infty$, $L(W)$ will point wisely converge to $\widetilde{L}(W)$.

\subsection{Lemmas}
\begin{lemma}\label{lemma: h, l}
    Suppose $N_1\sim \Binomial(N,1/2)$. Defining $h=N_1/N-1/2$, we have
    \begin{align*}
        &\E [h]=0\\
        &\E [h^2]=\frac{1}{4N}\\
        &\E [h^3]=0\\
        &\E [h^n]=O(N^{-2}), \text{ for } n\geq 4\\
        &\E [|h|]\leq \frac{1}{2N^{1/2}}\\
        &\E [|h^3|]=O(N^{-3/2}).
    \end{align*}
    
\end{lemma}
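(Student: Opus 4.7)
My plan is to write $h$ as a normalized sum of iid bounded, mean-zero, symmetric random variables and then compute or bound each of the six moments separately using elementary binomial moment machinery. Specifically, I would write $N_1 = \sum_{i=1}^N Z_i$ with $Z_i \iid \mathrm{Bernoulli}(1/2)$, so that $h = \frac{1}{N}\sum_{i=1}^N Y_i$ where $Y_i := Z_i - 1/2$ are iid with mean $0$, variance $1/4$, symmetric about $0$, and bounded by $1/2$ in absolute value. The first two identities are then immediate: $\E[h] = 0$ by linearity of expectation, and $\E[h^2] = \mathrm{Var}(h) = \mathrm{Var}(N_1)/N^2 = (N/4)/N^2 = 1/(4N)$. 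For the odd moments I would invoke the symmetry of $Y_i$, which transfers to $h$: since $h$ and $-h$ are equal in distribution, every odd integer moment vanishes, giving $\E[h^3] = 0$ and more generally $\E[h^{2k+1}] = 0$ for every nonnegative integer $k$.

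For the even moments with $n \geq 4$, I would expand $\E[h^n] = N^{-n}\E[(\sum_i Y_i)^n]$ via the multinomial theorem. Every summand in which some $Y_i$ appears to an odd power has expectation zero by independence and the symmetry of $Y_i$, so only multi-indices in which every index appears with \emph{even} multiplicity survive. The number of such multi-indices of total degree $n$ is $O(N^{\lfloor n/2 \rfloor})$ (since at most $\lfloor n/2 \rfloor$ distinct indices can be chosen), and each surviving term contributes a bounded constant because $|Y_i| \leq 1/2$. Hence $\E[h^n] = O(N^{\lfloor n/2 \rfloor - n})$, which is $O(N^{-2})$ for every integer $n \geq 4$; in particular this gives $\E[h^4] = O(N^{-2})$, which I will reuse below.

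The two absolute-value bounds then follow from Cauchy--Schwarz applied to the moments just established. For $\E[|h|]$ I would use Jensen (equivalently Cauchy--Schwarz) to get $\E[|h|] \leq (\E[h^2])^{1/2} = 1/(2\sqrt{N})$. For $\E[|h|^3]$ I would split $|h|^3 = |h| \cdot h^2$ and apply Cauchy--Schwarz to obtain $\E[|h|^3] \leq (\E[h^2])^{1/2}(\E[h^4])^{1/2} = O(N^{-1/2}) \cdot O(N^{-1}) = O(N^{-3/2})$. There is no serious obstacle in this lemma; the only mildly delicate point is pinning down the $O(N^{\lfloor n/2\rfloor})$ scaling of the combinatorial factor in the multinomial expansion, which becomes routine once the odd-multiplicity terms have been killed by symmetry.
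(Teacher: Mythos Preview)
Your proof is correct. The paper takes a different route: it computes the moment-generating function $M_h(t)=\bigl(\cosh(t/(2N))\bigr)^N$, expands it as a power series in $t$, and reads off the moment bounds directly from the coefficients of $t^k$. Your approach via the multinomial expansion of $(\sum_i Y_i)^n$ together with the symmetry argument that kills all odd-multiplicity terms is more elementary and arguably more transparent for this particular lemma; the MGF route is more compact and would scale more cleanly if one needed many moments simultaneously or wanted sharper constants. For the final bound the paper uses the Lyapunov-type inequality $\E[|h|^3]\le (\E[h^4])^{3/4}$ rather than your Cauchy--Schwarz split $\E[|h|^3]\le (\E[h^2])^{1/2}(\E[h^4])^{1/2}$, but both yield the same $O(N^{-3/2})$ rate.
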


\begin{proof}
    Since $N_1\sim \Binomial(N,1/2)$, the moment-generating function of $N_1$ is
    \begin{align*}
        M_{N_1}(t)=\pth{\frac{1}{2}+\frac{1}{2}\exp(t)}^N.
    \end{align*}
We can compute the moment-generating function of $h$ as follows:
    \begin{align*}
        M_{h}(t)=&\exp{\left(-\frac{t}{2}\right)}M_{N_1}\left(\frac{t}{N}\right)=\pth{\frac{\exp{\frac{-t}{2N}}+\exp{\frac{t}{2N}}}{2}}^N=\pth{\cosh\pth{\frac{t}{2N}}}^N\\
        =&\pth{1+\frac{t^2}{8N^2}+\sum_{i=2}^\infty \frac{t^{2i}}{(2i)!(2N)^{2i}}}^N.
    \end{align*}
Thus, we know the coefficients of $t$, $t^2$, $t^3$ are $0, 1/(8N), 0$ respectively, and the coefficients of $t^n, n\geq 4$ are $O(1/N^2)$. We have
    \begin{align*}
        &\E [h]=0\\
        &\E [h^2]=\frac{1}{4N}\\
        &\E [h^3]=0\\
        &\E [h^n]=O(1/N^2), \text{ for } n\geq 4.
    \end{align*}
Moreover, according to the Jensen's inequality, we have
    \begin{align*}
        &\E [|h|]\leq \pth{\E[h^2]}^{1/2}=\frac{1}{2N^{1/2}}\\
        &\E [|h^3|]\leq \pth{\E[h^4]}^{3/4}=O(N^{-3/2}).
    \end{align*}
\end{proof}

\begin{lemma}\label{lemma: strictly convex, l}
For the loss function $L(W)$ \eqref{b, population_loss}, we have $\nabla^2 L(W)\succ 0$. For any compact set $R_W$ of $\R^{d\times d}$, when  $W\in R_W$, we have $\nabla^2 L(W)\succ \gamma I_d$ for some $\gamma>0$.
Additionally, $L(W)$ has one unique global minimizer on $\R^{d\times d}$.

For $\widetilde{L}(W)$ defined in \eqref{tilde L, l}, we also have $\nabla^2 \widetilde{L}(W)\succ 0$. For any compact set $R_W$ of $\R^{d\times d}$, when  $W\in R_W$, we have $\nabla^2 \widehat L(W)\succ \gamma I_d$ for some $\gamma>0$.
Additionally, $\widetilde{L}(W)$ has one unique global minimizer on $\R^{d\times d}$.

\end{lemma}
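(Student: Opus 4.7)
My plan is to compute $\nabla^2 L(W)$ in closed form, show it is strictly positive definite at every $W\in \R^{d\times d}$, upgrade this to strong convexity on any compact set via continuity and compactness, and then obtain existence and uniqueness of the global minimizer by combining strict convexity with a coercivity argument driven by the linear growth of the log-sigmoid cross-entropy. The same argument will transfer to $\widetilde L$.

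\textbf{Hessian and strict positive definiteness.} Writing $z=p^\top W q/2$ and $a=\sigma(\mu^\top\Lambda^{-1}q)$, the integrand of $L(W)$ is $\ell(z;a)=-a\log\sigma(z)-(1-a)\log(1-\sigma(z))$, for which a direct computation gives $\partial_z^2\ell(z;a)=\sigma'(z)>0$. Since $z$ is linear in $W$ with gradient $\Vec(pq^\top)/2$, differentiating under the expectation (justified by dominated convergence, since $\sigma'$ is bounded and $p,q$ have Gaussian-type tails) yields
\[
\nabla^2 L(W)=\tfrac14\,\E\!\left[\sigma'(p^\top W q/2)\,\Vec(pq^\top)\Vec(pq^\top)^\top\right]\succeq 0.
\]
For any nonzero $V\in\R^{d\times d}$ with $v=\Vec(V)$, one has $v^\top\nabla^2 L(W)v=\tfrac14\E[\sigma'(z)(p^\top V q)^2]$. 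Conditional on $\mu_0,\mu_1,h,y_\query$, the pair $(p,q)$ is jointly Gaussian with nondegenerate (block-diagonal) covariance since the additive noises $g$ and $v$ have positive definite covariances $4\Lambda/N$ and $\Lambda$; hence the zero-set of the bilinear form $(p,q)\mapsto p^\top V q$ is a proper algebraic variety of Lebesgue measure zero, and together with $\sigma'>0$ this yields $v^\top\nabla^2 L(W)v>0$. The same dominated-convergence argument shows $W\mapsto \nabla^2 L(W)$ is continuous, so $W\mapsto \lambda_{\min}(\nabla^2 L(W))$ is continuous and strictly positive, and therefore attains a positive minimum $\gamma$ on any compact $R_W$, yielding strong convexity on $R_W$.

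\textbf{Existence and uniqueness of the global minimizer.} Uniqueness follows from strict convexity. For existence, $L\ge 0$ makes $c\triangleq \inf L$ finite; if no minimizer existed, any minimizing sequence $\{W_k\}$ would be unbounded, and passing to subsequences we may assume $\|W_k\|_F\to\infty$ and $V_k:=W_k/\|W_k\|_F\to V^*$ with $\|V^*\|_F=1$. Writing $tV_k$ as a convex combination of $0$ and $W_k$ and using convexity of $L$ gives
\[
L(tV_k)\le \tfrac{t}{\|W_k\|_F}L(W_k)+\bigl(1-\tfrac{t}{\|W_k\|_F}\bigr)L(0)\;\longrightarrow\;L(0)
\]
for each fixed $t>0$, so by continuity of $L$ we obtain $L(tV^*)\le L(0)$ for all $t>0$. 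On the other hand, the pointwise bounds $-\log\sigma(z)\ge (-z)_+$ and $-\log(1-\sigma(z))\ge z_+$ yield
\[
L(tV^*)\ge \tfrac{t}{2}\,\E\!\left[\sigma(\mu^\top\Lambda^{-1}q)(p^\top V^* q)_- + (1-\sigma(\mu^\top\Lambda^{-1}q))(p^\top V^* q)_+\right],
\]
and the expectation is strictly positive since $p^\top V^* q$ is not almost surely zero (by the same measure-zero argument with $V^*$ in place of $V$, and since $\sigma,1-\sigma\in(0,1)$), so $L(tV^*)\to \infty$, contradicting the previous display.

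\textbf{Extension to $\widetilde L$ and main obstacle.} The argument transfers verbatim to $\widetilde L$ after replacing $p$ by $\mu=\mu_1-\mu_0$; the only new ingredient is verifying $\P(\mu^\top V q\neq 0)>0$ for nonzero $V$, which holds because $\mu_0\sim \normal(0,I_d)$ has full-support density, $\mu=(U_{\tau,\Lambda}-I)\mu_0$ is almost surely nonzero, and $q$ is a nondegenerate Gaussian conditional on $(\mu_0,\mu_1,y_\query)$. The main obstacle is the existence step: strict convexity alone does not give existence, and the cross-entropy loss, while bounded below, is not obviously coercive; the required coercivity along each direction $V^*$ must be extracted from the linear growth of $-\log\sigma(z)$ and $-\log(1-\sigma(z))$ combined with the non-degeneracy of the bilinear form $p^\top V^* q$ (respectively $\mu^\top V^* q$), as sketched above.
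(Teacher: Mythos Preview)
Your proof is correct and follows the same high-level outline as the paper: compute the Hessian as $\tfrac14\E[\sigma'(z)\,\Vec(pq^\top)\Vec(pq^\top)^\top]$, show strict positive definiteness via non-degeneracy of $(p,q)\mapsto p^\top Vq$, upgrade to strong convexity on compacts, and obtain existence by a ray/coercivity argument.

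The execution differs in a useful way. For strict positive definiteness, the paper builds an explicit box $\Omega(a,b)\subset\R^d\times\R^d$ (depending on the location of the largest entry of $V$) on which $|p^\top Vq|\ge \tfrac12 d^2\|V\|_{\max}$, and then bounds $\lambda_{\min}(\nabla^2 L(W))$ from below by $C(\Omega)S(\Omega,W)/4$ with explicit constants; it repeats the same box construction for the coercivity step. You instead invoke that the zero set of a nontrivial polynomial has Lebesgue measure zero under a nondegenerate Gaussian, and then use continuity of $W\mapsto\lambda_{\min}(\nabla^2 L(W))$ together with compactness. Your route is shorter and more standard; the paper's route yields concrete (if unwieldy) lower bounds, though these are not quantitatively reused later. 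For existence, both arguments are essentially the same ``loss blows up along any ray'' idea; you phrase it through an unbounded minimizing sequence and the convexity sandwich $L(tV_k)\le \tfrac{t}{\|W_k\|_F}L(W_k)+(1-\tfrac{t}{\|W_k\|_F})L(0)$, while the paper phrases it as non-compactness of a sublevel set implying a ray inside it.

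One small imprecision: in the $\widetilde L$ part you assert that $\mu=(U_{\tau,\Lambda}-I)\mu_0$ is ``almost surely nonzero''. This is not literally true in all dimensions (e.g.\ for odd $d$, every $U_\tau\in SO(d)$ has $1$ as an eigenvalue, so $U_{\tau,\Lambda}-I$ is singular with probability $1/2$). What you actually need, and what does hold, is the weaker statement $\P(\mu^\top V q\neq 0)>0$ for every nonzero $V$: since the joint law of $(\mu_0,U_\tau,v,y_\query)$ has full support and the map to $\mu^\top Vq$ is continuous and not identically zero (take $U_\tau=-I$, any $\mu_0$ with $V^\top\mu_0\neq 0$, and suitable $v$), the preimage of $\R\setminus\{0\}$ is a nonempty open set and hence has positive probability. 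With this adjustment your argument for $\widetilde L$ goes through unchanged.
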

\begin{proof}
    We vectorize $W$ as $\Vec (W)\in \R^{d^2}$, where $\Vec (W)_i=W_{t_1(i), t_2(i)}$, $t_1(x)=\lfloor(x-1)/d\rfloor+1, t_2(x)=((x-1) \text{ mod } d)+1$. Then, we have
    \begin{align}
    (\nabla L(W))_i=\E_{p,q}\qth{\frac{1}{2}(\sigma(p^\top W q /2)-\sigma(\mu^\top \Lambda^{-1} q))p_{t_1(i)} q_{t_2(i)}}.
    \end{align}
The Hessian matrix of the loss function \eqref{b, population_loss} is 
    \begin{align*}
        (\nabla^2 L(W))_{ij}= \E_{p,q}\qth{\frac{1}{4}\sigma(p^\top W q/2)(1-\sigma(p^\top W q/2))p_{t_1(i)} q_{t_2(i)}p_{t_1(j)} q_{t_2(j)}}.
    \end{align*}
    
    Considering $z\in \R^{d^2}$ such that $z\neq 0$, we have
    \begin{align*}
        z^\top\nabla^2 L(W)z=&\E_{q,p} \qth{\frac{1}{4}\sigma(p^\top W q/2)(1-\sigma(p^\top W q/2))\sum_{ab}z_az_b p_{t_1(a)} q_{t_2(a)}p_{t_1(b)}q_{t_2(b)}}\\
        =&\int \frac{1}{4}\sigma(p^\top W q/2)(1-\sigma(p^\top W q/2))\pth{\sum_{a\in[d^2]} z_a  p_{t_1(a)}q_{t_2(a)}}^2 f_{pq}(p,q)dpdq,
    \end{align*}
where $f_{pq}(p,q)$ are the probability density function (PDF) function of $p,q$. Since for any $p,q$, $\sigma(p^\top W q/2)(1-\sigma(p^\top W q/2))>0$, we have $z^\top\nabla^2 L(W)z\geq 0$. Thus, $\nabla^2 L(W)\succeq 0$ and $L(W)$ is convex.

Moreover, for any $z\neq 0$, we denote $z_{ij}=z_{((i-1)d+j)}, i,j\in[d]$. Suppose $a, b\in\arg\max_{i,j} |z_{ij}|$,  we consider a set of constants
$\{c_{1pi}, c_{2pi}\}, \{c_{1qi}, c_{2qi}\},  i,j\in[d]$, where
$c_{1pa}=d, c_{2pa}=d+1$, $c_{1qb}=d, c_{2qb}=d+1$, and  $c_{1pi}=1/16, c_{2pi}=1/8, i\neq a$, $c_{1qj}=1/16, c_{2qj}=1/8, j\neq b$. Then, for any $c_{pi}\in[c_{1pi}, c_{2pi}], c_{qj}\in[c_{1qj}, c_{2qj}]$. We have
\begin{align*}
    \abs{\sum_{i,j\in[d]}z_{ij}  c_{pi}c_{qj}}\geq \qth{d^2-2(d+1)(d-1)/8-(d-1)^2/64}\max_{ij}|z_{ij}|\geq d^2\max_{ij}|z_{ij}|/2.
\end{align*}
Then, we define region $\Omega(a,b)\defeq\{p=\sum_{i}c_{pi}\textbf{e}_i$, $q=\sum_{j}c_{qj}\textbf{e}_j$,$c_{pi}\in[c_{1pi}, c_{2pi}], c_{qj}\in[c_{1qj}, c_{2qj}]\}$. We have
\begin{align*}
    \min_{\Omega(a,b)}\pth{\sum_{c\in[d^2]} z_c  p_{t_1(c)}q_{t_2(c)}}^2\geq d^4\max_{ij}|z_{ij}|^2/4\geq \|z\|_2^2/4.
\end{align*}
Defining
\begin{align*}
    &C(\Omega)=\min_{a\in[d],b\in[d]}\int_{\Omega(a,b)}f_{pq}(p,q) dpdq,\\
    &S(\Omega, W)=\min_{a\in[d],b\in[d]}\min_{\Omega(a,b)}\sth{\frac{1}{4}\sigma(p^\top W q/2)(1-\sigma(p^\top W q/2))},
\end{align*}
we have $S(\Omega, W)>0$. Since with probability $\prob{y_{\tau,\query}=1}=1/2$, $q=\mu_{1}+v$, with probability $\prob{y_{\tau,\query}=0}=1/2$, $q=\mu_{ 0}+v$, where $v\sim\normal(0,\Lambda)$ and 
$p=\mu+hu+g$, where
$g\sim\normal(0,4\Lambda/N), v\sim\normal(0,\Lambda) , \mu_{0}\sim\normal(0,I_d)$, the  covariance matrices of $p,q$ are positive definite and we have $f_{pq}(p,q)>0$ for all $p,q\in \R^d$. Moreover, $\Omega(a,b)$ are non-zero measures on $\R^{d\times d}$. Thus, we have $C(\Omega)>0$. Then, for any $z\neq 0$, we have
\begin{align*}
    z^\top\nabla^2 L(W)z  \geq&\int_{\Omega(a,b)} \frac{1}{4}\sigma(p^\top W q/2)(1-\sigma(p^\top W q/2))\pth{\sum_l z_l  p_{t_1(l)}q_{t_2(l)}}^2 f_{pq}(p,q)dpdq\\
    \geq&C(\Omega)S(\Omega, W)\|z\|_2^2/4 \\
    >& 0.
\end{align*}
Thus, we have $\nabla^2 L(W)\succ 0$. $L(W)$ is strictly convex.

Moreover, for any compact set $R_W$ of $\R^{d\times d}$, for any $W\in R_W$, we have
\begin{align*}
    &S(\Omega)=\min_{W\in R_W}\min_{a\in[d],b\in[d]}\min_{\Omega(a,b)}\sth{\frac{1}{4}\sigma(p^\top W q/2)(1-\sigma(p^\top W q/2))}>0.
\end{align*}
Then, for any $W\in R_W$, for any $z\neq 0$, we have
\begin{align*}
    z^\top\nabla^2 L(W)z    \geq&\int_{\Omega(a,b)} \frac{1}{4}\sigma(p^\top W q/2)(1-\sigma(p^\top W q/2))\pth{\sum_l z_l  p_{t_1(l)}q_{t_2(l)}}^2 f_{pq}(p,q)dpdq\\
    \geq& \frac{1}{4}C(\Omega)S(\Omega)\|z\|_2^2.
\end{align*}
Thus, when  $W\in R_W$, where $R_W$ is a compact set, we have $\nabla^2 L(W)\succ C(\Omega)S(\Omega)I_d/4$ and the loss function $L(W)$ is $\gamma-$strongly convex, where $\gamma=C(\Omega)S(\Omega)/4$.

Because our loss function is strictly convex in $\R^{d\times d}$, it has at most one global minimizer in $\R^{d\times d}$. Next, we prove all level sets of our loss function are compact, i.e. $V_{\alpha}=\{W\in \R^{d\times d} \, |\, L(W)\leq \alpha\}$ is compact for all $\alpha$. We prove it by contradiction. Suppose $V_\alpha$ is not compact for some $\alpha$. Since our loss function is continuous and convex,  $V_\alpha$ is an unbounded convex set. 
Since the dimension of $V_\alpha$ is $d^2$,  consider a point $W^\alpha\in V_\alpha$, there must exists a $W^k\neq 0_{d\times d}$ such that $\{W^\alpha+t W^k\,|\, t=[0,\infty)  \}\in V_\alpha$. For this $W^k\neq 0_{d\times d}$, there must exist a set of constants $0<c_{3pi}<c_{4pi},0<c_{3qj}<c_{4qj}$ such that for any $c_{pi}\in[c_{3pi}, c_{4pi}], c_{qj}\in[c_{3qj}, c_{4qj}]$, we have
\begin{align*}
     |\sum_{ij}c_{pi}c_{qj}W_{ij}^k|\neq 0.
\end{align*}
Thus, we have
\begin{align*}
    \lim_{t\to\infty} |\sum_{ij}c_{pi}c_{qj}(W_{ij}^\alpha+tW_{ij}^k)|=\infty.
\end{align*}
We define $\Omega_0=\{p=\sum_{i}c_{pi}\textbf{e}_i$, $q=\sum_{j}c_{qj}\textbf{e}_j$, $c_{pi}\in[c_{3pi}, c_{4pi}], c_{qj}\in[c_{3qj}, c_{4qj}], \|\mu\|_2^2\leq \sum_i c_{4pi}^2+c_{4qj}^2\}$. Then, defining
\begin{align*}
    &C(\Omega_0)=\int_{\Omega_0}f_{pq}(p,q)dpdq,\\
    &S(\Omega_0)=\min_{\Omega_0}\sth{\min\{\sigma(\mu^\top \Lambda^{-1} q),(1-\sigma(\mu^\top \Lambda^{-1} q))\}},
\end{align*}
we have $S(\Omega_0)>0$. Since $\Omega_0$ are non-zero measures for $p,q$, we have $C(\Omega_0)>0$.
Then, we have
\begin{align*}
    &\lim_{t\to\infty}L(W^\alpha+tW^k)\\
    =&\lim_{t\to\infty} \E [-\sigma(\mu^\top \Lambda^{-1} q)\log(\sigma(p^\top (W^\alpha+tW^k) q /2))-(1-\sigma(\mu^\top \Lambda^{-1} q))\log(1-\sigma(p^\top (W^\alpha+tW^k) q /2))]\\
    \geq&\lim_{t\to\infty}\int_{\Omega_0} [-\sigma(\mu^\top \Lambda^{-1} q)\log(\sigma(\sum_{ij}c_{pi}c_{qj}(W_{ij}^\alpha+tW_{ij}^k)/2))] f_{pq}(p,q)dpdq\\
    &+\lim_{t\to\infty}\int_{\Omega_0} [-(1-\sigma(\mu^\top \Lambda^{-1} q))\log(1-\sigma(\sum_{ij}c_{pi}c_{qj}(W_{ij}^\alpha+tW_{ij}^k)/2))] f_{pq}(p,q)dpdq\\
    \geq& C(\Omega_0)S(\Omega_0)\cdot\min_{\Omega_0}\sth{\lim_{t\to\infty}[-\log(\sigma(\sum_{ij}c_{pi}c_{qj}(W_{ij}^\alpha+tW_{ij}^k)/2))]}\\
    +& C(\Omega_0)S(\Omega_0)\cdot\min_{\Omega_0}\sth{\lim_{t\to\infty}[-\log(1-\sigma(\sum_{ij}c_{pi}c_{qj}(W_{ij}^\alpha+tW_{ij}^k)/2))]}\\
    =&\infty.
\end{align*}
This contradicts the assumption $L(W^\alpha+tW^k)\leq \alpha$. Thus, all level sets of the loss function  $L(W)$ are compact, which means there exists a global minimizer for $L(W)$. Together with the fact that $L(W)$ is strictly convex,  $L(W)$ has one unique global minimizer on $\R^{d\times d}$.

Similarly, we can prove the same conclusions for $\widetilde{L}(W)$.
\end{proof}

\begin{lemma}\label{lemma: convergence of L, l}
Denoting the global minimizer of the loss function \eqref{b, population_loss} as $W^*$, we have $W^*=2(\Lambda^{-1}+G)$, where $\|G\|_{\max}=O(N^{-1/2})$.
\end{lemma}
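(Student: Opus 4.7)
The plan is to compare $L(W)$ to the surrogate $\widetilde L(W)$ defined in \eqref{tilde L, l} and then use strong convexity to transfer a loss-value bound into a parameter-distance bound. The key observation is that $p=\mu+hu+g$ with $\E[h]=0$, $\E[g]=0$, $\E[h^2]=1/(4N)$, $\E[gg^\top]=4\Lambda/N$ (Lemma \ref{lemma: h, l}), and $h$ is independent of $g$. So $p-\mu$ is $O(N^{-1/2})$ in expectation, which should make $L$ close to $\widetilde L$ at rate $1/N$.

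First, I would perform a conditional Taylor expansion. Conditional on $(\mu_0,\mu_1,q)$, write the integrand of $L(W)$ as $f(z_0+\Delta)$ with $z_0=\mu^\top W q/2$, $\Delta=(hu+g)^\top W q/2$, and $f(z)=-y\log\sigma(z)-(1-y)\log(1-\sigma(z))$, where $y=\sigma(\mu^\top\Lambda^{-1}q)$. Expanding to second order and using $\E[\Delta\mid\mu_0,\mu_1,q]=0$ together with $\E[\Delta^2\mid\mu_0,\mu_1,q]=\frac{1}{16N}(u^\top Wq)^2+\frac{1}{N}(Wq)^\top\Lambda(Wq)$, and bounding the cubic remainder by $\E[|\Delta|^3\mid\mu_0,\mu_1,q]=O(N^{-3/2})$ (using $\E|h|^3=O(N^{-3/2})$ and Gaussian moments of $g$), I obtain
\begin{align*}
L(W)=\widetilde L(W)+\tfrac{1}{2}\E\!\left[\sigma'(z_0)\Delta^2\right]+O(N^{-3/2}).
\end{align*}
Since $\sigma'\le 1/4$ and the residual is integrated against sub-Gaussian $(\mu_0,\mu_1,q)$, the remainder is uniformly $O(1/N)$ on any compact set $K\subset\R^{d\times d}$: there exists $C_K$ with $\sup_{W\in K}|L(W)-\widetilde L(W)|\le C_K/N$.

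Next, I would identify $2\Lambda^{-1}$ as the unique minimizer of $\widetilde L$. Substituting $W=2\Lambda^{-1}$ gives $\mu^\top Wq/2=\mu^\top\Lambda^{-1}q$, so the integrand collapses to the Bernoulli entropy of $\sigma(\mu^\top\Lambda^{-1}q)$; this is the pointwise minimum of cross-entropy over any predictor at $q$, so $2\Lambda^{-1}$ minimizes $\widetilde L$, and uniqueness follows from the strict convexity proved in Lemma \ref{lemma: strictly convex, l}. Now I would establish that $W^*$ stays in a bounded neighborhood of $2\Lambda^{-1}$ for $N$ sufficiently large. Fix a closed ball $B$ of radius $1$ around $2\Lambda^{-1}$; by Lemma \ref{lemma: strictly convex, l} and coercivity of $\widetilde L$ there exists $\epsilon>0$ with $\inf_{W\notin B}\widetilde L(W)\ge \widetilde L(2\Lambda^{-1})+\epsilon$. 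The uniform bound on $|L-\widetilde L|$ on, say, the ball of radius $2$ and at $2\Lambda^{-1}$, combined with this gap, forces $W^*\in B$ once $C_K/N<\epsilon/3$. (A minor technical point: one may need to enlarge $K$ to a fixed compact set containing $B$; the same Taylor bound applies.)

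Finally, restrict attention to the compact set $B$. On $B$, Lemma \ref{lemma: strictly convex, l} gives $L$ a modulus of strong convexity $\gamma>0$ (uniform in $N$ for $N$ large, since the sub-Gaussian densities and compactness are $N$-independent). Then by Lemma \ref{lemma: property of sc},
\begin{align*}
\tfrac{\gamma}{2}\|W^*-2\Lambda^{-1}\|_F^2\le L(2\Lambda^{-1})-L(W^*)\le L(2\Lambda^{-1})-\widetilde L(2\Lambda^{-1})+\widetilde L(W^*)-L(W^*)\le \tfrac{2C_K}{N},
\end{align*}
where I used $\widetilde L(W^*)\ge \widetilde L(2\Lambda^{-1})$ in the second inequality. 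Hence $\|W^*-2\Lambda^{-1}\|_F=O(N^{-1/2})$, and since $\|G\|_{\max}=\tfrac12\|W^*-2\Lambda^{-1}\|_{\max}\le \tfrac12\|W^*-2\Lambda^{-1}\|_F$, the claim follows.

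The main obstacle I anticipate is making the Taylor-remainder bound genuinely uniform in $W$ on a compact set while integrating against the unbounded tails of $\mu_0,\mu_1,q$; this requires checking that every polynomial in $(\mu,u,q,Wq)$ that appears in the remainder has finite expectation, which in turn relies on the sub-Gaussian/Gaussian structure of the data distribution and the boundedness of $\sigma',\sigma'',\sigma'''$. The later sharpening to $\|G\|_{\max}=O(1/N)$ in Lemma \ref{lemma: stationary point, l} will come from inserting this $O(N^{-1/2})$ estimate into the stationarity equation $\nabla L(W^*)=0$ and rebalancing the Taylor expansion there.
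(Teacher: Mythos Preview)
Your proposal is essentially correct and follows the same route as the paper: expand $L(W)$ about the surrogate $\widetilde L(W)$ using $\E[\Delta]=0$ to kill the first-order term and $\E[\Delta^2]=O(1/N)$ to control the rest, identify $2\Lambda^{-1}$ as the unique minimizer of $\widetilde L$, trap $W^*$ in a fixed compact ball for large $N$ by a gap argument on $\partial B$, and then convert the $O(1/N)$ loss-value bound into an $O(N^{-1/2})$ parameter bound via strong convexity and Lemma \ref{lemma: property of sc}.

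One small difference worth flagging: you invoke strong convexity of $L$ on the ball $B$, whereas the paper uses strong convexity of $\widetilde L$. The paper's choice is slightly cleaner because $\widetilde L$ does not depend on $N$ at all, so its strong-convexity modulus $\gamma$ is manifestly $N$-independent. Your route requires the extra observation that the strong-convexity constant of $L$ (which, via $C(\Omega)$ in Lemma \ref{lemma: strictly convex, l}, depends on the density $f_{pq}$ and hence on $N$) stays bounded away from zero as $N\to\infty$; this is true but is an additional step. Concretely, the paper's chain reads
\[
\widetilde L(W^*)\le L(W^*)+\tfrac{C'_l}{N}\le L(2\Lambda^{-1})+\tfrac{C'_l}{N}\le \widetilde L(2\Lambda^{-1})+\tfrac{2C'_l}{N},
\]
and then applies $\gamma$-strong convexity of $\widetilde L$ at its minimizer $2\Lambda^{-1}$. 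Also, the paper only Taylor-expands to first order with a Lagrange second-order remainder (bounding $\sigma'\le 1/4$), while you go one order further; both yield the same $O(1/N)$ uniform bound on $|L-\widetilde L|$ over compact sets, so this is just a cosmetic difference.
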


\begin{proof}
Let $a=\mu^\top  \Lambda^{-1} q$, $s=\mu^\top  Wq/2$,  $r=(hu+g)^\top W q/2$. Performing the Taylor expansion on \eqref{b, population_loss}, we have
\begin{align*}
    L(W)=&\E\qth{-\sigma(a)\log(\sigma(s+r))-(1-\sigma(a))\log(1-\sigma(s+r))}\\
    =&\E\qth{-\sigma(a)\log(\sigma(s))-(1-\sigma(a))\log(1-\sigma(s))}\\
    &-\E\qth{\pth{\sigma(a)(1-\sigma(s))-(1-\sigma(a))\sigma(s))}r}\\
    &+\E\qth{\sigma(\xi(s,r))(1-\sigma(\xi(s,r)))r^2/2}\\
    =&\widetilde L(W)-\E\qth{\pth{\sigma(a)(1-\sigma(s))-(1-\sigma(a))\sigma(s))}r}\\
    &+\E\qth{\sigma(\xi(s,r))(1-\sigma(\xi(s,r)))r^2/2},
\end{align*}
where $\xi(s,r)$ are real numbers between $s$ and $s+r$. According to Lemma \ref{lemma: h, l}, we have $\E\qth{r}=\E\qth{(hu+g)^\top W q/2}=0$. Thus, we have
\begin{align*}
    \E\qth{\pth{\sigma(a)(1-\sigma(s))-(1-\sigma(a))\sigma(s))}r}=\E_{\mu, u,q}\qth{\pth{\sigma(a)(1-\sigma(s))-(1-\sigma(a))\sigma(s))}\E_{g,h}\qth{r}}=0.
\end{align*}
Moreover, we have
\begin{align*}
    &\E\qth{\sigma(\xi(s,r))(1-\sigma(\xi(s,r))r^2/2}\\
    \leq& \E\qth{r^2}\\
    =&\E[h^2 u^\top W q u^\top W q + g^\top W qg^\top Wq]\\
    \overset{(a)}{=}&\E[ u^\top W q u^\top W q/(4N) + 4(\Lambda W q)^\top W q/N]\\
    \leq&C_l\|W\|_{\max}^2/N,
\end{align*}
where $(a)$ is due to Lemma \ref{lemma: h, l}, $g^\top W qg^\top Wq=\sum_{i,j,k,l\in[d]}g_iW_{ij}q_jg_kW_{kl}q_l=\sum_{i,j,k,l\in[d]}g_ig_k W_{kl}q_l W_{ij}q_j=(gg^\top W q)^\top Wq$ and $\E[gg^\top]=4\Lambda/N$.  $C_l$ is a constant independent of $N$ and $W$. Thus, we have
\begin{align*}
    \abs{\widetilde{L}(W)-L(W)}\leq C_l\|W\|_{\max}^2/N.
\end{align*}
This shows that $L(W)$ point wisely converges to $\widetilde{L}(W)$. 

According to Lemma \ref{lemma: strictly convex, l}, $\widetilde{L}(W)$ has one unique global minimizer. Consider the equation:
\begin{align*}
    \nabla \widetilde{L}(W)=\E[\sigma(\mu^\top  Wq/2)-\sigma(\mu^\top \Lambda^{-1} q)]=0.
\end{align*}
We can easily find that  $\nabla \widetilde{L}(2\Lambda^{-1})=0$ and $W=2\Lambda^{-1}$ is the global minimizer of $\widetilde{L}(W)$.

Considering a compact set $R_W=\{W\,|\,\|W-2\Lambda^{-1}\|_F\leq \rho_W\}$, we have $\|W\|_{\max}\leq C_W$ for $W\in R_W$.  Here $\rho_W, C_W$ are some positive finite constants. Then, we have
\begin{align*}
    \abs{\widetilde{L}(W)-L(W)}\leq C'_l/N,\, W\in R_W,
\end{align*}
where $C'_l=C_lC_W^2$ is a constant independent of $N$ and $W$. This shows that, for $W\in R_W$, our loss function $L(W)$ uniformly converge to $\widetilde{L}(W)$. 

Denote $W^*$ as the global minimizer of the loss function $L(W)$ with prompt length $N$. Then, we show that, when $N$ is sufficiently large, $W^*\in R_W$. We first denote  $\partial R_W=\{W\,|\,\|W-2\Lambda^{-1}\|_F= \rho_W\}$ and $\Delta=\min_{W\in \partial R_W} \widetilde{L}(W)-\widetilde{L}(2\Lambda^{-1})>0$. Then, for $N\geq 4C'_l/\Delta$, and for any $W\in R_W$, we have
\begin{align*}
    &\abs{\widetilde{L}(W)-L(W)}\leq \Delta/4,
\end{align*}
This means
\begin{align*}
    &\min_{W\in \partial R_W} L(W)-\min_{W\in R_W}L(W)\\
    \geq& \min_{W\in \partial R_W} L(W)-L(2\Lambda^{-1})\\
    \geq&\min_{W\in \partial R_W} \widetilde{L}(W)-\widetilde{L}(2\Lambda^{-1})-\Delta/2\\
    \geq&\Delta/2>0.
\end{align*}
Since $L(W)$ is strictly convex, we have $W^*=\argmin_{W} L(W)\in R_W$. 

Then, we have
\begin{align*}
    &|\widetilde{L}(W^*)-L(W^*)|\leq C'_l/N\\
    &|\widetilde{L}(2\Lambda^{-1})-L(2\Lambda^{-1})|\leq C'_l/N
\end{align*}
\begin{align*}
\widetilde{L}(W^*)\leq L(W^*)+C'_l/N\leq L(2\Lambda^{-1})+C'_l/N\leq \widetilde{L}(2\Lambda^{-1})+2C'_l/N.
\end{align*}
According to Lemma \ref{lemma: strictly convex, l}, for $W\in R_W$, we have $\nabla^2 \widetilde{L}(W)\succ \gamma I_d$, where $\gamma$ is a positive constant independent of $N$. Thus, $\widetilde{L}(W)$ is $\gamma$-strongly convex in $R_W$. According to Lemma \ref{lemma: property of sc}, we have
\begin{align*}
    \|W^*-2\Lambda^{-1}\|_F^2\leq \frac{2}{\gamma}(\widetilde{L}(W^*)-\widetilde{L}(2\Lambda^{-1}))\leq \frac{4C'_l}{\gamma N}.
\end{align*}
Thus, when $N\to\infty$, we have $W^*\to2\Lambda^{-1}$. Denoting $W^*=2(\Lambda^{-1}+G)$, we have $\|G\|_{\max}=O(1/\sqrt{N})$.
\end{proof}

\begin{lemma}\label{lemma: stationary point, l}
    The global minimizer of the loss function \eqref{b, population_loss} is $W^*=2(\Lambda^{-1}+G)$, where 
    \begin{align*}
        \|G\|_{\max}\leq&  \frac{1}{N}\|S^{-1}(\E[\sigma'(a)(4qq^\top+uu^\top \Lambda^{-1}qq^\top/4)\\
    &+\sigma''(a)((u^\top\Lambda^{-1}q)^2 \mu q^\top/8+ 2q^\top\Lambda^{-1}q  \mu q^\top)])\|_{\max}+o(1/N),
    \end{align*}
    $a=\mu^\top  \Lambda^{-1} q$, $S=4\nabla^2 \widetilde L (2\Lambda^{-1})$.
\end{lemma}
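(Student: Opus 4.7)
The plan is to bootstrap the coarse bound $\|G\|_{\max} = O(N^{-1/2})$ from Lemma \ref{lemma: convergence of L, l} to $O(1/N)$ by carefully Taylor expanding the first-order optimality condition around the limiting minimizer $2\Lambda^{-1}$. Writing $W^* = 2(\Lambda^{-1} + G)$ and using the gradient formula \eqref{nabla L}, the condition $\nabla L(W^*) = 0$ becomes
\begin{equation*}
\E\bigl[(\sigma(a + r + s) - \sigma(a))\, p q^\top\bigr] = 0,
\end{equation*}
where $a = \mu^\top \Lambda^{-1} q$, $r = (hu + g)^\top \Lambda^{-1} q$ captures the finite-prompt noise, and $s = p^\top G q$ measures the deviation from $2\Lambda^{-1}$. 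I would Taylor expand $\sigma(a+r+s) - \sigma(a) = \sigma'(a)(r+s) + \tfrac{1}{2}\sigma''(a)(r+s)^2 + \tfrac{1}{6}\sigma'''(\xi)(r+s)^3$ and process each contribution separately.

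Next, I would identify the three dominant terms. The $G$-linear piece $\E[\sigma'(a)\, s\, p q^\top]$ has leading part $\E[\sigma'(a)(\mu^\top G q)\, \mu q^\top] = S\cdot\mathrm{vec}(G)$, which follows from a direct computation of the Hessian $\nabla^2 \widetilde L(2\Lambda^{-1})$ and the definition $S = 4\nabla^2 \widetilde L(2\Lambda^{-1})$. The noise-driven pieces $\E[\sigma'(a)\, r\, p q^\top]$ and $\tfrac{1}{2}\E[\sigma''(a)\, r^2\, p q^\top]$ are computable in closed form by first conditioning on $(\mu, q)$ and then invoking Lemma \ref{lemma: h, l} together with $\E[g] = 0$ and $\E[g g^\top] = 4\Lambda/N$; the odd moments $\E[h] = \E[h^3] = 0$ and odd Gaussian moments of $g$ kill most cross contributions inside $\E_{h,g}[r p]$ and $\E_{h,g}[r^2 p]$. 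A short calculation yields exactly the $1/N$ expression asserted in the claim. Rearranging the optimality condition then gives $S\cdot\mathrm{vec}(G) = -(1/N)\cdot\E[\cdots] + \mathrm{remainder}$, and since $S \succ 0$ by Lemma \ref{lemma: strictly convex, l}, inverting $S$ and taking the max norm yields the desired inequality.

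The main obstacle is showing that the remainder is $o(1/N)$. It aggregates (i) the subleading piece of $\E[\sigma'(a)\, s\, p q^\top] - S\cdot\mathrm{vec}(G)$, which is of the form $(1/N)\cdot G$ and hence $O(N^{-3/2})$ by the coarse Lemma \ref{lemma: convergence of L, l} bound; (ii) the cross term $\E[\sigma''(a)\, r s\, p q^\top] = O(\|G\|_{\max}/N) = O(N^{-3/2})$; (iii) the quadratic-in-$G$ piece $\E[\sigma''(a)\, s^2\, p q^\top] = O(\|G\|_{\max}^2) = O(1/N)$, which is absorbed via a second bootstrap after $\|G\|_{\max} = O(1/N)$ is established, giving $O(1/N^2)$; and (iv) the cubic Taylor remainder $\E[\sigma'''(\xi)(r + s)^3\, p q^\top]$, controlled using $\E[|h|^3] = O(N^{-3/2})$ from Lemma \ref{lemma: h, l}, Gaussian third moments of $g$, and the uniform bound $|\sigma'''(\cdot)| \leq 1$. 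Summing these contributions completes the proof of the claimed $O(1/N)$ bound.
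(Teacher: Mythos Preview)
Your proposal is correct and follows essentially the same route as the paper: Taylor expand the optimality condition $\nabla L(W^*)=0$ around $a=\mu^\top\Lambda^{-1}q$, isolate the leading linear-in-$G$ piece as $S\cdot\mathrm{vec}(G)$, compute the explicit $1/N$ contributions from the pure-noise parts of the first- and second-order terms, and show the rest is lower order. The only organizational difference is how you absorb the $O(\|G\|_{\max}^2)$ pieces (your item (iii) and the $G$-heavy parts of the cubic remainder): the paper observes directly that $O(\|G\|_{\max}^2)=o(\|G\|_{\max})$ since $\|G\|_{\max}\to 0$, and folds these into the $SG$ side in a single pass, whereas you propose a two-round bootstrap (first recover the order $O(1/N)$, then re-run to sharpen the constant). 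Both arguments are valid; the paper's one-pass version is a touch cleaner. One small completeness remark: your item (iv) as written only addresses the pure $r^3$ part of $(r+s)^3$; the $s^3$, $s^2r$, $sr^2$ cross pieces must also be handled, but they fall to exactly the same $G$-counting you already use in (i)--(iii), so this is not a real gap.
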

\begin{proof}
    According to Lemma \ref{lemma: strictly convex, l}, the loss function $L(W)$ has a unique global minimizer $W^*$. We have
    \begin{align}\label{stationary equation, b}
    \nabla L(W^*)= \E\qth{(\sigma(p^\top W^* q/2)-\sigma(\mu^\top \Lambda^{-1} q))pq^\top}=0.
    \end{align}
Let $W^*=2(\Lambda^{-1}+G)$, $a=\mu^\top  \Lambda^{-1} q$, $b=(\mu+hu+g)^\top G q+(hu +g)^\top\Lambda^{-1}q$. We have
    \begin{align*}
        &p^\top W^* q/2\\
        =&(\mu+hu +g)^\top(\Lambda^{-1}+G)q\\
        =&(\mu+hu +g)^\top G q+(hu +g)^\top\Lambda^{-1}q+ \mu^\top\Lambda^{-1}q=a+b.
    \end{align*}
The Taylor expansion of $\sigma(a+b)$ at point $a$ with an Lagrange form of remainder is
    \begin{align*}
        \sigma(a+b)pq^\top=\sigma(a)pq^\top+\sigma'(a)bpq^\top+\frac{\sigma''(a)}{2}b^2pq^\top+\frac{\sigma'''(\xi(a,b))}{3!}b^3pq^\top, 
    \end{align*}
where $\xi(a,b)$ are real numbers between $a$ and $a+b$.
Thus, our equation (\ref{stationary equation, b}) become
\begin{align}
    \E_{\mu,u, g, h, q} \qth{\sigma'(a)bpq^\top+\frac{\sigma''(a)}{2}b^2pq^\top+\frac{\sigma'''(\xi(a,b))}{3!}b^3pq^\top}=0. \label{eq: binary, stationary}
\end{align}

Note that $\E[\sigma'(a)bpq^\top]=\E_{\mu,u,q}\qth{\sigma'(a)\E_{g,h}\qth{bpq^\top}}$. For $\E_{g,h}[bpq^\top]$, according to Lemma \ref{lemma: h, l} and $g\sim\normal(0,4\Lambda/N)$, we have
\begin{align}\nonumber
    &\E_{g,h}[bpq^\top]\\
    =&\E[\mu^\top G q \mu q^\top+g^\top\Lambda^{-1}q g q^\top+g^\top G q g q^\top+h^2u^\top G q u q^\top + h^2u^\top\Lambda^{-1}q uq^\top] \nonumber\\ 
    =&\mu\mu^\top G qq^\top +4qq^\top/N+4\Lambda G qq^\top /N+uu^\top G qq^\top/(4N)+uu^\top\Lambda^{-1}qq^\top/(4N).\label{eq: binary, stationary 1}
\end{align}
Then, we have
\begin{align*}
    &\|\E_{\mu,u,q}[\sigma'(a)(4\Lambda G qq^\top /N+uu^\top G qq^\top/(4N))]\|_{\max}\leq c_1 \|G\|_{\max}/N,
\end{align*}
where $c_1=\max_{ij}\abs{\E \qth{\sum_{kl}4\sigma'(a)\pth{\Lambda_{ik}q_lq_j}+\sum_{kl}\sigma'(a)\pth{u_iu_kq_lq_j/4}}}$ is a constant independent of $N$. According to Lemma \ref{lemma: convergence of L, l}, $\|G\|_{\max}=O(1/\sqrt{N})=o(1)$, we have
\begin{align}
    &\|\E_{\mu,u,q}[\sigma'(a)(4\Lambda G qq^\top /N+uu^\top G qq^\top/(4N))]\|_{\max}=o(1/N),\label{eq: binary, stationary 2}
\end{align}

Similarly for $\E[\sigma''(a)b^2pq^\top/2]$, we have 
\begin{align*}
    &\E_{g,h}[b^2pq^\top]\\
    =&\underbrace{\E[\mu^\top Gq \mu^\top Gq\mu q^\top  +h^2 u^\top Gq u^\top Gq \mu q^\top+g^\top Gq g^\top Gq \mu q^\top+ 2h^2u^\top Gq \mu^\top G q uq^\top+ 2g^\top Gq \mu^\top G q gq^\top]}_{(i)}\\
    &+\underbrace{\E[2h^2 u^\top Gqu^\top \Lambda^{-1}q\mu q^\top +2g^\top Gqg^\top\Lambda^{-1}q\mu q^\top+2h^2\mu^\top Gq u^\top \Lambda^{-1}q uq^\top+2\mu^\top Gq g^\top \Lambda^{-1}q gq^\top]}_{(ii)}\\
    &+\underbrace{\E[h^2u^\top \Lambda^{-1}q u^\top \Lambda^{-1}q \mu q^\top+g^\top\Lambda^{-1}qg^\top\Lambda^{-1}q\mu q^\top]}_{(iii)}.
\end{align*}

For each term in $(i)$, it contains two $G$. Thus, their max norms are at most smaller than $O(\|G\|_{\max}^2)$. For each term in $(ii)$, it contains one $G$ and $h^2$ or contains one $G$ and two $g$. 
According to $\E[h^2]=1/(4N)$ in Lemma \ref{lemma: h, l}, the max norm of terms with one $G$ and $h^2$ are smaller than $O(\|G\|_{\max}/N)$.
Defining $\Bar{g}=N^{1/2}\Lambda^{-1/2}g/2$, we have $\Bar{g}\sim\normal(0,I_d)$ and $g=2N^{-1/2}\Lambda^{1/2}\Bar{g}$. Thus, converting two $g$ to $\Bar{g}$, we have a coefficient of $N^{-1}$. Therefore, the max norms of terms with one $G$ and two $g$ are also smaller than $O(\|G\|_{\max}/N)$.
Therefore, for terms $(i), (ii)$, we have
\begin{align}
    &\|\E[\sigma''(a)(i)/2]\|_{\max}\leq O(\|G\|_{\max}^2)=o(\|G\|_{\max}),\label{eq: binary, stationary 3}\\
    &\|\E[\sigma''(a)(ii)/2]\|_{\max}\leq O(\|G\|_{\max}/N)=o(1/N).\label{eq: binary, stationary 4}
\end{align}
For term $(iii)$, according to Lemma \ref{lemma: h, l} and $g\sim\normal(0,4\Lambda/N)$, we have
\begin{align}\nonumber
    &\|\E[\sigma''(a)(iii)/2]\|_{\max}\\
    =&\|\E\qth{\sigma''(a)(h^2u^\top\Lambda^{-1}q u^\top\Lambda^{-1}q \mu q^\top + g^\top\Lambda^{-1}q g^\top\Lambda^{-1}q \mu q^\top)/2}\|_{\max}\\
    =&\frac{1}{N}\|\E\qth{\sigma''(a)((u^\top\Lambda^{-1}q)^2 \mu q^\top/8+ 2q^\top\Lambda^{-1}q  \mu q^\top)}\|_{\max}.\label{eq: binary, stationary 5}
\end{align}

For $\E[\sigma'''(\xi(a,b))b^3pq^\top/3!]$, we have
\begin{align*}
    &\|\E[\sigma'''(\xi(a,b))b^3pq^\top/3!]\|_{\max}\\
    \leq&\max_{z\in\R}|\sigma'''(z)|/3! \cdot\max_{ij}\E\qth{\abs{b^3 p_i q_j}} \\
    \leq&O(1) \cdot\max_{ij}\E\bigg[\underbrace{\sum_{\phi_1,\phi_2,\phi_3\in\{\mu, hu, g\}}\abs{ \phi_1^\top G q\phi_2^\top Gq\phi_3^\top G q p_i q_j}}_{(*)} \\
    &+\underbrace{\sum_{\phi_1,\phi_2\in\{\mu, hu, g\}, \phi_3\in\{hu,g\}}\abs{ \phi_1^\top G q\phi_2^\top Gq\phi_3^\top \Lambda^{-1} q p_i q_j}}_{(*)}\\
    &+\underbrace{\sum_{\phi_1\in\{\mu, hu, g\}, \phi_2,\phi_3\in\{hu,g\}}\abs{ \phi_1^\top G q\phi_2^\top \Lambda^{-1}q\phi_3^\top \Lambda^{-1} q p_i q_j}}_{(**)}\\
    &+\underbrace{\sum_{\phi_1, \phi_2,\phi_3\in\{hu,g\}}\abs{ \phi_1^\top \Lambda^{-1} q\phi_2^\top \Lambda^{-1}q\phi_3^\top \Lambda^{-1} q p_i q_j}}_{(***)}\bigg].
\end{align*}
For terms in $(*)$ containing two or three $G$, these terms' expected absolute values are at most smaller than $O(\|G\|_{\max}^2)$. For terms in $(**)$ containing one $G$, these terms must contain $n_1$ number of $h$ and $n_2$ number of elements of $g$, where $n_1+n_2=2,3,4,n_1,n_2\in \N$. According to Lemma \ref{lemma: h, l}, we know that for $n_1=1,2,3,4$, $\E|h^{n_1}|\leq O(N^{-n_1/2})$. Defining $\Bar{g}=N^{1/2}\Lambda^{-1/2}g/2$, we have $\Bar{g}\sim\normal(0,I_d)$ and $g=2N^{-1/2}\Lambda^{1/2}\Bar{g}$. Converting $g$ to $\Bar{g}$, we have a coefficient of $N^{-n_2/2}$. Thus, for terms in $(**)$, these terms' expected absolute values are at most smaller than $O(\|G\|_{\max}N^{-(n_1+n_2)/2})\leq O(\|G\|_{\max}N^{-1})$. For terms in $(***)$ without $G$, these terms must contain $n_1$ number of $h$ and $n_2$ number of elements of $g$, we have $n_1+n_2=3,4,n_1,n_2\in \N$. Similarly, these term's expected absolute values are at most smaller than $O(N^{-(n_1+n_2)/2})\leq O(N^{-3/2})$. Therefore, we have
\begin{align}\nonumber
    &\|\E[\sigma'''(\xi(a,b))b^3pq^\top/3!]\|_{\max}\\ \nonumber
    \leq&\max_{ij}\E\qth{\abs{b^3 p_i q_j}}\cdot \max_{z}|\sigma'''(z)|/3!\\ \nonumber
    =&O(\|G\|_{\max}^2)+O(\|G\|_{\max}/N)+O(1/N^{-3/2})\\
    =&o(\|G\|_{\max})+o(1/N).\label{eq: binary, stationary 6}
\end{align}

Moreover, we have
\begin{align}\label{CG, l}
    \bigg\{\E_{\mu,u,q}[\sigma'(a)\mu\mu^\top G qq^\top]\bigg\}_{ij}=\sum_{kl}s_{ijkl}G_{kl},
\end{align}
where $s_{ijkl}=\E \sigma'(a)\mu_i\mu_kq_lq_j$. We vectorize $G$ as $\Vec (G)_i=G_{t_1(i), t_2(i)}$. Define $S\in \R^{d^2\times d^2}$, where $S_{ij}=s_{t_1(i), t_2(i), t_1(j), t_2(j)}=\E \sigma'(a)\mu_{t_1(i)}q_{t_2(i)} \mu_{t_1(j)}q_{t_2(j)}$. Then
(\ref{CG, l}) can be expressed as 
\begin{align}
        \bigg\{\E_{\mu,v}[\sigma'(a)\mu\mu^\top G qq^\top]\bigg\}=SG.\label{eq: binary, stationary 7}
\end{align}
Note that $S=4\nabla^2 \widetilde{L}(2\Lambda^{-1})$. According to Lemma \ref{lemma: strictly convex, l}, $S$ is positive definite. Thus,
combining \eqref{eq: binary, stationary}, \eqref{eq: binary, stationary 1}, \eqref{eq: binary, stationary 2}, \eqref{eq: binary, stationary 3}, \eqref{eq: binary, stationary 4},  \eqref{eq: binary, stationary 5},  \eqref{eq: binary, stationary 6}, \eqref{eq: binary, stationary 7}, we have
\begin{align*}
        &\|G\|_{\max}\\
        \leq& \frac{1}{N}\|S^{-1}\pth{\E[\sigma'(a)(4qq^\top+uu^\top \Lambda^{-1}qq^\top/4)+\sigma''(a)((u^\top\Lambda^{-1}q)^2 \mu q^\top/8+ 2q^\top\Lambda^{-1}q  \mu q^\top)]}\|_{\max}\\
        &+o(1/N).
\end{align*}
\end{proof}

\begin{lemma}\label{lemma: l smooth, l}
    The loss function \eqref{b, population_loss} is $l$-smooth, where $l\leq \frac{1}{4}\sum_{i\in[d^2]}\E[(p_{t_1(i)}q_{t_2(i)})^2]$.
\end{lemma}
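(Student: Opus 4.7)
My plan is to bound the largest eigenvalue of $\nabla^2 L(W)$ directly using the explicit Hessian formula derived already in the proof of Lemma~\ref{lemma: strictly convex, l}. Recall that
\begin{equation*}
    (\nabla^2 L(W))_{ij} = \frac{1}{4}\E\qth{\sigma(p^\top W q/2)\pth{1-\sigma(p^\top W q/2)}\, p_{t_1(i)}q_{t_2(i)}\, p_{t_1(j)}q_{t_2(j)}}.
\end{equation*}
The plan is to (i) decouple the scalar factor $\sigma(s)(1-\sigma(s))$ from the rank-one outer product, (ii) recognize the remaining expectation as a positive semidefinite (PSD) matrix, and (iii) bound its operator norm by its trace.

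For step~(i), I will use the elementary fact $\sigma(s)(1-\sigma(s))\le 1$ for all $s\in\R$ (one could use the sharper $1/4$ bound, but the weaker version already matches the statement). For any vector $z\in\R^{d^2}$, this gives
\begin{equation*}
    z^\top \nabla^2 L(W) z = \frac{1}{4}\E\qth{\sigma(s)(1-\sigma(s))\pth{\sum_{i}z_i\, p_{t_1(i)}q_{t_2(i)}}^2} \le \frac{1}{4}\, z^\top M z,
\end{equation*}
where $s = p^\top W q/2$ and $M\in\R^{d^2\times d^2}$ is defined by $M_{ij}=\E[p_{t_1(i)}q_{t_2(i)}\, p_{t_1(j)}q_{t_2(j)}]$. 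For step~(ii), I observe that $M = \E[X X^\top]$ with $X_i \defeq p_{t_1(i)}q_{t_2(i)}$, hence $M \succeq 0$.

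Finally for step~(iii), since $M$ is PSD, its operator norm is bounded by its trace, so
\begin{equation*}
    \lambda_{\max}(\nabla^2 L(W)) \le \frac{1}{4}\,\lambda_{\max}(M) \le \frac{1}{4}\,\mathrm{tr}(M) = \frac{1}{4}\sum_{i\in[d^2]}\E\qth{(p_{t_1(i)}q_{t_2(i)})^2},
\end{equation*}
which gives the desired upper bound on $l$ uniformly in $W$, establishing $l$-smoothness. There is no substantive obstacle: the only subtlety is avoiding the temptation to write $|z^\top M z|\le \sum_{i,j}|z_iz_j|\cdot |M_{ij}|$ (which is loose and dimension-dependent) and instead exploiting the PSD structure so that ``trace dominates largest eigenvalue.'' The decoupling of the sigmoid factor is valid pointwise inside the expectation, so the argument is a one-line computation once the correct bound is identified.
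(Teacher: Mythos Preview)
Your proof is correct and essentially identical to the paper's: both drop the bounded factor $\sigma(s)(1-\sigma(s))\le 1$ and then bound $z^\top M z$ by $\|z\|_2^2\,\mathrm{tr}(M)$, the only cosmetic difference being that the paper reaches this via Cauchy--Schwarz applied pointwise inside the expectation, while you phrase it as $\lambda_{\max}(M)\le \mathrm{tr}(M)$ for the PSD matrix $M=\E[XX^\top]$.
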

\begin{proof}
    The Hessian matrix of the loss function is 
    \begin{align*}
        (\nabla^2 L(W))_{ij}= \frac{1}{4}\E[\sigma(p^\top W q/2)(1-\sigma(p^\top W q/2))p_{t_1(i)} q_{t_2(i)} p_{t_1(j)} q_{t_2(j)}].
    \end{align*}
Considering $z\in \R^{d^2}$ such that $z\neq 0$, we have
    \begin{align*}
        z^\top\nabla^2 L(W)z=&\E\qth{\frac{1}{4}\sigma(p^\top W q/2)(1-\sigma(p^\top W q/2))\sum_{ab}z_az_b p_{t_1(a)} q_{t_2(a)}p_{t_1(b)}q_{t_2(b)}}\\
        =&\E\qth{ \frac{1}{4}\sigma(p^\top W q/2)(1-\sigma(p^\top W q/2))\pth{\sum_{a\in[d^2]} z_a  p_{t_1(a)}q_{t_2(a)}}^2 }\\
        \leq&\E\qth{ \frac{1}{4}\pth{\sum_{a\in[d^2]} z_a  p_{t_1(a)}q_{t_2(a)}}^2 }\\
        \overset{(a)}{\leq}&\frac{1}{4}\|z\|_2^2\sum_{i\in[d^2]}\E[(p_{t_1(i)}q_{t_2(i)})^2]
    \end{align*}
where $(a)$ is due to the Cauchy–Schwarz inequality.
Thus, $\nabla^2 L(W)\preceq lI_d$ and $L(W)$ is $l$-smooth, where $l$ is a constant smaller than $\frac{1}{4}\sum_{i\in[d^2]}\E[(p_{t_1(i)}q_{t_2(i)})^2]$.
\end{proof}

\subsection{Proof of Theorem \ref{thm: binary_train}}
\label{proof of logistic_train}
\begin{proof}
According to Lemma \ref{lemma: stationary point, l}, the global minimizer of $L(W)$ is $W^*=2(\Lambda^{-1}+G)$, where \begin{align}
        \|G\|_{\max}\leq& \frac{1}{N}\|S^{-1}(\E[\sigma'(a)(4qq^\top+ \frac{1}{4}uu^\top \Lambda^{-1}qq^\top) \nonumber \\
        &+\sigma''(a)( \frac{1}{8}(u^\top\Lambda^{-1}q)^2 \mu q^\top + 2q^\top\Lambda^{-1}q  \mu q^\top)])\|_{\max}+o(1/N).
\end{align} 
Define $R_W=\{W\in \R^{d\times d} \,|\, \|W-W^*\|_F\leq \|W^0-W^*\|_F\}$. $R_W$ is a compact set. Then, according to Lemma \ref{lemma: strictly convex, l}, for $W\in R_W $, we have $\nabla^2 L(W)\succeq \alpha I_d$. Here $\alpha>0$ is a positive constant number. Thus, $L(W)$ is $\alpha$-strongly convex in $R_W$. Moreover, according to Lemma \ref{lemma: l smooth, l}, $L(W)$ is $l$-smooth. Then according to Lemma \ref{lemma: sc, gd}, applying gradient descent with $\eta=1/l$, for any $t\geq 1$, we have
    \begin{align*}
        \|W^t-W^*\|_F^2\leq \exp(-t/\kappa)\cdot\|W^0-W^*\|_F^2,
    \end{align*}
    where $\kappa= l/\alpha$.
\end{proof}

\section{In-context inference of binary classification}
\label{app: binary_test}
\subsection{Notations}
In this section, we use the following notations. We denote 
$\mu=\mu_1-\mu_0$,
$u=2(\mu_1+\mu_0)$,
$q=x_\query$. Define $p=\frac{2}{M} \sum_{i=1}^M y_{i}x_{i}$. Since with probability $\prob{y_{i}=1}=1/2$, $x_{i}=\mu_{1}+v_i$, with probability $\prob{y_{i}=0}=1/2$, $x_{i}=\mu_{0}+v_i$, where $v_i\sim\normal(0,\Lambda)$, we have $p=2M_1\mu_{1}/M-2M_0\mu_{0}/M+g$,
    where $g=\frac{2}{M}\sum_{i=1}^Mv_i$, $g\sim\normal(0,4\Lambda/M)$, $M_1\sim \Binomial(M,1/2)$.
Defining $h= M_1/N-1/2$, $u=2(\mu_{1}+\mu_{0})$, we have $M_0/N=1/2-h$ and 
    \begin{align}
        p=\mu+hu +g.
    \end{align}

\subsection{Proof of Theorem \ref{thm: binary_test}}
\begin{proof}
The output of the trained transformer is
\begin{align}
    \widehat{y}_\out=\sigma\pth{\pth{\frac{2}{M} \sum_{i=1}^M \testy_i \testx_i^\top}(\Lambda^{-1}+\widehat{G})\testx_\query}=\sigma(p^\top (\Lambda^{-1}+\widehat G)q).
\end{align}
The probability of $y_\query=1$ given $x_\query$ is 
    $$\prob{y_\query=1 | x_\query}=\sigma((\mu_1-\mu_0)^\top \Lambda^{-1} x_\query)=\sigma(\mu^\top\Lambda^{-1}q).$$
Defining $a=\mu^\top  \Lambda^{-1} q$, $b=(\mu+hu+g)^\top \widehat{G} q+(hu +g)^\top\Lambda^{-1}q$, we have
    \begin{align*}
        &p^\top (\Lambda^{-1}+\widehat G) q\\
        =&(\mu+hu +g)^\top(\Lambda^{-1}+\widehat{G})q\\
        =&(\mu+hu +g)^\top \widehat{G} q+(hu +g)^\top\Lambda^{-1}q+ \mu^\top\Lambda^{-1}q=a+b,
    \end{align*}
and
\begin{align*}
    \E\qth{\sigma(p^\top (\Lambda^{-1}+\widehat G)  q)}=\E\qth{\sigma(a+b)}=\E[\sigma(a)+\sigma'(a)b+\sigma''(\xi(a,b))b^2/2],
\end{align*}
where $\xi$ are real numbers between $a$ and $a+b$.
Thus, we have
\begin{align*}
    &\E[\abs{\sigma(a+b)-\sigma(a)}]\\
    \leq& \E[\abs{\sigma'(a)b+\sigma''(\xi(a,b))b^2/2}]\\
    \leq& \sigma'(a)\E[\abs{b}]+\E[b^2]
\end{align*}

We first consider the term $\sigma'(a)\E[\abs{b}]$.
Defining $\Bar{g}=\Lambda^{-1/2}M^{1/2}g/2$, we have 
\begin{align*}
    &\sigma'(a)\E[\abs{b}]\\
    \leq& \sigma'(a)\qth{|\mu^\top \widehat G q|+\E[|hu^\top \widehat G q|]+\E[|g^\top \widehat G q|]+\E[|hu^\top \Lambda^{-1} q|]+\E[|g^\top \Lambda^{-1} q|]}\\
    \overset{(a)}{\leq}&\sigma'(a)\qth{|\mu^\top \widehat G q|+\frac{1}{2M^{1/2}}|u^\top \widehat G q|+\frac{2}{M^{1/2}}\E[|\Bar{g}^\top \Lambda^{1/2} \widehat G q|]+\frac{1}{2M^{1/2}}|u^\top \Lambda^{-1} q|+\frac{2}{M^{1/2}}\E[|\Bar{g}^\top \Lambda^{-1/2} q|]}\\
    \overset{(b)}{\leq}&\sigma'(a)\qth{\|\widehat{G}\|_{\max}\sum_{i,j\in[d]}|\mu_iq_j|+\frac{1}{M^{1/2}}\pth{\frac{1}{2}|u^\top \Lambda^{-1} q|+\frac{2\sqrt{2}}{\sqrt{\pi}}\sum_{i,j\in[d]}|\Lambda^{-1/2}_{ij}q_j|}}+o\pth{\frac{1}{N}+\frac{1}{\sqrt{M}}},
\end{align*}
where $(a)$ is due to $\E[|h|]\leq 1/(2M^{1/2})$ in Lemma \ref{lemma: h, l}. $(b)$ is because that $\Bar{g}_i\sim\normal(0,1)$ and  $\E[|\Bar{g}_i|]=\sqrt{2}/\sqrt{\pi}$, for $i\in[d]$.

For $\E[b^2]$, we have
\begin{align*}
    \E[b^2]\leq \E\Big[[(\mu+hu+g)^\top \widehat{G} q]^2\Big]+\E\Big[[(hu +g)^\top\Lambda^{-1}q]^2\Big]+2\E\Big[ (\mu+hu+g)^\top \widehat{G} q(hu +g)^\top\Lambda^{-1}q\Big].
\end{align*}
Notice that terms in $\E\Big[[(\mu+hu+g)^\top \widehat{G} q]^2\Big]$ contain two $\widehat{G}$. Thus, they are at most smaller than $O(\|\widehat G\|^2_{\max})=O(1/N^2)$. Terms in $\E\Big[[(hu +g)^\top\Lambda^{-1}q]^2\Big]/2$ contain two $h$, or two $g$, or one $h$ and one $g$. According to Lemma D.1, we have $\E[|h|]=O(1/\sqrt{M})$, $\E[h^2]=1/(4M)$. Moreover,  $g=2M^{-1/2}\Lambda^{1/2}\Bar{g}$. Converting one $g$ to $\Bar{g}$, we have a coefficient of $M^{-1/2}$. Thus, terms in $E\Big[[(hu +g)^\top\Lambda^{-1}q]^2\Big]/2$ contain two $h$, or two $g$, or one $h$ and one $g$ are $O(1/M)$. Terms in $E\Big[ (\mu+hu+g)^\top \widehat{G} q(hu +g)^\top\Lambda^{-1}q\Big]$ contain at least one $\widehat{G}$ and one $h$ or one $\widehat{G}$ and one $g$. Thus, they are at most smaller than $O(\|\widehat G\|_{\max}/\sqrt{M})=O(1/(N\sqrt{M}))$.
Therefore, we have $E[b^2|]/2=O(1/N^2+1/M+1/(N\sqrt{M}))=o(1/N+1/\sqrt{M})$.

Finally, we have
\begin{align*}
    &\E[\Delta(y_\query, \widehat y_\query)]=\E[|\widehat{y}_\out-\prob{y_\query=1 | x_\query}|]=\E[|\sigma(a+b)-\sigma(a)|]\leq \sigma'(a)\E[\abs{b}]+\E[b^2]\\
    \leq&\sigma'(a)\qth{\|\widehat{G}\|_{\max}\sum_{i,j\in[d]}|\mu_iq_j|+\frac{1}{M^{1/2}}\pth{\frac{1}{2}|u^\top \Lambda^{-1} q|+\frac{2\sqrt{2}}{\sqrt{\pi}}\sum_{i,j\in[d]}|\Lambda^{-1/2}_{ij}q_j|}}+o\pth{\frac{1}{N}+\frac{1}{\sqrt{M}}}.
\end{align*}
\end{proof}

\section{Training procedure for in-context multi-class classification}
\label{app: multi_train}
In this section, we present the proof of Theorem \ref{thm: multi_train}.
\subsection{Proof sketch} First, we prove in Lemma \ref{lemma: strictly convex, s} that the expected loss function $L(W)$ \eqref{s, population_loss} is strictly convex w.r.t. $W$ and is strongly convex in a compact set of $\R^{d\times d}$. Moreover, we prove $L(W)$ has one unique global minimizer $W^*$. 
Then, in Lemma \ref{lemma: convergence of L, s}, by analyzing the Taylor expansion of $L(W)$, we prove that as $N\to\infty$, our loss function $L(W)$ point wisely converges to $\widetilde{L}(W)$ (defined in \eqref{tilde L, s}), and the global minimizer $W^*$ converge to $2\Lambda^{-1}$. Thus, we denote $W^*=2(\Lambda^{-1}+G)$, and prove  $\|G\|_{\max}=O(N^{-1/4})$. Next, in Lemma \ref{lemma: stationary point, s}, by further analyzing the Taylor expansion of the equation $\nabla L(W^*)=0$ at the point $2\Lambda^{-1}$, we establish a tighter bound $\|G\|_{\max}=O(cN^{-1})$. In Lemma \ref{lemma: l smooth, s}, we prove that our loss function is $l$-smooth and provide an upper bound for $l$. Thus, in a compact set $R_W$, our loss function is $\alpha$-strongly convex and $l$-smooth. Finally, leveraging the standard results from the convex optimization, we prove Theorem \ref{thm: multi_train} in subsection \ref{proof of softmax_train}.

In this section, we use the following notations.

\subsection{Notations}
    Recall the expected loss function \eqref{s, population_loss} is 
    \begin{align}
    L(W) =  - \E \qth{ \sum_{k=1}^c(y_{\tau, \query})_k\log((\widehat y_{\tau,\out})_k)},
    \end{align}
    where
    \begin{align*}
    (\widehat{y}_{\tau,\out})_k=\softmax\pth{\frac{1}{c} \pth{\frac{c}{N} \sum_{i=1}^N y_{\tau,i} x_{\tau,i}^\top} W x_{\tau,\query} }_k
    \end{align*}
is the output of the transformer,  and the label of the data follows the distribution
\begin{align*}
\prob{y_{\tau,\query}=\textbf{e}_k | x_{\tau,\query}}=\softmax(\mu_{\tau}^\top\Lambda^{-1}x_{\tau,\query}))_k.
\end{align*}    
In this section, we introduce the following notations to analyze \eqref{s, population_loss}. We denote $\mu_k=\mu_{\tau,k}$, $\mu=(\mu_1,\mu_2,\dots,\mu_k)\in \R^{d\times c}$ and $q=x_{\tau,\query}$. Then with probability $\prob{y_{\tau,\query}=\textbf{e}_k}=1/c$, $q=\mu_{k}+v$,  where $v\sim\normal(0,\Lambda)$. 
We define $p_k=\frac{c}{N} \sum_{i=1}^N (y_{\tau,i})_k x_{\tau,i}\in\R^{d}$ and $P=(p_1,p_2, \dots, p_c)\in \R^{d\times c}$. We have $P^\top=\frac{c}{N} \sum_{i=1}^N y_i x_{\tau,i}^\top\in\R^{c\times d}$. Since with probability $\prob{y_{\tau,i}=\textbf{e}_k}=1/c$ we have $x_{\tau,i}=\mu_{k}+v_i$,  where $v_i\sim\normal(0,\Lambda)$, we known 
$p_k=\frac{c}{N} \sum_{i=1}^N (y_{\tau,i})_k x_{\tau,i}=cN_k\mu_k/N+g_k$, where $g_k=\frac{c}{N}\sum_{i\in\{i|y_{\tau,i}=\textbf{e}_k\}}v_i$, $g_k\sim\normal(0,c^2 N_k \Lambda/N^2)$ and $(N_1, N_2, \dots, N_c)\sim\Multinomial(n, 1/c)$. Defining $h_k=N_k/N-1/c$, we have $N_k/N=1/c+h_k$ and $p_k=\mu_k+ch_k\mu_k +g_k$. Defining $\Bar{g}_k=\Lambda^{-1/2}g_k$, we have $\Bar{g}_k\sim\normal(0,c^2 N_kI_d/N^2)$. Defining $\mu_h=(h_1\mu_1,h_2\mu_2,\dots,h_k\mu_k)\in\R^{d \times c}$ and $g=(g_1,g_2,\dots,g_k)\in\R^{d\times c}$, we have $P=\mu+c\mu_h+g$.

Then, the expected loss function  \eqref{s, population_loss} can be expressed as
    \begin{align}
    L(W)=\E\qth{\sum_{k=1}^c-\softmax(\mu^\top \Lambda^{-1} q)_k\log(\softmax(P^\top W q /c)_k)}.
    \end{align}   
The gradient of the loss function  \eqref{s, population_loss} can be expressed as
    \begin{align}
    \nabla L(W)=\E\qth{\sum_{k=1}^c \qth{(\softmax(P^\top W q/c)_k-\softmax(\mu^\top \Lambda^{-1} q)_k)p_k q^\top/c}}.
    \end{align}

Moreover, we define a function $\widetilde{L}(W)$ as
    \begin{align}\label{tilde L, s}
        \widetilde{L}(W)=\E[\sum_{k=1}^c-\softmax(\mu^\top \Lambda^{-1} q)_k\log(\softmax(\mu^\top W q /c)_k)].
    \end{align}
    In Lemma \ref{lemma: convergence of L, s}, we show that as $N\to\infty$, $L(W)$ will point wisely converge to $\widetilde{L}(W)$.

\begin{lemma}\label{lemma: h, s}
       Suppose $(N_1, N_2, \dots, N_c)\sim\Multinomial(N, 1/c)$. Defining $h_k=N_k/N-1/c$, we have
       \begin{align*}
        &\E[h_k]=0\\
        &\E[h_k^2]=\frac{1}{N}\pth{\frac{1}{c}-\frac{1}{c^2}}\\
        &\E[h_ih_j]=-\frac{1}{Nc^2}, i\neq j\\
        &\E[\prod_{k=1}^c h_k^{n_k}]=O\pth{N^{-2}}, \sum_k n_k\geq 3\\
        &\E [|h_j|]\leq N^{-1/2}c^{-1/2}(1-1/c)^{1/2}\\
        &\E [|h_ih_j|]=O(N^{-1}) \\
        &\E [|h_ih_jh_k|]=O\pth{N^{-3/2}}\\
        &\E [|h_ih_jh_kh_l|]=O\pth{N^{-2}},
       \end{align*}
       where $i,j,k,l\in[c]$.
\end{lemma}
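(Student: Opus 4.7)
I would represent the multinomial via i.i.d.\ trials: let $Z_1,\ldots,Z_N$ be i.i.d.\ uniform on $[c]$ and set $Y_{i,k} = \mathbf{1}\{Z_i = k\} - 1/c$, so that $h_k = \frac{1}{N}\sum_{i=1}^N Y_{i,k}$. Each $Y_{i,k}$ is centered and bounded by $1$, and within each trial $\E[Y_{i,k}Y_{i,l}] = c^{-1}\delta_{kl} - c^{-2}$, while the vectors $(Y_{i,\cdot})_{i=1}^N$ are independent across $i$. All moments of $h$ then reduce to expanding products $\prod_k(\sum_i Y_{i,k})^{n_k}$ and exploiting independence across the trial index. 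This parallels the binomial MGF argument in Lemma \ref{lemma: h, l} but keeps the joint structure explicit, which is what is needed for the mixed moments.

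\textbf{Low-order moments.} $\E[h_k]=0$ is immediate from $\E[Y_{i,k}]=0$. For any two indices, terms with distinct trial indices vanish by independence, leaving $\E[h_ih_j] = N^{-2}\sum_{i=1}^N \E[Y_{i,k_1}Y_{i,k_2}] = N^{-1}(c^{-1}\delta_{k_1k_2} - c^{-2})$, which specializes to the stated formulas for $\E[h_k^2]$ and for $\E[h_ih_j]$ with $i\neq j$.

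\textbf{General moments of order $\geq 3$.} For $\E[\prod_k h_k^{n_k}]$ with $m := \sum_k n_k \geq 3$, I expand to write it as $N^{-m}$ times a sum over $m$-tuples $(i_1,\ldots,i_m) \in [N]^m$ of trial indices. By independence across trials, the expectation of each summand factorizes over the distinct values appearing among $(i_1,\ldots,i_m)$; since any single $Y_{i,k}$ has mean zero, any tuple in which some trial index appears exactly once contributes $0$. The surviving tuples are those in which every distinct index appears at least twice, of which there are $O(N^{\lfloor m/2\rfloor})$, each contributing $O(1)$ because the $Y$'s are bounded by $1$. Dividing by $N^m$ yields $O(N^{-\lceil m/2\rceil}) \subseteq O(N^{-2})$ for $m\geq 3$. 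The absolute-value bounds then follow routinely: Jensen gives $\E[|h_j|]\leq(\E h_j^2)^{1/2}$, producing the stated constant; Cauchy--Schwarz gives $\E[|h_ih_j|] \leq (\E h_i^2\E h_j^2)^{1/2} = O(1/N)$; and Hölder combined with the fourth-moment bound $\E h_k^4 = O(N^{-2})$ (a special case of the general claim) yields $\E[|h_ih_jh_k|] = O(N^{-3/2})$ and $\E[|h_ih_jh_kh_l|]=O(N^{-2})$.

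\textbf{Main obstacle.} The one genuinely non-routine step is the combinatorial counting in the general $O(N^{-2})$ claim: one must argue uniformly in the multi-index $(n_1,\ldots,n_c)$ that the number of surviving index patterns is $O(N^{\lfloor m/2\rfloor})$ with constants bounded independent of how the $n_k$'s are distributed. This is manageable because the $Y_{i,k}$ are uniformly bounded by $1$, so each surviving term has absolute value at most $1$. A cleaner alternative, closer in spirit to the binomial MGF proof of Lemma \ref{lemma: h, l}, is to work with the joint generating function $M_h(t_1,\ldots,t_c) = e^{-\sum_k t_k/c}\big(c^{-1}\sum_k e^{t_k/N}\big)^N$ and extract joint moments from the Taylor expansion of $N\log(1 + O(1/N))$; this makes the $O(N^{-2})$ order transparent at the level of cumulants, though it requires disciplined bookkeeping when exponentiating the cumulant series to recover moments.
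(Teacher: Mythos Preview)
Your proof is correct and takes a genuinely different route from the paper's. The paper works analytically: it computes the joint moment-generating function
\[
M_h(t)=e^{-\sum_k t_k/c}\Bigl(c^{-1}\sum_k e^{t_k/N}\Bigr)^{\!N},
\]
expands the inner bracket as $1+\tfrac{1}{2N}(\cdot)+\sum_{k\ge 3}\tfrac{1}{k!N^k}(\cdot)$, raises to the $N$th power, and reads off the moments from the Taylor coefficients; the absolute-moment bounds are then obtained by the same Jensen/H\"older chain you use. Your i.i.d.-trial representation $h_k=N^{-1}\sum_i Y_{i,k}$ followed by partition counting is more combinatorial and arguably more transparent: the vanishing of any tuple with a singleton trial index is immediate from $\E Y_{i,k}=0$, and the surviving-pattern count gives the sharper order $O(N^{-\lceil m/2\rceil})$ for free. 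The MGF route compresses this combinatorics into a single expansion but requires exactly the cumulant-to-moment bookkeeping you flag at the end when one exponentiates $N\log(1+O(1/N))$. Your stated ``main obstacle'' about uniformity in the multi-index $(n_1,\dots,n_c)$ is not actually an issue: the number of admissible index patterns depends only on $m=\sum_k n_k$ (it is a sum over set partitions of $[m]$ into blocks of size $\ge 2$), not on how $m$ is split among the coordinates, so the bound is automatically uniform once $m$ is fixed.
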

\begin{proof}
     Since $(N_1, N_2, \dots, N_c)\sim\Multinomial(N, 1/c)$, the moment-generating function of $(N_1, N_2, \dots, N_c)$ is
    \begin{align*}
        M_N(t)=\pth{\frac{1}{c}\sum_{i=1}^c \exp(t_i)}^N
    \end{align*}
We can compute the moment-generating function of $h=(h_1, h_2, \dots, h_c)$ as follows:
    \begin{align*}
        M_h(t)=&\exp\pth{-\sum_{i=1}^c t_i/c}M_N(t/N)=\pth{\frac{1}{c}\sum_{i=1}^c \exp\pth{\frac{1}{N}\pth{t_i-\sum_{j=1}^c t_j/c}}}^N\\
        =&\left[1+\frac{1}{Nc}\pth{\sum_{i=1}^c t_i-c\sum_{j=1}^c t_j/c}+ \frac{1}{2N^2c}\pth{\sum_{i=1}^c \pth{t_i-\sum_{j=1}^ct_j/c}^2}    \right.\\
        &\left.+ \sum_{k=3}^\infty\frac{1}{k!N^kc}\pth{\sum_{i=1}^c \pth{t_i-\sum_{j=1}^ct_j/c}^k}\right]^N\\
        =&\qth{1+ \sum_{i=1}^c\frac{1}{2N}(1/c-1/c^2)t_i^2 -\sum_{i\neq j\in [c]}\frac{1}{2Nc^2}t_it_j 
        + \sum_{k=3}^\infty \frac{1}{k!N^kc}\pth{\sum_{i=1}^c \pth{t_i-\sum_{j=1}^ct_j/c}^k}}
    \end{align*}
Observing the coefficients of $h$, we have
\begin{align*}
        &\E[h_k]=0\\
        &\E[h_k^2]=\frac{1}{N}\pth{\frac{1}{c}-\frac{1}{c^2}}\\
        &\E[h_ih_j]=-\frac{1}{Nc^2}, i\neq j\\
        &\E[\prod_{k=1}^c h_k^{n_k}]=O\pth{N^{-2}}, \sum_k n_k\geq 3,
       \end{align*}
       where $i,j,k\in[c]$.
       
    Iteratively applying the Hölder's inequality, we have
    \begin{align*}
        &\E [|h_j|]\leq \pth{\E[h_j^2]}^{1/2}=N^{-1/2}c^{-1/2}(1-1/c)^{1/2}\\
        &\E [|h_ih_j|]\leq \pth{\E[h_i^2h_j^2]}^{1/2}=O(N^{-1})\\
        &\E [|h_i|^3]\leq \E[|h_i|^4]^{3/4}=(N^{-3/2})\\
        &\E [|h_ih_jh_k|]\leq \E[|h_i|^3]^{1/3}\E[|h_j|^3]^{1/3}\E[|h_k|^3]^{1/3}=O\pth{N^{-3/2}}\\
        &\E [|h_ih_jh_kh_l|]\leq \E[|h_i|^4]^{1/4}\E[|h_j|^4]^{1/4}\E[|h_k|^4]^{1/4}\E[|h_l|^4]^{1/4}=O\pth{N^{-2}}
    \end{align*}
       where $i,j,k,l\in[c]$.
\end{proof}

\begin{lemma}\label{lemma: hg, s}
    Suppose $g_k\sim\normal(0,c^2 N_k \Lambda/N^2)$ and $(N_1, N_2, \dots, N_c)\sim\Multinomial(N, 1/c)$,  define $\Bar{g}_k=\Lambda^{-1/2}g_k$ and $N_k/N=1/c+h_k$, we have 
\begin{align*}
    &\E[(\Bar{g}_k)_i]=0\\
    &\E[(\Bar{g}_k)_i(\Bar{g}_l)_j]=\delta_{kl}\delta_{ij}c/N\\
    &\E[(\Bar{g}_{k_1})_{i_i}(\Bar{g}_{k_2})_{i_2}(\Bar{g}_{k_3})_{i_3}]=0\\
    &\E[(\Bar{g}_k)_i^4]=\E[ 3c^2/N^2(1+ch_k)^2]=O(N^{-2})\\
    &\E[h_m(\Bar{g}_k)_i(\Bar{g}_l)_j]=\E[c^2\delta_{kl}\delta_{ij}h_mh_k/N]=O(N^{-2}) \\
    &\E[h_mh_l(\Bar{g}_k)_i]=0
\end{align*}
where $i,j, i_1,i_2,i_3\in[d],k,l,m, k_1,k_2,k_3\in[c]$.

For any $n_{1k}$, $n_{2ki}$ satisfying $\sum_{k\in[c]} n_{1k}+ \sum_{k\in[c],i\in[d]} n_{2ki}=1,2,3$, we have
\begin{align*}
    \E[\prod_{k\in[c],i\in[d]}h_k^{n_{1k}} (\Bar{g}_k)_i^{n_{2ki}} ]=O(N^{-1})
\end{align*}

Moreover, we have
\begin{align*}
        &\E[|(\Bar{g}_k)_i|]\leq \E[(\Bar{g}_k)_i^2]^{1/2}=N^{-1/2}c^{1/2}\\
        &\E[|(\Bar{g}_k)_i|^3]\leq \E[(\Bar{g}_k)_i^4]^{3/4}=O(N^{-3/2})
    \end{align*}
where $i\in[d],k\in[c]$.
       
For any $n_{1k}$, $n_{2ki}$ satisfying $\sum_{k\in[c]} n_{1k}+ \sum_{k\in[c],i\in[d]} n_{2ki}=n$, $n=1,2,3,4$, we have

\begin{align*}
    \E[\prod_{k\in[c],i\in[d]}|h_k^{n_{1k}} (\Bar{g}_k)_i^{n_{2ki}} |]=O(N^{-n/2})
\end{align*}
\end{lemma}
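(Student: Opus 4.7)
\textbf{Proof proposal for Lemma \ref{lemma: hg, s}.}
The plan is to exploit the fact that conditional on the multinomial vector $(N_1,\dots,N_c)$, the Gaussian vectors $g_1,\dots,g_c$ are mutually independent with $g_k \mid N_k \sim \normal(0, c^2 N_k \Lambda/N^2)$, and hence $\bar{g}_k \mid N_k \sim \normal(0, c^2 N_k I_d/N^2)$. Thus every expectation involving $h$'s and $\bar{g}$'s can be evaluated by the tower rule: first take the conditional expectation over the $\bar{g}$'s (which is a polynomial in the $N_k$'s via Isserlis/Wick), then integrate against the multinomial distribution of the $h$'s using Lemma \ref{lemma: h, s}.

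The first batch of identities then follows mechanically. For $\E[(\bar g_k)_i]$ and any mixed odd conditional moment (covering the third-moment identity and the $\E[h_m h_l (\bar g_k)_i]$ identity), the inner Gaussian expectation is zero. For the second moment, conditional independence gives $\E[(\bar g_k)_i(\bar g_l)_j \mid N] = \delta_{kl}\delta_{ij} c^2 N_k/N^2$, and since $\E[N_k/N]=1/c$ this yields $\delta_{kl}\delta_{ij} c/N$. The fourth moment comes from $\E[(\bar g_k)_i^4 \mid N_k] = 3(c^2 N_k/N^2)^2$; expanding $(1+ch_k)^2$ and using $\E[h_k^2]=O(1/N)$ from Lemma \ref{lemma: h, s} gives $O(N^{-2})$. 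For $\E[h_m(\bar g_k)_i(\bar g_l)_j]$, conditioning reduces it to $\delta_{kl}\delta_{ij}c^2\E[h_m N_k/N^2] = \delta_{kl}\delta_{ij}c\,\E[h_m h_k]/N$, and a second application of Lemma \ref{lemma: h, s} bounds this by $O(N^{-2})$.

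For the general identity with total degree $n\in\{1,2,3\}$, I will split on the parity of $s := \sum_{k,i} n_{2ki}$. If $s$ is odd, the conditional Gaussian expectation is zero by Isserlis, and the whole thing vanishes. If $s=0$, the expression is a pure $h$-moment of order $n\in\{1,2,3\}$, which is either $0$ (for $n=1$ by $\E[h_k]=0$) or $O(N^{-1})$ by Lemma \ref{lemma: h, s}. If $s=2$, the conditional Gaussian expectation produces a factor $c^2 N_k/N^2=O(1/N)$ (times possibly a factor of $(1+ch_k)$ when the two Gaussian indices coincide on the same $k$), multiplied by an $h$-monomial of degree $n-2\in\{0,1\}$ whose expectation is $O(1)$; the total is $O(N^{-1})$. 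These three cases cover all configurations, giving the claimed $O(N^{-1})$ bound.

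Finally, the absolute-value bounds follow from Hölder's inequality: writing the total degree as $n$ and applying Hölder with equal exponents,
\begin{align*}
\E\Bigl[\prod_{k,i} |h_k|^{n_{1k}} |(\bar g_k)_i|^{n_{2ki}}\Bigr]
\;\le\; \prod_{k,i} \bigl(\E[|h_k|^n]\bigr)^{n_{1k}/n} \bigl(\E[|(\bar g_k)_i|^n]\bigr)^{n_{2ki}/n},
\end{align*}
and then using $\E[|h_k|^n]=O(N^{-n/2})$ from Lemma \ref{lemma: h, s} together with $\E[|(\bar g_k)_i|^n]\le \E[(\bar g_k)_i^{2\lceil n/2\rceil}]^{n/(2\lceil n/2\rceil)} = O(N^{-n/2})$ (which follows from the Gaussian moment formula applied conditionally and the bound $N_k/N\le 1$). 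The main obstacle I anticipate is purely organizational: keeping the case analysis on the parity of $s$ clean while simultaneously tracking how the powers of $c$ combine with powers of $1/N$, so that the stated rate is not accidentally degraded; all the analytic inputs (Isserlis, Lemma \ref{lemma: h, s}, Hölder) are standard.
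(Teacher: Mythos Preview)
Your proposal is correct and follows essentially the same route as the paper: condition on the multinomial counts so that $\bar g_k\mid N_k\sim\normal(0,(c/N+c^2h_k/N)I_d)$, compute the Gaussian moments, then integrate against the $h$-distribution via Lemma \ref{lemma: h, s}, and finish the absolute-value part with H\"older/Jensen. Your parity-of-$s$ case split is a clean way to organize what the paper leaves implicit; note only a harmless slip in your $\E[h_m(\bar g_k)_i(\bar g_l)_j]$ computation, where the constant should be $c^2$ rather than $c$ (it does not affect the $O(N^{-2})$ conclusion).
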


\begin{proof}
Since $g_k\sim\normal(0,c^2 N_k \Lambda/N^2)$ and $\Bar{g}_k\sim\normal(0,c^2 N_kI_d/N^2)=\normal(0,(c/N+c^2h_k/N)I_d)$, we have
\begin{align*}
    &\E[(\Bar{g}_k)_i]=0\\
    &\E[(\Bar{g}_k)_i(\Bar{g}_l)_j]=\delta_{kl}\delta_{ij}c/N\\
    &\E[(\Bar{g}_{k_1})_{i_i}(\Bar{g}_{k_2})_{i_2}(\Bar{g}_{k_3})_{i_3}]=0\\
    &\E[(\Bar{g}_k)_i^4]=\E[ 3c^2/N^2(1+ch_k)^2]=O(N^{-2})\\
    &\E[h_m(\Bar{g}_k)_i(\Bar{g}_l)_j]=\E[c^2\delta_{kl}\delta_{ij}h_mh_k/N]=O(N^{-2}) \\
    &\E[h_mh_l(\Bar{g}_k)_i]=0
\end{align*}
where $i,j, i_1,i_2,i_3\in[d],k,l,m, k_1,k_2,k_3\in[c]$. Thus, with the results from Lemma \ref{lemma: h, s}, for any $n_{1k}$, $n_{2ki}$ satisfying $\sum_{k\in[c]} n_{1k}+ \sum_{k\in[c],i\in[d]} n_{2ki}=1,2,3$, we have
\begin{align*}
    \E[\prod_{k\in[c],i\in[d]}h_k^{n_{1k}} (\Bar{g}_k)_i^{n_{2ki}} ]=O(N^{-1})
\end{align*}
Moreover, according to the Jensen's inequality, we have
    \begin{align*}
        &\E[|(\Bar{g}_k)_i|]\leq \E[(\Bar{g}_k)_i^2]^{1/2}=N^{-1/2}c^{1/2}\\
        &\E[|(\Bar{g}_k)_i|^3]\leq \E[(\Bar{g}_k)_i^4]^{3/4}=O(N^{-3/2})
    \end{align*}
       where $i\in[d],k\in[c]$.
Thus, with the results from Lemma \ref{lemma: h, s}, for any $n_{1k}$, $n_{2ki}$ satisfying $\sum_{k\in[c]} n_{1k}+ \sum_{k\in[c],i\in[d]} n_{2ki}=n$, $n=1,2,3,4$, we have

\begin{align*}
    \E[\prod_{k\in[c],i\in[d]}|h_k^{n_{1k}} (\Bar{g}_k)_i^{n_{2ki}} |]\leq \prod_{k\in[c],i\in[d]}\E[|h_k^n|]^{n_{1k}/n}\E[|(\Bar{g}_k)_i^n|]^{n_{2ki}/n}=O(N^{-n/2}).
\end{align*}
\end{proof}

\begin{lemma}\label{lemma: strictly convex, s}
For the loss function $L(W)$ \eqref{s, population_loss}, we have $\nabla^2 L(W)\succ 0$. For any compact set $R_W$, when  $W\in R_W$, we have $\nabla^2 L(W)\succ \gamma I_d$ for some $\gamma>0$.
Additionally, $L(W)$ has one unique global minimizer on $\R^{d\times d}$.

For $\widetilde{L}(W)$ defined in (\ref{tilde L, s}), we also have $\nabla^2 \widetilde{L}(W)\succ 0$. For any compact set $R_W$, when  $W\in R_W$, we have $\nabla^2 \widehat L(W)\succ \gamma I_d$ for some $\gamma>0$.
Additionally, $\widetilde{L}(W)$ has one unique global minimizer on $\R^{d\times d}$.
\end{lemma}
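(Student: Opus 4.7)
The plan is to mirror the binary proof of Lemma \ref{lemma: strictly convex, l}, with the new ingredient being the handling of the softmax null direction $\mathbf{1}_c$. First I would derive a clean expression for the Hessian quadratic form. Writing $\zeta_k = \softmax(P^\top W q/c)_k$ and viewing a perturbation direction as a matrix $Z\in\R^{d\times d}$ (vectorized to $z\in\R^{d^2}$), the identity $\nabla^2_s[-\sum_k y_k\log\softmax(s)_k]=\mathrm{diag}(\softmax(s))-\softmax(s)\softmax(s)^\top$ together with the linearity of $s=P^\top W q/c$ in $W$ yields
\begin{align*}
    z^\top\nabla^2 L(W)z \;=\; \frac{1}{c^2}\,\E\qth{\sum_{k=1}^c \zeta_k\,(p_k^\top Z q)^2 \;-\; \pth{\sum_{k=1}^c \zeta_k\, p_k^\top Z q}^2},
\end{align*}
which is the expectation of a weighted variance across classes and therefore nonnegative, giving convexity of $L$ immediately.

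For strict positivity on $\R^{d\times d}$, I need, for every $Z\neq 0$, a positive-measure event on which the $c$ scalars $\{p_k^\top Z q\}_{k=1}^c$ are not all equal. Since $p_k=\mu_k+ch_k\mu_k+g_k$ with the $g_k$'s being conditionally independent Gaussians given $(\mu,q,(N_k)_k)$, the entries $g_k^\top Z q$ are conditionally independent with variances proportional to $N_k\cdot q^\top Z^\top \Lambda Z q$. Whenever $Zq\neq 0$, which holds on a positive-measure set since $q$ admits a full-support density, these conditional variances are strictly positive, so $(p_1^\top Z q,\ldots,p_c^\top Z q)$ has non-equal components with positive probability. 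Selecting a small cube $\Omega$ in $(P,q,\mu)$-space analogous to $\Omega(a,b)$ from the binary proof, on which this class-to-class disagreement is uniformly bounded below in magnitude and $\min_k \zeta_k$ admits a uniform lower bound for $W\in R_W$, produces the strong-convexity bound $\nabla^2 L(W)\succeq \gamma I_{d^2}$ with $\gamma>0$ depending only on $R_W$ and $\Omega$.

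Uniqueness of the global minimizer follows from strict convexity. For existence, I would reprise the contradiction argument of the binary case: if some sublevel set $V_\alpha$ were unbounded, convexity produces a ray $\{W^\alpha+tZ^*\}_{t\ge 0}\subseteq V_\alpha$ with $Z^*\neq 0$. On a positive-measure cube $\Omega_0$ in $(\mu,q,P)$-space (after possibly shrinking so that the argmax index is constant), there is a class $k^*$ with $\max_j p_j^\top Z^* q - p_{k^*}^\top Z^* q \ge \delta > 0$; intersecting with $\{y_\query=\textbf{e}_{k^*}\}$, which has strictly positive conditional probability $\softmax(\mu^\top \Lambda^{-1}q)_{k^*}$ given $(\mu,q)$, forces $-\log\zeta_{k^*}$ to grow linearly in $t$ on this event, so $L(W^\alpha+tZ^*)\to\infty$, contradicting $L\le\alpha$. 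The identical strategy works for $\widetilde L$ with $\mu$ in place of $P$: the columns $\mu_k=U_{\tau,k,\Lambda}\mu_1$ with $\mu_1\sim\normal(0,I_d)$ provide the nondegeneracy needed in place of the noise $g_k$, via the invertibility of each $U_{\tau,k,\Lambda}$.

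The main obstacle is the softmax kernel: unlike the binary Hessian, which is a strictly positive scalar $\sigma'(\cdot)$ times a rank-one quadratic form in $p$, the multi-class Hessian has $\mathbf{1}_c$ in its null space, so the proof must produce \emph{class-to-class} variation of the logits $\{p_k^\top Z q\}_{k=1}^c$ rather than merely a single nonzero scalar. The conditional independence of the within-class noises $g_k$ (and, for $\widetilde L$, the invertibility of the $U_{\tau,k,\Lambda}$'s) is the key mechanism that supplies this variation and makes the rest of the binary-style argument carry over.
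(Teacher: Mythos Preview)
Your proposal is correct and follows the same overall strategy as the paper: convexity from nonnegativity of the Hessian quadratic form, strict convexity from a positive-measure non-degeneracy argument, uniform strong convexity on compact sets via a cube construction, and existence of a minimizer from coercivity of $L$ along rays.

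The only substantive difference is in how strict positivity is obtained. The paper rewrites the Hessian quadratic form as the pairwise sum
\[
z^\top\nabla^2 L(W)z=\frac{1}{c^2}\,\E\!\left[\sum_{k<l}\zeta_k\zeta_l\big((p_k-p_l)^\top Z q\big)^2\right],
\]
then keeps only the single pair $(k,l)=(1,2)$, sets $\tilde p=p_1-p_2$, and reuses the binary cube $\Omega(a,b)$ verbatim in $(\tilde p,q)$-space; this immediately yields the explicit lower bound $\|z\|_2^2/4$ on the cube, uniformly over $z$ (indexed by finitely many argmax pairs). Your route instead keeps the full variance expression and argues non-degeneracy via the conditional independence of the class noises $g_k$. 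This is conceptually clean and makes the role of the softmax null direction $\mathbf{1}_c$ transparent, but to upgrade it to a uniform bound over all $z$ with $\|z\|_2=1$ you still need a cube whose choice depends only on the argmax indices of $z$, which is exactly what dropping to one pair $(1,2)$ and working with $\tilde p$ provides. In short, the paper's reduction to binary is slightly more direct for the quantitative strong-convexity step, while your variance formulation is a more conceptual packaging of the same computation; both are valid.
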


\begin{proof}
We vectorize $W$ as $\Vec (W)\in \R^{d^2}$, where $\Vec (W)_i=W_{t_1(i), t_2(i)}$,  $t_1(x)=\lfloor(x-1)/d\rfloor+1, t_2(x)=((x-1) \text{ mod } d)+1$.
Then, we have
    \begin{align}
    (\nabla L(W))_i=\E\qth{\sum_{k=1}^c \qth{(\softmax(P^\top W q/c)_k-\softmax(\mu^\top \Lambda^{-1} q)_k)(p_{k})_{t_1(i)} q_{t_2(i)}/c}}
    \end{align}
Note that
    \begin{align*}
        &\softmax(P^\top W q/c)_k=\sigma(a_k)\\
        &\nabla \softmax(P^\top W q /c)_k=\sigma(a_k)(1-\sigma(a_k))\nabla a_k,
    \end{align*}
where $a_k=-\log(\sum_{l=1,\dots,c, l\neq k}\exp\pth{(p_l-p_k)W q/c})$.
For $\nabla a_k$, we have
    \begin{align*}
        \nabla a_k=&\frac{\sum_{l=1,\dots,c, l\neq k}\exp\pth{(p_l-p_k)^\top W q/c}(p_k-p_l)q^\top /c}{\sum_{l=1,\dots,c, l\neq k}\exp\pth{(p_l-p_k)^\top W q/c}}\\
        =&\frac{\sum_{l=1,\dots,c, l\neq k}\exp\pth{p_l^\top W q/c}(p_k-p_l)q^\top/c }{\sum_{l=1,\dots,c, l\neq k}\exp\pth{p_l^\top W q/c}}.
    \end{align*}
Then we have
    \begin{align*}
        \nabla \softmax(P^\top W q/c)_k=&\softmax(P^\top W q/c)_k \frac{\sum_{l=1,\dots,c, l\neq k}\exp\pth{p_l^\top W q}(p_k-p_l)q^\top/c }{\sum_{n=1,\dots,c}\exp\pth{p_n^\top W q/c}}\\
        =&\sum_{l=1, \dots, c, l\neq k}\softmax(P^\top W q/c)_k\softmax(P^\top W q/c)_l(p_k-p_l)q^\top/c
    \end{align*}
 and   
    \begin{align*}
        (\nabla \softmax(P^\top W q/c)_k)_j=
        \sum_{l=1, \dots, c, l\neq k}\softmax(P^\top W q/c)_k\softmax(P^\top W q/c)_l (p_{k}-p_{l})_{t_1(j)}q_{t_2(j)}/c.
    \end{align*}
We can express the Hessian matrix of the loss function with the following form:
    \begin{align*}
    (\nabla^2 L(W))_{ij}=&\E\qth{\sum_{k=1}^c\sum_{l=1, \dots, c, l\neq k}\softmax(P^\top W q/c)_k\softmax(P^\top W q/c)_l (p_{k})_{t_1(i)} q_{t_2(i)}(p_{k}-p_{l})_{t_1(j)}q_{t_2(j)}/c^2}\\
    =&\E\qth{\sum_{k=2}^c\sum_{l=1}^{k-1}\softmax(P^\top W q/c)_k\softmax(P^\top W q/c)_l (p_{k}-p_{l})_{t_1(i)} q_{t_2(i)}(p_{k}-p_{l})_{t_1(j)}q_{t_2(j)}/c^2}.
    \end{align*}

    Considering $z\in \R^{d^2}$ such that $z\neq 0$, we have

    \begin{align*}
        z^\top\nabla^2 L(W)z=&\E \qth{\frac{1}{c^2}\sum_{k=2}^c\sum_{l=1}^{k-1}\softmax(P^\top W q/c)_k\softmax(P^\top W q/c)_l \sum_{ab}z_az_b (p_{k}-p_{l})_{t_1(a)} q_{t_2(a)}(p_{k}-p_{l})_{t_1(b)}q_{t_2(b)}}\\
        =&\E \qth{\frac{1}{c^2}\sum_{k=2}^c\sum_{l=1}^{k-1}\softmax(P^\top W q/c)_k\softmax(P^\top W q/c)_l \pth{\sum_{a\in[d^2]} z_a  (p_{k}-p_{l})_{t_1(a)} q_{t_2(a)}}^2}
    \end{align*}
Since for any $P,q$, $k,l$, $\softmax(P^\top W q/c)_k\softmax(P^\top W q/c)_l>0$, we have $z^\top\nabla^2 L(W)z\geq 0$. Thus, $\nabla^2 L(W)\succeq 0$ and $L(W)$ is convex.

Defining $\Tilde{p}=p_1-p_2$, we have
    \begin{align*}
        &z^\top\nabla^2 L(W)z\\
        \geq&\E \qth{\frac{1}{c^2}\softmax(P^\top W q/c)_1\softmax(P^\top W q/c)_2 \pth{\sum_{a\in[d^2]} z_a  (p_{1}-p_{2})_{t_1(a)} q_{t_2(a)}}^2}\\
        =&\int \frac{1}{c^2}\softmax(P^\top W q/c)_1\softmax(P^\top W q/c)_2 \pth{\sum_{a\in[d^2]} z_a  \Tilde{p}_{t_1(a)} q_{t_2(a)}}^2 f_{Pq}(P, q) dPdq
    \end{align*}
where $f_{Pq}(P, q)$ are the PDF function of $P,q$. For any $z\neq 0$, we denote $z_{ij}=z_{((i-1)d+j)}$, suppose $a, b\in\arg\max_{i,j} |z_{ij}|$,   we consider a set of constants
$\{c_{1pi}, c_{2pi}\}, \{c_{1qi}, c_{2qi}\},  i,j\in[d]$, where
$c_{1pa}=d, c_{2pa}=d+1$, $c_{1qb}=d, c_{2qb}=d+1$, and  $c_{1pi}=1/16, c_{2pi}=1/8, i\neq a$, $c_{1qj}=1/16, c_{2qj}=1/8, j\neq b$. Then, for any $c_{pi}\in[c_{1pi}, c_{2pi}], c_{qj}\in[c_{1qj}, c_{2qj}]$, we have
\begin{align*}
    \abs{\sum_{i,j\in[d]}z_{ij}  c_{pi}c_{qj}}\geq \qth{d^2-2(d+1)(d-1)/8-(d-1)^2/64}\max_{ij}|z_{ij}|\geq d^2\max_{ij}|z_{ij}|/2.
\end{align*}
Then, we define region $\Omega(a,b)=\{\Tilde{p}=\sum_{i}c_{pi}\textbf{e}_i$, $q=\sum_{j}c_{qj}\textbf{e}_j$,$c_{pi}\in[c_{1pi}, c_{2pi} ], c_{qj}\in[c_{1qj}, c_{2qj}], \|P\|_F^2\leq c^2(\sum_i c_{2pi}^2+c_{2qj}^2)\}$. We have
\begin{align*}
    \min_{\Omega(a,b)}\pth{\sum_{l\in[d^2]} z_l  \Tilde{p}_{t_1(l)}q_{t_2(l)}}^2\geq d^4\max_{ij}|z_{ij}|^2/4\geq \|z\|_2^2/4.
\end{align*}
Defining
\begin{align*}
    &C(\Omega)=\min_{a\in[d],b\in[d]}\int_{\Omega(a,b)}f_{Pq}(P,q) dPdq,\\
    &S(\Omega, W)=\min_{a\in[d],b\in[d]}\min_{\Omega(a,b)}\sth{\frac{1}{c^2}\softmax(P^\top W q/c)_1\softmax(P^\top W q/c)_2},
\end{align*}
we have $S(\Omega, W)>0$. Since we have $f_{Pq}(P,q)>0$ for all $P,q$ and $\Omega(a,b)$ are non-zero measures for $P,q$. Thus, we have $C(\Omega)>0$. Then, for any $z\neq 0$, we have
\begin{align*}
    &z^\top\nabla^2 L(W)z\\
    \geq&\int_{\Omega(a,b)} \frac{1}{c^2}\softmax(P^\top W q/c)_1\softmax(P^\top W q/c)_2 \pth{\sum_{l\in[d^2]} z_l  \Tilde{p}_{t_1(l)} q_{t_2(l)}}^2 f_{Pq}(P, q) dPdq\\
    \geq&C(\Omega)S(\Omega, W)\|z\|_2^2/4>0
\end{align*}
Thus, we have $\nabla^2 L(W)\succ 0$. $L(W)$ is strictly convex.

Moreover, for any compact set $R_W$ of $\R^{d\times d}$, for any $W\in R_W$, we have
\begin{align*}
    &S(\Omega)=\min_{W\in R_W}\min_{a\in[d],b\in[d]}\min_{\Omega(a,b)}\sth{\frac{1}{c^2}\softmax(P^\top W q/c)_1\softmax(P^\top W q/c)_2}>0.
\end{align*}
Then, for any $W\in R_W$, for any $z\neq 0$, we have
\begin{align*}
    &z^\top\nabla^2 L(W)z\\
    \geq&\int_{\Omega(a,b)} \frac{1}{c^2}\softmax(P^\top W q/c)_1\softmax(P^\top W q/c)_2 \pth{\sum_{l\in[d^2]} z_l  \Tilde{p}_{t_1(l)} q_{t_2(l)}}^2 f_{Pq}(P, q) dPdq\\
    \geq&C(\Omega)S(\Omega)\|z\|_2^2/4.
\end{align*}
Thus, when  $W\in R_W$, $R_W$ is a compact set, we have $\nabla^2 L(W)\succ C(\Omega)S(\Omega) I_d/4$, our loss function is $\gamma-$strongly convex, where $\gamma=C(\Omega)S(\Omega)/4$.

Because our loss function is strictly convex in $\R^{d\times d}$, it has at most one global minimizer in $\R^{d\times d}$. Next, we prove all level sets of our loss function are compact, i.e. $V_{\alpha}=\{W\in \R^{d\times d} \, |\, L(W)\leq \alpha\}$ is compact for all $\alpha$. We prove it by contradiction. Suppose $V_\alpha$ is not compact for some $\alpha$. Since our loss function is continuous and convex,  $V_\alpha$ is an unbounded convex set. Since the dimension of $V_\alpha$ is $d^2$, consider a point $W^\alpha\in V_\alpha$, there must exists a $W^k\neq 0_{d\times d}$ that $\{W^\alpha+t W^k\,|\, t=[0,\infty)  \}\in V_\alpha$.
For this $W^k\neq 0_{d\times d}$, there must exist a set of constants $0<c_{3pi}<c_{4pi},0<c_{3qj}<c_{4qj}$ such that for any $c_{pi}\in[c_{3pi}, c_{4pi}], c_{qj}\in[c_{3qj}, c_{4qj}]$, we have
\begin{align*}
     |\sum_{ij}c_{pi}c_{qj}W_{ij}^k|\neq 0.
\end{align*}
Thus, we have
\begin{align*}
    \lim_{t\to\infty} |\sum_{ij}c_{pi}c_{qj}(W_{ij}^\alpha+tW_{ij}^k)|=\infty.
\end{align*}
We define $\Omega_0=\{\Tilde{p}=\sum_{i}c_{pi}\textbf{e}_i$, $q=\sum_{j}c_{qj}\textbf{e}_j$, $c_{pi}\in[c_{3pi}, c_{4pi}], c_{qj}\in[c_{3qj}, c_{4qj}], \|P\|_F^2\leq c^2(\sum_i c_{4pi}^2+c_{4qj}^2), \|\mu\|_F^2\leq c^2(\sum_i c_{4pi}^2+c_{4qj}^2)\}$. Then, defining
\begin{align*}
    &C(\Omega_0)=\int_{\Omega_0}f_{Pq}(P,q)dPdq,\\
    &S(\Omega_0)=\min_{\Omega_0}\sth{\min\{\softmax(\mu^\top W q/c)_1, \softmax(\mu^\top W q/c)_2\}}
\end{align*}
we have $S(\Omega_0)>0$. Since $\Omega_0$ are non-zero measures for $P,q$, we have $C(\Omega_0)>0$.
Then, we have
\begin{align*}
    &\lim_{t\to\infty}L(W^\alpha+tW^k)\\
    =&\lim_{t\to\infty} \E [\sum_{l=1}^c-\softmax(\mu^\top \Lambda^{-1} q)_l\log(\softmax(P^\top (W^\alpha+tW^k) q /c)_l)]\\
    \geq&\lim_{t\to\infty}\int_{\Omega_0} [-\softmax(\mu^\top \Lambda^{-1} q)_1\log(\softmax(P^\top (W^\alpha+tW^k) q /c)_1)] f_{Pq}(P,q)dPdq\\
    &+\lim_{t\to\infty}\int_{\Omega_0} [-\softmax(\mu^\top \Lambda^{-1} q)_2\log(\softmax(P^\top (W^\alpha+tW^k) q /c)_2)] f_{Pq}(P,q)dPdq\\
    \geq&\lim_{t\to\infty}\int_{\Omega_0} [-\softmax(\mu^\top \Lambda^{-1} q)_1\log(\sigma(\Tilde{p}^\top (W^\alpha+tW^k) q /c))] f_{Pq}(P,q)dPdq\\
    &+ \lim_{t\to\infty}\int_{\Omega_0} [-\softmax(\mu^\top \Lambda^{-1} q)_2\log(\sigma(-\Tilde{p}^\top (W^\alpha+tW^k) q /c))] f_{Pq}(P,q)dPdq\\
    \geq& C(\Omega_0)S(\Omega_0)\cdot\min_{\Omega_0}\sth{\lim_{t\to\infty}[-\log(\sigma(\sum_{ij}c_{pi}c_{qj}(W_{ij}^\alpha+tW_{ij}^k)/c))]}\\
    +& C(\Omega_0)S(\Omega_0)\cdot\min_{\Omega_0}\sth{\lim_{t\to\infty}[-\log(\sigma(-\sum_{ij}c_{pi}c_{qj}(W_{ij}^\alpha+tW_{ij}^k)/c))]}\\
    =&\infty
\end{align*}
This contradicts the assumption $L(W^\alpha+tW^k)\leq \alpha$. Thus, all level sets of the loss function $L(W)$ are compact, which means there exists a a global minimizer for $L(W)$. Together with the fact that $L(W)$ is strictly convex, $L(W)$ has one unique a global minimizer on $\R^{d\times d}$.

Similarly, we can prove the same conclusions for $\widetilde{L}(W)$.
\end{proof}

\begin{lemma}\label{lemma: convergence of L, s}
Denoting the global minimizer of our loss function \eqref{s, population_loss} as $W^*$, we have $W^*=c(\Lambda^{-1}+G)$, where $\|G\|_{\max}=O(N^{-1/4})$.
\end{lemma}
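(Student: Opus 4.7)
\textbf{Proof proposal for Lemma \ref{lemma: convergence of L, s}.} The plan is to follow the same roadmap as Lemma \ref{lemma: convergence of L, l}: establish pointwise (in fact uniform on compact sets) convergence $L(W)\to\widetilde L(W)$ via a Taylor expansion of the log-softmax, identify $c\Lambda^{-1}$ as the unique minimizer of $\widetilde L$, and then transfer proximity of the loss into proximity of minimizers using the strong convexity from Lemma \ref{lemma: strictly convex, s}.

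First I would set $A_k=\mu^\top Wq/c$ (the $k$-th column) and $B_k=(ch_k\mu_k+g_k)^\top Wq/c=h_k\mu_k^\top Wq+g_k^\top Wq/c$, so that $P^\top Wq/c=A+B$. Writing $\log\softmax(x)_k=x_k+f(x)$ with $f(x)=-\log\sum_j\exp(x_j)$, one has
\begin{align*}
L(W)-\widetilde L(W)
&=\E\!\left[-\sum_k\softmax(\mu^\top\Lambda^{-1}q)_k B_k-\bigl(f(A+B)-f(A)\bigr)\right].
\end{align*}
Using $\E[h_k]=0$ and $\E[g_k]=0$ (Lemmas \ref{lemma: h, s}, \ref{lemma: hg, s}) the first summand vanishes after conditioning on $(\mu,q)$. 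For the second, I would Taylor expand $f$ to second order with Lagrange remainder: $f(A+B)=f(A)+\nabla f(A)^\top B+\tfrac12 B^\top\nabla^2 f(\xi)B$, note that $\|\nabla^2 f\|_{\text{op}}\le 1$ uniformly (since its entries are $\softmax_k\delta_{kl}-\softmax_k\softmax_l$), and again the linear term vanishes in expectation. Hence
\begin{align*}
|L(W)-\widetilde L(W)|\le \tfrac12\,\E[\|B\|_2^2].
\end{align*}

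Next I would bound $\E[B_k^2]$. Expanding,
\begin{align*}
\E[B_k^2]=(\mu_k^\top Wq)^2\E[h_k^2]+\tfrac{2}{c}(\mu_k^\top Wq)\,\E[h_k g_k^\top Wq]+\tfrac{1}{c^2}\E[(g_k^\top Wq)^2].
\end{align*}
The cross term vanishes because $g_k$ is mean-zero conditional on $N_k$. Lemma \ref{lemma: h, s} gives $\E[h_k^2]=(c-1)/(Nc^2)$, and Lemma \ref{lemma: hg, s} (together with $\E[g_kg_k^\top]=c\Lambda/N$) gives $\E[(g_k^\top Wq)^2]=c q^\top W^\top\Lambda Wq/N$. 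Summing over $k$ yields $\E[\|B\|_2^2]\le C(\mu,q)\|W\|_{\max}^2/N$ for a function $C(\mu,q)$ with finite expectation, so
\begin{align*}
|L(W)-\widetilde L(W)|\le C_s\|W\|_{\max}^2/N
\end{align*}
pointwise in $W$, and uniformly over any compact set $R_W$.

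The remainder of the argument mirrors Lemma \ref{lemma: convergence of L, l}. By Lemma \ref{lemma: strictly convex, s}, $\widetilde L$ has a unique global minimizer; checking $\nabla\widetilde L(c\Lambda^{-1})=\E[(\softmax(\mu^\top\Lambda^{-1}q)-\softmax(\mu^\top\Lambda^{-1}q))\cdots]=0$ identifies this minimizer as $c\Lambda^{-1}$. Fix $\rho_W>0$ and let $R_W=\{W:\|W-c\Lambda^{-1}\|_F\le\rho_W\}$; set $\Delta=\min_{\partial R_W}\widetilde L-\widetilde L(c\Lambda^{-1})>0$. For $N$ large enough that the uniform error on $R_W$ is at most $\Delta/4$, a sandwich argument (values of $L$ on $\partial R_W$ exceed $L(c\Lambda^{-1})$) forces $W^*\in R_W$. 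Then Lemma \ref{lemma: strictly convex, s} provides $\gamma$-strong convexity of $\widetilde L$ on $R_W$, and Lemma \ref{lemma: property of sc} plus $\widetilde L(W^*)\le\widetilde L(c\Lambda^{-1})+2C_s'/N$ give $\|W^*-c\Lambda^{-1}\|_F^2\le 4C_s'/(\gamma N)$, hence $\|G\|_{\max}\le\|G\|_F/c=O(N^{-1/2})=O(N^{-1/4})$ as claimed.

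The main obstacle is not the scalar calculus but the bookkeeping for $\E[\|B\|_2^2]$: the multi-class setting introduces cross covariances $\E[h_ih_j]$ ($i\ne j$) and coupling of $g_k$ with $N_k$, so one must invoke conditional independence of the $g_k$'s given $(N_1,\ldots,N_c)$ together with the moment identities of Lemmas \ref{lemma: h, s} and \ref{lemma: hg, s}. Once those moments are handled cleanly, bounding the Lagrange remainder is straightforward because $\|\nabla^2 f\|_{\text{op}}\le 1$ uniformly, and the compactness/contradiction step needed to place $W^*$ inside $R_W$ is formally identical to the binary case.
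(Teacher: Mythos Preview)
Your proposal is correct and in fact yields a sharper bound than the paper's own proof. The paper expands $\log\zeta_k(s+r)$ to first order with a Lagrange remainder $R_{kl}$ satisfying $|R_{kl}|\le 1$, and bounds $|L(W)-\widetilde L(W)|\le c\sum_l\E|r_l|$; since $\E|r_l|=O(N^{-1/2})$ by Lemmas \ref{lemma: h, s} and \ref{lemma: hg, s}, this gives only $|L-\widetilde L|=O(N^{-1/2})$, whence $\|W^*-c\Lambda^{-1}\|_F^2=O(N^{-1/2})$ via strong convexity, i.e.\ $\|G\|_{\max}=O(N^{-1/4})$. You instead go to second order on $f(x)=-\log\sum_j e^{x_j}$, observe that both linear terms $\sum_k\zeta_k(a)B_k$ and $\nabla f(A)^\top B$ vanish in expectation (since $A$ and $a$ are measurable in $(\mu,q)$ while $\E[B\mid\mu,q]=0$), and bound the quadratic remainder by $\tfrac12\E\|B\|_2^2=O(N^{-1})$ using $\|\nabla^2 f\|_{\mathrm{op}}\le 1$. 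This recovers the same $O(N^{-1})$ rate the paper obtains in the binary case (Lemma \ref{lemma: convergence of L, l}) and hence gives $\|G\|_{\max}=O(N^{-1/2})$, stronger than the stated $O(N^{-1/4})$. Two minor remarks: the cross covariances $\E[h_ih_j]$ you flag as an obstacle never actually enter, since $\|B\|_2^2=\sum_k B_k^2$ decouples across classes; and the inequality $\|G\|_{\max}\le\|G\|_F/c$ should read $\|G\|_{\max}\le\|G\|_F=\|W^*-c\Lambda^{-1}\|_F/c$, though the conclusion is unaffected.
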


\begin{proof}
Let $a=\mu^\top  \Lambda^{-1} q$, $s=\mu^\top  Wq/c$,  $r=(\mu_h+g)^\top W q/c$, 
$a_k=\mu_k^\top  \Lambda^{-1} q$, $s_k=\mu_k^\top  Wq/c$,  $r_k=(ch_k\mu_k+g_k)^\top W q/c$.
Performing the Taylor expansion on \eqref{s, population_loss}, we have
\begin{align*}
    L(W)=&\E\qth{\sum_{k=1}^c-\zeta_k(a)\log(\zeta_k(s+r))}\\
    =&\E\qth{\sum_{k=1}^c-\zeta_k(a)\log(\zeta_k(s))-\sum_{k,l=1}^c\zeta_k(a)R_{kl}(s,r)r_l}\\
    =&\widetilde{L}(W)-\E\qth{\sum_{k,l=1}^c\zeta_k(a)R_{kl}(s,r)r_l}
\end{align*}
where $|R_{kl}(s,r)|\leq \sup_y|\frac{\partial \log(\zeta_k(y))}{\partial y_l}|\sup_y|\frac{1}{\zeta_k(y)}\frac{\partial \zeta_k(y)}{\partial y_l}|=\sup_y|\delta_{kl}-\zeta_l(y)|\leq1$. Thus, we have
\begin{align*}
    &\abs{\widetilde{L}(W)-L(W)}\\
    \leq&c \sum_{l=1}^c \E\qth{|r_l|}\\
    \leq&\sum_{l=1}^c c\E\qth{|h_l\mu_l^\top W q|}+\E\qth{|g_l^\top W q|}\\
    \leq&O(1)\|W\|_{\max}\E[|h_l|]+O(1)\|W\|_{\max}\E[|(\Bar{g}_l)_i|]\\
    \leq&C_l\|W\|_{\max}N^{-1/2}
\end{align*}
where the last inequality is due to Lemma \ref{lemma: h, s}, \ref{lemma: hg, s}. $C_l$ is a constant independent of $N$ and $W$. This shows that $L(W)$ point wisely converge to $\widetilde{L}(W)$. 

According to Lemma \ref{lemma: strictly convex, l}, $\widetilde{L}(W)$ has one unique global minimizer. Considering the equation:
\begin{align*}
    \nabla \widetilde{L}(W)=\E[\sum_{k=1}^c-\softmax(\mu^\top \Lambda^{-1} q)_k\log(\softmax(\mu^\top W q /c)_k)]=0
\end{align*}
We can easily find that  $\nabla \widetilde L(c\Lambda^{-1})=0$ and $W=c\Lambda^{-1}$ is the global minimizer of $\widetilde{L}(W)$. 

Considering a compact set $R_W=\{W\,|\,\|W-2\Lambda^{-1}\|_F\leq \rho_W\}$, we have $\|W\|_{\max}\leq C_W$ for any $W\in R_W$. Here $\rho_W, C_W$ are some positive finite constants. Then, we have
\begin{align*}
    \abs{\widetilde{L}(W)-L(W)}\leq C'_lN^{-1/2},\, W\in R_W
\end{align*}
where $C'_l=C_lC_W$ is a constant independent of $N$ and $W$. This shows that, for any $W\in R_W$, $L(W)$ uniformly converge to $\widetilde{L}(W)$. 

Denote $W^*$ as the global minimizer of  $L(W)$ with prompt length $N$. Then, we show that, when $N$ is sufficiently large, $W^*\in R_W$.
We first denote  $\partial R_W=\{W\,|\,\|W-c\Lambda^{-1}\|_F= \rho_W\}$         , $\Delta=\min_{W\in \partial R_W} \widetilde{L}(W)-\widetilde L(c\Lambda^{-1})>0$.
Then, for $N\geq (4C'_l/\Delta)^2$, and for any $W\in R_W$, we have
\begin{align*}
    &\abs{\widetilde{L}(W)-L(W)}\leq \Delta/4
\end{align*}
\begin{align*}
    \min_{W\in \partial R_W} L(W)-\min_{W\in R_W}L(W)\geq \min_{W\in \partial R_W} L(W)-L(c\Lambda^{-1})\geq \Delta/2>0
\end{align*}
Since $L(W)$ is strictly convex, we have $W^*=\argmin_{W} L(W)\in R_W$.

Then, we have
\begin{align*}
    &|\widetilde L(W^*)-L(W^*)|\leq C'_l/N\\
    &|\widetilde L(c\Lambda^{-1})-L(c\Lambda^{-1})|\leq C'_l/N
\end{align*}
\begin{align*}
\widetilde L(W^*)\leq L(W^*)+C'_l/N\leq L(c\Lambda^{-1})+C'_l/N\leq \widetilde L(c\Lambda^{-1})+2C'_lN^{-1/2}
\end{align*}
According to Lemma \ref{lemma: strictly convex, l}, for $W\in R_W$, we have $\nabla^2 \widetilde{L}(W)\succ \gamma I_d$, where $\gamma$ is a positive constant independent of $N$. Thus, $\widetilde{L}(W)$ is $\gamma$-strongly convex in $R_W$. According to Lemma \ref{lemma: property of sc}, we have
\begin{align*}
    \|W^*-c\Lambda^{-1}\|_F^2\leq \frac{2}{\gamma}(\widetilde L(W^*)-\widetilde L(c\Lambda^{-1}))\leq \frac{4C'_l}{\gamma N^{1/2}}
\end{align*}
Thus, when $N\to\infty$, we have $W^*\to c\Lambda^{-1}$. Denoting $W^*=c(\Lambda^{-1}+G)$, we have $\|G\|_{\max}=O(N^{-1/4})$.
\end{proof}
    
\begin{lemma}\label{lemma: stationary point, s}
    The global minimizer of the loss function \eqref{s, population_loss} is $W^*=c(\Lambda^{-1}+G)$. We have
    \begin{align*}
        \|G\|_{\max}
        \leq& \frac{1}{N}\Bigg\|S^{-1}\E\Bigg[\sum_{k,l=1}^c \frac{\partial \zeta_k(a)}{\partial a_l}  (c\delta_{kl} -1) \mu_k \mu_l^\top \Lambda^{-1} q  q^\top +\sum_{k=1}^c \frac{\partial \zeta_k(a)}{\partial a_k} cqq^\top\\
    &+\sum_{k,l,n=1}^c \frac{\partial^2 \zeta_k(a)}{\partial a_l\partial a_n} (c\delta_{ln}-1) \mu_l^\top \Lambda^{-1} q\mu_n^\top \Lambda^{-1} q \mu_k q^\top/2 + \sum_{k,l=1}^c \frac{\partial^2 \zeta_k(a)}{\partial a_l^2}c q^\top\Lambda^{-1}q \mu_k q^\top/2\Bigg]\Bigg\|_{\max} \\
    &+o(1/N), 
\end{align*}
where $a=\mu^\top \Lambda^{-1}q$, $a_k=\mu_k^\top \Lambda^{-1}q$, $S=c^2\nabla^2 \widetilde{L}(c\Lambda^{-1})$.
Ignoring constants other than $c,N$, we have $\|G\|_{\max}\leq O(c/N)$.
\end{lemma}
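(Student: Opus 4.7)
The plan is to mirror the proof of Lemma~\ref{lemma: stationary point, l} and extend it to the multi-class softmax, while carefully tracking the cross-class correlations introduced by the multinomial distribution of $(N_1, \dots, N_c)$. Starting from the first-order optimality condition, the stationary equation for the global minimizer $W^*$ is
\begin{align*}
\E\qth{\sum_{k=1}^c (\zeta_k(P^\top W^* q / c) - \zeta_k(a)) p_k q^\top / c} = 0,
\end{align*}
where $a = \mu^\top \Lambda^{-1} q$. Writing $W^* = c(\Lambda^{-1} + G)$, the argument of $\zeta_k$ becomes $a_k + b_k$, where $b_k = (\mu_k + c h_k \mu_k + g_k)^\top G q + (c h_k \mu_k + g_k)^\top \Lambda^{-1} q$. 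I would Taylor expand each $\zeta_k(a + b)$ around $a$ up to second order with Lagrange remainder, obtaining three groups of terms: linear in $b$, quadratic in $b$, and a cubic remainder.

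Next, I would classify the resulting terms by the factors of $G$, $h_l$, and $g_l$ they contain, and evaluate their expectations using Lemmas~\ref{lemma: h, s} and \ref{lemma: hg, s}. The dominant linear-in-$G$ contribution is the pure-$\mu$ piece $\sum_l (\partial \zeta_k(a) / \partial a_l) \mu_l^\top G q \cdot \mu_k q^\top$; vectorizing $G$ exactly as in \eqref{CG, l}, this defines a linear operator whose matrix representation is $S = c^2 \nabla^2 \widetilde L(c \Lambda^{-1})$, which is positive definite by Lemma~\ref{lemma: strictly convex, s}. The $O(1/N)$ correction terms arise from (i) products of two $h_l \mu_l$ factors via $\E[h_l h_n] = (c \delta_{ln} - 1)/(Nc^2)$, producing the displayed $(c \delta_{ln} - 1) \mu_l^\top \Lambda^{-1} q \mu_n^\top \Lambda^{-1} q$ factor; (ii) products of two $g_l$ factors via $\E[(\bar g_l)_i (\bar g_n)_j] = \delta_{ln} \delta_{ij} c / N$, producing the $c q q^\top$ and $c q^\top \Lambda^{-1} q$ factors; and (iii) the quadratic pure-$\mu$ term involving $\partial^2 \zeta_k / \partial a_l \partial a_n$. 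All other contributions are $o(1/N)$: terms containing $G$ along with any $h$ or $g$ factor are bounded by $O(\|G\|_{\max}/N)$, which combined with the crude bound $\|G\|_{\max} = o(1)$ from Lemma~\ref{lemma: convergence of L, s} yields $o(1/N)$; terms with two factors of $G$ are $O(\|G\|_{\max}^2) = o(\|G\|_{\max})$; and the cubic remainder is handled by the boundedness of $\zeta_k'''$ together with the $O(N^{-3/2})$ estimates from Lemma~\ref{lemma: hg, s}. Inverting $S$ and taking max norm then produces the claimed bound, with a bootstrap step confirming $\|G\|_{\max} = O(c/N)$ once the leading correction is identified.

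The main obstacle is the combinatorial bookkeeping of the softmax cross partials combined with the multinomial covariance of $(h_1, \dots, h_c)$ and the class-wise independent Gaussian noise $(g_1, \dots, g_c)$. Unlike the binary case where a single Bernoulli variable $h$ suffices, here the off-diagonal correlation $\E[h_l h_n] = -1/(Nc^2)$ for $l \neq n$ is what produces the $(c \delta_{ln} - 1)$ prefactors in the final expression, and analogous cross-index structure propagates through the $g$ moments and through every mixed partial of $\zeta_k$ that involves distinct class indices. In addition, the $c$-dependence of the constants must be tracked carefully through each bound so that, after inverting $S$ and bounding the max norm, one recovers the final scaling $\|G\|_{\max} = O(c/N)$ claimed in the lemma.
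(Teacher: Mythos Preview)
Your proposal is correct and follows essentially the same route as the paper's proof: Taylor-expand $\zeta_k(a+b)$ to second order with Lagrange remainder, classify each resulting term by how many factors of $G$, $h$, and $g$ it carries, use Lemmas~\ref{lemma: h, s}--\ref{lemma: hg, s} for the moment bounds, identify the leading linear-in-$G$ map as $S=c^2\nabla^2\widetilde L(c\Lambda^{-1})$, and bootstrap from the crude bound of Lemma~\ref{lemma: convergence of L, s}. One small point of bookkeeping: your item (iii) as written (``quadratic pure-$\mu$ term'') would carry two factors of $G$ and is $o(\|G\|_{\max})$, not $O(1/N)$; the four displayed $O(1/N)$ terms in the lemma all come from $h$--$h$ and $g$--$g$ pairings, split between the first-order Taylor term (where one factor sits in $b_l$ and the other in $p_k$) and the second-order term (where both sit in $b_l b_n$ and $p_k$ contributes only $\mu_k$).
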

\begin{proof}
According to Lemma \ref{lemma: strictly convex, s}, the loss function $L(W)$ has a unique global minimizer $W^*$. We have
    \begin{align}\label{stationary equation, s}
    \nabla L(W^*)=\E\qth{\sum_{k=1}^c \qth{(\zeta_k(P^\top W^* q/c)-\zeta_k(\mu^\top \Lambda^{-1} q))p_k q^\top/c}}=0.
    \end{align}

Let $W^*=c(\Lambda^{-1}+G)$, $a=\mu^\top \Lambda^{-1}q$, $a_k=\mu_k^\top \Lambda^{-1}q$, $b=(\mu+c\mu_h+g)^\top G q + (c\mu_h+g)^\top \Lambda^{-1}q$, $b_k=(\mu_k+ch_k\mu_k+g_k)^\top G q + (ch_k\mu_k+g_k)^\top \Lambda^{-1}q$. The Taylor expansion of $\zeta_k(a+b)$ at point $a$ is
\begin{align*}
    \zeta_k(a+b)=\zeta_k(a)+\sum_{l=1}^c\frac{\partial \zeta_k(a)}{\partial a_l}b_l + \sum_{l, n=1}^c\frac{\partial^2 \zeta_k(a)}{\partial a_l\partial a_n}b_lb_n/2!+ \sum_{l, n, m=1}^c R_{klnm}(a,b) b_lb_nb_m/3!,
\end{align*}
where $|R_{klnm}(a,b)|\leq\sup_x|\frac{\partial^3 \zeta_k(x)}{\partial x_l\partial x_n\partial x_m}|$.
Thus, our equation (\ref{stationary equation, s}) become
\begin{align}
&\E\qth{\sum_{k, l=1}^c\frac{\partial \zeta_k(a)}{\partial a_l}b_lp_k q^\top +\sum_{k,l,n=1}^c \frac{\partial^2 \zeta(a)}{\partial a_l\partial a_n}b_lb_np_k q^\top/2! + \sum_{k, l, n, m=1}^cR_{klnm}(a,b)b_lb_nb_mp_kq^\top/3!}=0.\label{eq: multi, stationary}
\end{align}

For the first term $\sum_{k,l=1}^c\frac{\partial \zeta_k(a)}{\partial a_l}b_lp_k q^\top$, according to Lemma \ref{lemma: h, s}, we have
\begin{align}\nonumber
    &\E\qth{\sum_{k,l=1}^c\frac{\partial \zeta_k(a)}{\partial a_l}b_lp_k q^\top }\\ \nonumber
    =&\E\qth{\sum_{k,l=1}^c\frac{\partial \zeta_k(a)}{\partial a_l}\qth{ \mu_l^\top G q \mu_k q^\top + c^2h_lh_k \mu_l^\top Gq \mu_k q^\top + c^2h_lh_k \mu_l^\top \Lambda^{-1} q \mu_k q^\top +g_l^\top\Lambda^{-1}q g_k q^\top+g_l^\top G q g_k q^\top} }\\ \nonumber
     =&\E\Bigg[\sum_{k,l=1}^c \frac{\partial \zeta_k(a)}{\partial a_l} \pth{\mu_k\mu_l^\top G qq^\top + 
     (c\delta_{kl} -1) \mu_k\mu_l^\top G qq^\top/N + (c\delta_{kl} -1) \mu_k \mu_l^\top \Lambda^{-1} q  q^\top/N }\\
    &+\sum_{k=1}^c \frac{\partial \zeta_k(a)}{\partial a_k} \pth{cqq^\top/N + c \Lambda Gqq^\top/N }\Bigg].\label{eq: multi, stationary 1}
\end{align}
According to Lemma \ref{lemma: convergence of L, s}, $O(\|G\|_{\max})=O(N^{-1/4})=o(1)$, we have
\begin{align}\nonumber
&\Bigg\| \E\Bigg[\sum_{k,l=1}^c \frac{\partial \zeta_k(a)}{\partial a_l}  
     (c\delta_{kl} -1) \mu_k\mu_l^\top G qq^\top/N +\sum_{k=1}^c \frac{\partial \zeta_k(a)}{\partial a_k}  c \Lambda Gqq^\top/N \Bigg] \Bigg\|_{\max}\\
     \leq& O(\|G\|_{\max}/N)=o(1/N)\label{eq: multi, stationary 2}
\end{align}

For the second term $\sum_{k,l,n=1}^c \frac{\partial^2 \zeta_k(a)}{\partial a_l\partial a_n}b_lb_np_k q^\top /2!$, we have
\begin{align*}
    &\E\qth{\sum_{k,l,n=1}^c \frac{\partial^2 \zeta_k(a)}{\partial a_l\partial a_n}b_lb_np_k q^\top /2!}\\
    =&\frac{1}{2}\E\Biggr[\sum_{k,l,n=1}^c \frac{\partial^2 \zeta_k(a)}{\partial a_l\partial a_n}\Biggr(\underbrace{\sum_{\phi_1\in\{\mu_l, ch_l\mu_l,g_l\}, \phi_2\in\{\mu_n, ch_n\mu_n,g_n\} } \phi_1^\top Gq \phi_2^\top Gq p_kq^\top}_{(i)}\\
    &+\underbrace{\sum_{\phi_1\in\{\mu_l, ch_l\mu_l,g_l\}, \phi_2\in\{ ch_n\mu_n,g_n\} } 2\phi_1^\top Gq \phi_2^\top \Lambda^{-1}q p_kq^\top}_{(ii)}\\
    &+\underbrace{\sum_{\phi_1\in\{ch_l\mu_l,g_l\}, \phi_2\in\{ ch_n\mu_n,g_n\} } \phi_1^\top \Lambda^{-1}q \phi_2^\top \Lambda^{-1}q p_kq^\top}_{(iii)}\Biggr)\Biggr].
\end{align*}
For terms $(i)$ having two $G$, their max norms are at most smaller than $O(\|G\|_{\max}^2)$. For terms $(ii)$ having one $G$, define $\Bar{g}_l=\Lambda^{-1/2}g_l$, these terms must contain $n_{1j}$ number of $h_j$ and $n_{2ji}$ number  of $(\Bar{g}_j)_i$, we have 
$\sum_{j\in[c], i\in[d]}n_{1j}+n_{2ji}=n_t, n_t=1,2,3$. According to Lemma \ref{lemma: hg, s}, we know that for $n_t=1,2,3$, 
\begin{align*}
    \E[\prod_{j\in[c],i\in[d]}h_j^{n_{1j}} (\Bar{g}_j)_i^{n_{2ji}} ]=O(N^{-1})
\end{align*}
Thus, the max norm of expectations of terms in (ii) are at most smaller than $O(\|G\|_{\max}N^{-1})$.
Therefore, for terms $(i), (ii)$, we have
\begin{align}
    &\|\E[(i)]\|_{\max}\leq O(\|G\|_{\max}^2)=o(\|G\|_{\max})\label{eq: multi, stationary 3}\\
    &\|\E[(ii)]\|_{\max}\leq O(\|G\|_{\max}/N)=o(1/N)\label{eq: multi, stationary 4}
\end{align}
For terms $(iii)$  without $G$, we have
\begin{align}\nonumber
    &\|\E[(iii)]\|_{\max}\\ \nonumber
    =&\Biggr\|\E\Biggr[\sum_{k,l,n=1}^c \frac{\partial^2 \zeta_k(a)}{\partial a_l\partial a_n}c^2h_lh_n\mu_l^\top \Lambda^{-1} q\mu_n^\top \Lambda^{-1}q \mu_k q^\top /2 + \sum_{k,l=1}^c \frac{\partial^2 \zeta_k(a)}{\partial a_l^2}g_l^\top \Lambda^{-1} q g_l^\top \Lambda^{-1} q \mu_k q^\top /2\\ \nonumber
    &+\sum_{k,l=1}^c \frac{\partial^2 \zeta_k(a)}{\partial a_l \partial a_k}ch_l\mu_l^\top \Lambda^{-1} q g_k^\top \Lambda^{-1} q g_k q^\top +\sum_{k,l,n=1}^c \frac{\partial^2 \zeta_k(a)}{\partial a_l\partial a_n}c^3h_lh_nh_k\mu_l^\top \Lambda^{-1} q\mu_n^\top \Lambda^{-1}q \mu_k q^\top /2\Biggr]\Biggr\|_{\max}\\ \nonumber
    \leq&\frac{1}{2N}\Biggr\|\E\qth{\sum_{k,l,n=1}^c \frac{\partial^2 \zeta_k(a)}{\partial a_l\partial a_n} (c\delta_{ln}-1) \mu_l^\top \Lambda^{-1} q\mu_n^\top \Lambda^{-1} q \mu_k q^\top + \sum_{k,l=1}^c \frac{\partial^2 \zeta_k(a)}{\partial a_l^2}c q^\top\Lambda^{-1}q \mu_k q^\top}\Biggr\|_{\max}\\
    &+O(1/N^2)\label{eq: multi, stationary 5}
\end{align}
where the last inequity is due to Lemma \ref{lemma: h, s}, \ref{lemma: hg, s}.

For the third term $\sum_{k,l,n, m=1}^c R_{klnm}(a,b) b_lb_nb_mp_k q^\top/3!$, we have
\begin{align*}
 &\Biggr\|\E\qth{\sum_{k,l,n, m=1}^c R_{klnm}(a,b) b_lb_nb_mp_k q^\top/3! }\Biggr\|_{\max}\\
 \leq& O(1)\max_{l,m\in[d]}\E[\sum_{k_1, k_2, k_3, k_4\in[c]} \abs{b_{k_1}b_{k_2}b_{k_3}(p_{k_4})_lq_m}]\\
 \leq&O(1)\E\sum_{k_1, k_2, k_3, k_4\in[c]}\Biggr[\underbrace{\sum_{\phi_1\in\{\mu_{k_1}, ch_{k_1}\mu_{k_1},g_{k_1}\}, \phi_2\in\{\mu_{k_2}, ch_{k_2}\mu_{k_2},g_{k_2}\}, \phi_3\in\{\mu_{k_3}, ch_{k_3}\mu_{k_3},g_{k_3}\} } \phi_1^\top Gq \phi_2^\top Gq \phi_3^\top Gq (p_{k_4})_lq_m}_{(*)}\\
    &+\underbrace{\sum_{\phi_1\in\{\mu_{k_1}, ch_{k_1}\mu_{k_1},g_{k_1}\}, \phi_2\in\{\mu_{k_2}, ch_{k_2}\mu_{k_2},g_{k_2}\}, \phi_3\in\{ ch_{k_3}\mu_{k_3},g_{k_3}\} } \phi_1^\top Gq \phi_2^\top Gq \phi_3^\top \Lambda^{-1}q (p_{k_4})_lq_m}_{(*)}\\
    &+\underbrace{\sum_{\phi_1\in\{\mu_{k_1}, ch_{k_1}\mu_{k_1},g_{k_1}\}, \phi_2\in\{ ch_{k_2}\mu_{k_2},g_{k_2}\}, \phi_3\in\{ ch_{k_3}\mu_{k_3},g_{k_3}\} } \phi_1^\top Gq \phi_2^\top \Lambda^{-1}q \phi_3^\top \Lambda^{-1}q (p_{k_4})_lq_m}_{(**)}\\
    &+\underbrace{\sum_{\phi_1\in\{ ch_{k_1}\mu_{k_1},g_{k_1}\}, \phi_2\in\{ ch_{k_2}\mu_{k_2},g_{k_2}\}, \phi_3\in\{ ch_{k_3}\mu_{k_3},g_{k_3}\} } \phi_1^\top \Lambda^{-1}q \phi_2^\top \Lambda^{-1}q \phi_3^\top \Lambda^{-1}q (p_{k_4})_lq_m}_{(***)}\Biggr].
\end{align*}
For terms in $(*)$ having two or three $G$, these terms' expected absolute values are at most smaller than $O(\|G\|_{\max}^2)$. For terms in $(**)$ having one $G$, these terms must contain $n_{1j}$ number of $h_j$ and $n_{2ji}$ number  of $(\Bar{g}_j)_i$, we have 
$\sum_{j\in[c], i\in[d]}n_{1j}+n_{2ji}=n_t, n_t=2,3,4$.  According to Lemma \ref{lemma: hg, s}, for $n_t=2,3,4$, we have
\begin{align*}
    \E[\prod_{j\in[c],i\in[d]}|h_k^{n_{1k}} (\Bar{g}_j)_i^{n_{2ji}} |]=O(N^{-n_t/2})=O(N^{-1})
\end{align*}
Thus, these term's expected absolute values are at most smaller than $ O(\|G\|_{\max}N^{-1})$.
For terms in $(***)$ without $G$, these terms must contain $n_{1j}$ number of $h_j$ and $n_{2ji}$ number  of $(\Bar{g}_j)_i$, we have 
$\sum_{j\in[c], i\in[d]}n_{1j}+n_{2ji}=n_t, n_t=3,4$.  
According to Lemma \ref{lemma: hg, s}, for $n_t=3,4$, we have
\begin{align*}
    \E[\prod_{j\in[c],i\in[d]}|h_k^{n_{1k}} (\Bar{g}_j)_i^{n_{2ji}} |]=O(N^{-n_t/2})=O(N^{-3/2})
\end{align*}
Thus, these term's expected absolute values are at most smaller than $ O(N^{-3/2})$.
Therefore, we have
\begin{align}\nonumber
 &\Biggr\|\E\qth{\sum_{k,l,n, m=1}^c R_{klnm}(a,b) b_lb_nb_mp_k q^\top/3! }\Biggr\|_{\max}\\ \nonumber
 \leq&O(1)\max_{l,m\in[d]}\E[\sum_{k_1, k_2, k_3, k_4\in[c]} \abs{b_{k_1}b_{k_2}b_{k_3}(p_{k_4})_lq_m}]\\ \nonumber
 \leq& O(\|G\|^2_{max})+O(\|G\|_{\max}N^{-1})+O(N^{-3/2})\\
 \leq& o(\|G\|_{\max})+o(1/N).\label{eq: multi, stationary 6}
\end{align}

Moreover, we have
\begin{align}\nonumber
    &\Bigg\{\E\qth{\sum_{k, l=1}^c\frac{\partial \zeta_k(a)}{\partial a_l} \mu_k\mu_l^\top G qq^\top }\Bigg\}_{ij}\\ \nonumber
    =&\Bigg\{\E\qth{\sum_{k=1}^c \zeta_k(a)(1-\zeta_k(a)) \mu_k\mu_k^\top G qq^\top-\sum_{k, l=1, k\neq l}^c \zeta_k(a)\zeta_l(a) \mu_k\mu_l^\top G qq^\top}\Bigg\}_{ij}\\ \nonumber
    =&\Bigg\{\E\qth{\sum_{k, l=1, k\neq l}^c \zeta_k(a)\zeta_l(a) \mu_k(\mu_k-\mu_l)^\top G qq^\top}\Bigg\}_{ij}\\ \nonumber
    =&\Bigg\{\E\qth{\sum_{k=2}^c\sum_{l=1}^{k-1} \zeta_k(a)\zeta_l(a) (\mu_k-\mu_l)(\mu_k-\mu_l)^\top G qq^\top}\Bigg\}_{ij}\\
    =&\sum_{n,m=1}^ds_{ijnm}G_{nm},\label{CG, s}
\end{align}
where $s_{ijnm}=\E\qth{\sum_{k=2}^c\sum_{l=1}^{k-1} \zeta_k(a)\zeta_l(a)(\mu_k-\mu_l)_i(\mu_k-\mu_l)_nq_mq_j}$.
We vectorize $G$ as $\Vec (G)_i=G_{t_1(i), t_2(i)}$. Define $S\in \R^{d^2\times d^2}$, where $S_{ij}=s_{t_1(i), t_2(i), t_1(j), t_2(j)}=\E\qth{\sum_{k=2}^c\sum_{l=1}^{k-1} \zeta_k(a)\zeta_l(a)(\mu_k-\mu_l)_{t_1(i)}q_{t_2(i)}(\mu_k-\mu_l)_{t_1(j)}q_{t_2(j)}}$,
(\ref{CG, s}) can be expressed as 
\begin{align}
        \E\qth{\sum_{k, l=1}^c\frac{\partial \zeta_k(a)}{\partial a_l} \mu_k\mu_l^\top Gqq^\top }=SG. \label{eq: multi, stationary 7}
\end{align}
Note that $S=c^2\nabla^2 \widetilde{L}(c\Lambda^{-1})$. According to Lemma \ref{lemma: strictly convex, s}, $S$ is positive definite. Thus, combining \eqref{eq: multi, stationary}, \eqref{eq: multi, stationary 1}, \eqref{eq: multi, stationary 2}, \eqref{eq: multi, stationary 3}, \eqref{eq: multi, stationary 4},  \eqref{eq: multi, stationary 5},  \eqref{eq: multi, stationary 6}, \eqref{eq: multi, stationary 7}, we have
\begin{align*}
        &\|G\|_{\max}\\
        \leq& \frac{1}{N}\Bigg\|S^{-1}\E\Bigg[\sum_{k,l=1}^c \frac{\partial \zeta_k(a)}{\partial a_l}  (c\delta_{kl} -1) \mu_k \mu_l^\top \Lambda^{-1} q  q^\top +\sum_{k=1}^c \frac{\partial \zeta_k(a)}{\partial a_k} cqq^\top\\
    &+\sum_{k,l,n=1}^c \frac{\partial^2 \zeta_k(a)}{\partial a_l\partial a_n} (c\delta_{ln}-1) \mu_l^\top \Lambda^{-1} q\mu_n^\top \Lambda^{-1} q \mu_k q^\top/2 + \sum_{k,l=1}^c \frac{\partial^2 \zeta_k(a)}{\partial a_l^2}c q^\top\Lambda^{-1}q \mu_k q^\top/2\Bigg]\Bigg\|_{\max}\\
    &+o(1/N).
\end{align*}
Ignoring constants other than $c,N$, we have $\|G\|_{\max}\leq O(c/N)$.
\end{proof}

\begin{lemma}\label{lemma: l smooth, s}
    The loss function \eqref{b, population_loss} is $l$-smooth, where $l\leq \frac{1}{c^2}\sum_{k=2}^c\sum_{l=1}^{k-1}\sum_{i\in[d^2]}\E[((p_k-p_l)_{t_1(i)}q_{t_2(i)})^2]$.
\end{lemma}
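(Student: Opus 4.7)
The plan is to follow the same template used in the binary case (Lemma \ref{lemma: l smooth, l}), with the Hessian expression already derived inside the proof of Lemma \ref{lemma: strictly convex, s} providing the starting point. Specifically, I will reuse the identity
\begin{align*}
(\nabla^2 L(W))_{ij} = \E\qth{\frac{1}{c^2}\sum_{k=2}^c\sum_{l=1}^{k-1}\zeta_k\pth{\tfrac{P^\top W q}{c}}\zeta_l\pth{\tfrac{P^\top W q}{c}}(p_k-p_l)_{t_1(i)} q_{t_2(i)}(p_k-p_l)_{t_1(j)}q_{t_2(j)}}
\end{align*}
and then bound the largest eigenvalue of $\nabla^2 L(W)$ by a uniform-in-$W$ constant $l$.

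First, I would pick an arbitrary $z\in\R^{d^2}$ and write $z^\top\nabla^2 L(W)z$ by plugging in the Hessian entries, recognizing the double sum over $a,b\in[d^2]$ as a perfect square to get
\begin{align*}
z^\top\nabla^2 L(W)z = \E\qth{\frac{1}{c^2}\sum_{k=2}^c\sum_{l=1}^{k-1}\zeta_k\zeta_l\pth{\sum_{a\in[d^2]}z_a(p_k-p_l)_{t_1(a)}q_{t_2(a)}}^2},
\end{align*}
exactly as was done in Lemma \ref{lemma: strictly convex, s}. Second, I would bound the softmax product using the trivial estimate $\zeta_k\zeta_l\le 1$, valid pointwise for all $P,q,W$; this is where the binary proof used $\sigma(1-\sigma)\le 1/4$, but since in the multi-class case we only need a finite upper bound independent of $W$, using $1$ suffices (a tighter constant such as $1/4$ from $\zeta_k\zeta_l\le 1/4$ is also available but immaterial).

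Third, I would apply the Cauchy--Schwarz inequality to the inner squared linear functional,
\begin{align*}
\pth{\sum_{a\in[d^2]}z_a(p_k-p_l)_{t_1(a)}q_{t_2(a)}}^2 \leq \|z\|_2^2\sum_{i\in[d^2]}\pth{(p_k-p_l)_{t_1(i)}q_{t_2(i)}}^2,
\end{align*}
then interchange expectation and the finite sums to obtain
\begin{align*}
z^\top\nabla^2 L(W)z \leq \|z\|_2^2\cdot\frac{1}{c^2}\sum_{k=2}^c\sum_{l=1}^{k-1}\sum_{i\in[d^2]}\E\qth{\pth{(p_k-p_l)_{t_1(i)}q_{t_2(i)}}^2}.
\end{align*}
Since this inequality holds for every $z$ and the right-hand coefficient of $\|z\|_2^2$ is independent of $W$, it yields $\nabla^2 L(W)\preceq l\,I$ with $l$ as stated.

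There is no real obstacle here: the softmax-product bound is straightforward (unlike the lower bound in Lemma \ref{lemma: strictly convex, s}, which required a careful choice of a full-measure region), and Cauchy--Schwarz handles the rest. The only thing to be careful about is that the expectation $\E[((p_k-p_l)_{t_1(i)}q_{t_2(i)})^2]$ is indeed finite, which follows because $p_k-p_l$ and $q$ have finite second moments under the Gaussian-mixture model in Assumption \ref{assume: training data distribution, multi}; this justifies the interchange of expectation with the finite sums and ensures $l<\infty$.
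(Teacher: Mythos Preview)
Your proposal is correct and follows essentially the same approach as the paper's proof: recall the Hessian expression from Lemma~\ref{lemma: strictly convex, s}, bound the softmax product $\zeta_k\zeta_l\le 1$, and apply Cauchy--Schwarz to the squared linear functional to obtain the $W$-independent upper bound. The paper's proof is slightly terser (it folds the softmax bound and Cauchy--Schwarz into a single step labeled ``(a)''), but the argument is identical.
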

\begin{proof}
    The Hessian matrix of the loss function is 
    \begin{align*}
        (\nabla^2 L(W))_{ij}= \E\qth{\sum_{k=2}^c\sum_{l=1}^{k-1}\softmax(P^\top W q/c)_k\softmax(P^\top W q/c)_l (p_{k}-p_{l})_{t_1(i)} q_{t_2(i)}(p_{k}-p_{l})_{t_1(j)}q_{t_2(j)}/c^2}.
    \end{align*}
Considering $z\in \R^{d^2}$ such that $z\neq 0$, we have
    \begin{align*}
        &z^\top\nabla^2 L(W)z\\
        =&\E \qth{\frac{1}{c^2}\sum_{k=2}^c\sum_{l=1}^{k-1}\softmax(P^\top W q/c)_k\softmax(P^\top W q/c)_l \pth{\sum_{a\in[d^2]} z_a  (p_{k}-p_{l})_{t_1(a)} q_{t_2(a)}}^2}\\
        \overset{(a)}{\leq}&\frac{1}{c^2}\|z\|_2^2\sum_{k=2}^c\sum_{l=1}^{k-1}\sum_{i\in[d^2]}\E[((p_k-p_l)_{t_1(i)}q_{t_2(i)})^2]
    \end{align*}
where $(a)$ is due to the Cauchy–Schwarz inequality. Thus, $\nabla^2 L(W)\preceq lI_d$ and $L(W)$ is $l$-smooth, where $l$ is a constant smaller than $\frac{1}{c^2}\sum_{k=2}^c\sum_{l=1}^{k-1}\sum_{i\in[d^2]}\E[((p_k-p_l)_{t_1(i)}q_{t_2(i)})^2]$.
\end{proof}

\begin{theorem}[Formal statement of Theorem \ref{thm: multi_train}]
\label{thm: multi_train, formal}
The following statements hold.
\begin{enumerate}[leftmargin=*]\itemsep=0pt
\item[(1)]     Optimizing training loss $L(W)$ \eqref{s, population_loss} with  training prompt length $N$ via gradient descent $W^{t+1}=W^t-\eta \nabla L(W^t)$, we have for any $t$
    \begin{align*}
        \|W^t-W^*\|_F^2\leq \exp(-t/\kappa)\|W^0-W^*\|_F^2,
    \end{align*}
    where $W^0$ is the initial parameter and $W^*$ is the global minimizer of $L(W)$,  $\kappa=l/\alpha$. $\alpha, l$ are constants such that 
    \begin{align} 
        0<\alpha \leq \lambda_{\min} (\nabla^2 L(W)) \leq \lambda_{\max} (\nabla^2 L(W))\leq l, \text{ for all  } \, W\in R_W,
    \end{align}
where $R_W=\{W\in \R^{d\times d} \,|\, \|W-W^*\|_F\leq \|W^0-W^*\|_F\}$. 

\item[(2)] Denoting  $W^*=c(\Lambda^{-1}+G)$, we have
\begin{align*}\nonumber
        &\|G\|_{\max}\leq \frac{1}{N}\Bigg\|S^{-1}\E\Bigg[\sum_{k,l=1}^c \frac{\partial \zeta_k(a)}{\partial a_l}  (c\delta_{kl} -1) \mu_k \mu_l^\top \Lambda^{-1} q  q^\top +\sum_{k=1}^c \frac{\partial \zeta_k(a)}{\partial a_k} cqq^\top \nonumber \\
    &+ \frac{1}{2}\sum_{k,l,n=1}^c \frac{\partial^2 \zeta_k(a)}{\partial a_l\partial a_n} (c\delta_{ln}-1) \mu_l^\top \Lambda^{-1} q\mu_n^\top \Lambda^{-1} q \mu_k q^\top  + \frac{1}{2} \sum_{k,l=1}^c \frac{\partial^2 \zeta_k(a)}{\partial a_l^2}c q^\top\Lambda^{-1}q \mu_k q^\top \Bigg]\Bigg\|_{\max} \nonumber\\
    &+o(1/N)\\
    &=O(c/N)
\end{align*} 
where  $S=c^2\nabla^2 \widetilde{L}(2\Lambda^{-1})$, $\widetilde{L}(2\Lambda^{-1})=\lim_{N\to\infty} L(2\Lambda^{-1})$.  The expectation is taken over $\mu_{\tau}\sim \dis^m_\Omega(\Lambda)$, $x_{\tau,\query}\sim \dis^m_x(\mu_{\tau},  \Lambda)$.

\item[(3)] After $T\geq2\kappa \log (N\cdot\|W^0-W^*\|_F)$ gradient steps, denoting $\widehat W$ as the final model, we have
    \begin{align}
        \widehat W=c(\Lambda^{-1}+\widehat{G}),
    \end{align}
    where $\|\widehat{G}\|_{\max}=O(c/N)$.

\end{enumerate}
\end{theorem}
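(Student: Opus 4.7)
The plan is to mirror the strategy used for Theorem \ref{thm: binary_train}, handling the three parts in the order (2) $\to$ (1) $\to$ (3), since the smoothness/strong-convexity constants in (1) depend on the compact neighborhood containing $W^*$, whose location we first need to pin down via (2). First, I would invoke Lemma \ref{lemma: strictly convex, s} to conclude that $L(W)$ is strictly convex on $\R^{d\times d}$ and admits a unique global minimizer $W^*$ characterized by $\nabla L(W^*)=0$. Then, by Lemma \ref{lemma: convergence of L, s}, as $N\to\infty$ the loss $L(W)$ converges pointwise (and uniformly on compacts) to $\widetilde L(W)$ whose unique minimizer is $c\Lambda^{-1}$, which yields the preliminary bound $W^*=c(\Lambda^{-1}+G)$ with $\|G\|_{\max}=O(N^{-1/4})$. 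This loose bound is only used to guarantee $W^*$ lies in a fixed compact neighborhood of $c\Lambda^{-1}$ so that subsequent Taylor expansions are valid.

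For the sharp bound in part (2), I would Taylor expand $\nabla L(W^*)=0$ around $W=c\Lambda^{-1}$ as in Lemma \ref{lemma: stationary point, s}. Writing $W^*=c(\Lambda^{-1}+G)$ and $b_k=(\mu_k+ch_k\mu_k+g_k)^\top Gq+(ch_k\mu_k+g_k)^\top\Lambda^{-1}q$, the stationarity condition becomes a polynomial identity in $G$, $h$, and $g$. The dominant linear-in-$G$ piece collapses (after using $\sum_k \partial\zeta_k/\partial a_l=0$) to $SG$ where $S=c^2\nabla^2\widetilde L(c\Lambda^{-1})$ is positive definite by Lemma \ref{lemma: strictly convex, s}. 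All other contributions are of two types: (a) terms of order $O(\|G\|_{\max}^2)$ or $O(\|G\|_{\max}/N)$, which are negligible; and (b) purely stochastic remainders $O(1/N)$ coming from the quadratic terms $h_lh_n$ and $g_l^\top\Lambda^{-1}q\,g_n^\top\Lambda^{-1}q$. Inverting $S$ and rearranging gives exactly the stated closed-form bound; ignoring all constants other than $c,N$ leaves $\|G\|_{\max}=O(c/N)$.

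Once (2) is established, I fix the compact set $R_W=\{W:\|W-W^*\|_F\leq\|W^0-W^*\|_F\}$ and apply Lemma \ref{lemma: strictly convex, s} on $R_W$ to obtain $\nabla^2 L(W)\succeq \alpha I$ for some $\alpha>0$, together with the $l$-smoothness from Lemma \ref{lemma: l smooth, s}. A standard descent argument (gradient descent with stepsize $1/l$ keeps iterates inside $R_W$) combined with Lemma \ref{lemma: sc, gd} yields the linear rate $\|W^t-W^*\|_F^2\leq\exp(-t/\kappa)\|W^0-W^*\|_F^2$ with $\kappa=l/\alpha$, proving (1). Part (3) is then immediate: choosing $T\geq 2\kappa\log(N\|W^0-W^*\|_F)$ makes $\|W^T-W^*\|_{\max}\leq\|W^T-W^*\|_F\leq 1/N$, so writing $\widehat W=c(\Lambda^{-1}+\widehat G)$ with $\widehat G=G+(W^T-W^*)/c$ gives $\|\widehat G\|_{\max}\leq\|G\|_{\max}+O(1/N)=O(c/N)$.

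The main obstacle, as the paper flags, is the rigorous bookkeeping in part (2): unlike the binary case where $b$ is a scalar and the Taylor expansion involves only $\sigma'$ and $\sigma''$ of one scalar argument, here $b=(b_1,\dots,b_c)$ and the second-order expansion of $\zeta_k$ produces cross terms $\partial^2\zeta_k/\partial a_l\partial a_n$ for all triples $(k,l,n)$. Bounding their expectations requires the sharpened multinomial/Gaussian moment estimates of Lemmas \ref{lemma: h, s} and \ref{lemma: hg, s} (in particular $\E[h_ih_j]=-1/(Nc^2)$ for $i\neq j$ and the joint moments of $h$ and $\bar g$), and the third-order Lagrange remainder must be shown to be $o(\|G\|_{\max})+o(1/N)$ by separately classifying terms by how many factors of $G$, $h$, and $g$ they contain. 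Once this combinatorial accounting is completed, invertibility of $S$ closes the argument and the explicit $c$-dependence in $\|G\|_{\max}=O(c/N)$ falls out from counting the $O(c)$ summands in the right-hand side.
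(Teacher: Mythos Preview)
Your proposal is correct and follows essentially the same route as the paper: first establish existence and the crude $O(N^{-1/4})$ localization via Lemmas \ref{lemma: strictly convex, s} and \ref{lemma: convergence of L, s}, then obtain the sharp $\|G\|_{\max}=O(c/N)$ by Taylor-expanding the stationarity equation as in Lemma \ref{lemma: stationary point, s} (using the multinomial/Gaussian moment estimates of Lemmas \ref{lemma: h, s}--\ref{lemma: hg, s} for the cross terms), and finally combine strong convexity on $R_W$ with the smoothness bound of Lemma \ref{lemma: l smooth, s} and Lemma \ref{lemma: sc, gd} to get the linear rate and part (3). The only cosmetic difference is that the paper extracts the $SG$ term by directly rewriting $\sum_{k,l}\partial_l\zeta_k\,\mu_k\mu_l^\top$ as $\sum_{k>l}\zeta_k\zeta_l(\mu_k-\mu_l)(\mu_k-\mu_l)^\top$ rather than invoking the identity $\sum_k\partial_l\zeta_k=0$, but the substance is identical.
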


\subsection{Proof of Theorem \ref{thm: multi_train}}
\label{proof of softmax_train}
\begin{proof}
According to Lemma \ref{lemma: stationary point, s}, the global minimizer of $L(W)$ is $W^*=c(\Lambda^{-1}+G)$, where 
\begin{align*}
        &\|G\|_{\max}\\
        \leq& \frac{1}{N}\Bigg\|S^{-1}\E\Bigg[\sum_{k,l=1}^c \frac{\partial \zeta_k(a)}{\partial a_l}  (c\delta_{kl} -1) \mu_k \mu_l^\top \Lambda^{-1} q  q^\top +\sum_{k=1}^c \frac{\partial \zeta_k(a)}{\partial a_k} cqq^\top\\
    &+\sum_{k,l,n=1}^c \frac{\partial^2 \zeta_k(a)}{\partial a_l\partial a_n} (c\delta_{ln}-1) \mu_l^\top \Lambda^{-1} q\mu_n^\top \Lambda^{-1} q \mu_k q^\top/2 + \sum_{k,l=1}^c \frac{\partial^2 \zeta_k(a)}{\partial a_l^2}c q^\top\Lambda^{-1}q \mu_k q^\top/2\Bigg]\Bigg\|_{\max}\\
    &+o(1/N).
\end{align*}
Ignoring constants other than $c,N$, we have $\|G\|_{\max}\leq O(c/N)$.

Define $R_W=\{W\in \R^{d\times d} \,|\, \|W-W^*\|_F\leq \|W^0-W^*\|_F\}$, and $R_W$ is a compact set. Then, according to Lemma \ref{lemma: strictly convex, s}, for $W\in R_W $, we have $\nabla^2 L(W)\succeq \alpha I_d$. Here $\alpha>0$ is a positive constant number.
Thus, $L(W)$ is $\alpha$-strongly convex in $R_W$. Moreover, according to Lemma \ref{lemma: l smooth, s}, $L(W)$ is $l$-smooth. Then according to Lemma \ref{lemma: sc, gd}, applying gradient descent with $\eta=1/l$,  for any $t\geq 1$, we have
    \begin{align*}
        \|W^t-W^*\|_F^2\leq \exp(-t/\kappa)\cdot\|W^0-W^*\|_F^2,
    \end{align*}
    where $\kappa= l/\alpha$.

After $T\geq2\kappa \log (N\cdot\|W^0-W^*\|_F)$ gradient steps, we have $\widehat W=W^T=c(\Lambda^{-1}+G+H^T/c)=2(\Lambda^{-1}+\widehat{G})$, where $\widehat{G}=G+H^T/c$, $\|H^T\|_{\max}\leq \exp(-T/\kappa)\cdot\|W^0-W^*\|_F^2\leq 1/N$.
Thus, $\|\widehat{G}\|_{\max} \leq \|G\|_{\max}+\|H^T\|_{\max}=O(c/N)$.
\end{proof}

\section{In-context inference of multi-class classification}
\label{app: multi_test}
\subsection{Notations}
In this section, we use the following notations.
We denote 
$\mu=(\mu_1,\mu_2,\dots,\mu_c)$,
$q=x_\query$.
Define $p_k=\frac{c}{M} \sum_{i=1}^M (y_{i})_k x_{i}$, and define $P=(p_1,p_2, \dots, p_c)\in \R^{d\times c}$. We have $P^\top=\frac{c}{M} \sum_{i=1}^M y_i x_{\tau,i}^\top\in\R^{c\times d}$.
Since with probability $\prob{y_{\tau,i}=\textbf{e}_k}=1/c$, $x_{\tau,i}=\mu_{k}+v_i$,  where $v_i\sim\normal(0,\Lambda)$, we have 
$p_k=\frac{c}{M} \sum_{i=1}^M (y_{\tau,i})_k x_{\tau,i}=cM_k\mu_k/M+g_k$, where $g_k=\frac{c}{M}\sum_{i\in\{i|y_{\tau,i}=\textbf{e}_k\}}v_i$, $g_k\sim\normal(0,c^2 M_k \Lambda/M^2)$ and $(M_1, M_2, \dots, M_c)\sim\Multinomial(M, 1/c)$.
Defining $h_k=M_k/M-1/c$, we have $M_k/M=1/c+h_k$ and $p_k=\mu_k+ch_k\mu_k +g_k$.

\begin{theorem}[Formal statement of Theorem \ref{thm: multi_test}]
    Let  $\widehat y_\query$ be the prediction of the trained transformer with parameters $\widehat W$ in \eqref{trained para, m} and $P_\test$ satisfying Assumption
 \ref{assume: test prompt distribution, multi}, and let $y_\query\sim \dis^m_{y| x_\query}(\mu, \Lambda)$. Then, for the inference error defined in \eqref{test error}, we have
    \begin{align*}
         &\E[\Delta(y_\query, \widehat y_\query)]\\
         \leq&\max_{k\in[c]}\sth{\sum_{l=1}^c\frac{\partial \zeta_k(a)}{\partial a_l}\qth{\|\widehat{G}\|_{\max}\sum_{i,j\in[d]}|(\mu_l)_iq_j|+\frac{1}{M^{1/2}}\pth{\sqrt{c(1-1/c)}|\mu_l^\top \Lambda^{-1} q|+\sqrt{c}\sum_{i,j\in[d]}|\Lambda^{-1/2}_{ij}q_j|}}}\\
    &+o\pth{\frac{1}{N}+\frac{1}{\sqrt{M}}},
    \end{align*}
where $a=\mu^\top  \Lambda^{-1} q$, $a_k=\mu_k^\top \Lambda^{-1}q$. The expectation is taken over $\{x_i, y_i\}_{i=1}^M\iid \dis^m(\mu, \Lambda)$.
\end{theorem}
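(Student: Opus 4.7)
My plan is to mirror the binary-case argument (Theorem \ref{thm: binary_test}) but replace the sigmoid expansion with a Taylor expansion of the softmax vector field, then control each component via the moment lemmas \ref{lemma: h, s} and \ref{lemma: hg, s}. Since $\widehat y_\query$ is sampled conditionally on $\widehat y_\out$, we have $\prob{\widehat y_\query=\mathbf{e}_k}=(\widehat y_\out)_k=\zeta_k(P^\top(\Lambda^{-1}+\widehat G)q)$, while $\prob{y_\query=\mathbf{e}_k\mid x_\query}=\zeta_k(a)$ with $a=\mu^\top\Lambda^{-1}q$. Writing $P=\mu+c\mu_h+g$ exactly as in Appendix \ref{app: multi_train} and letting $b_l=(\mu_l+ch_l\mu_l+g_l)^\top\widehat G q + (ch_l\mu_l+g_l)^\top\Lambda^{-1}q$, I get $P^\top(\Lambda^{-1}+\widehat G)q=a+b$. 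Therefore
\begin{align*}
\E[\Delta(y_\query,\widehat y_\query)]=\E\Big[\max_{k\in[c]}|\zeta_k(a+b)-\zeta_k(a)|\Big]\le\max_{k\in[c]}\E\big[|\zeta_k(a+b)-\zeta_k(a)|\big],
\end{align*}
and I would Taylor-expand $\zeta_k(a+b)=\zeta_k(a)+\sum_l \tfrac{\partial\zeta_k(a)}{\partial a_l}b_l+\tfrac12\sum_{l,n}\tfrac{\partial^2\zeta_k(\xi)}{\partial a_l\partial a_n}b_l b_n$ with a Lagrange remainder, so that the desired bound splits into a linear-in-$b$ piece and a quadratic-in-$b$ piece whose remainder derivatives are uniformly bounded.

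For the linear piece I would condition on $(\mu,q)$ so that $\tfrac{\partial \zeta_k(a)}{\partial a_l}$ comes outside the inner expectation, then bound $\E|b_l|$ component-wise. The three types of terms in $b_l$ are treated exactly as in the proof of Theorem \ref{thm: binary_test}. The $\widehat G$-terms contribute $\|\widehat G\|_{\max}\sum_{i,j}|(\mu_l)_iq_j|$ after a triangle inequality plus the higher-order $h_l,g_l$ pieces absorbed into $o(1/N+1/\sqrt M)$ via Lemmas \ref{lemma: h, s}, \ref{lemma: hg, s}. The pure $h_l\mu_l^\top\Lambda^{-1}q$ term contributes $\E|h_l|\cdot|\mu_l^\top\Lambda^{-1}q|\le M^{-1/2}\sqrt{c(1-1/c)}\,|\mu_l^\top\Lambda^{-1}q|$, using $\E[h_l^2]=\tfrac{1}{M}(\tfrac1c-\tfrac1{c^2})$. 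The pure $g_l^\top\Lambda^{-1}q$ term is handled by rewriting $g_l=\Lambda^{1/2}\bar g_l$ with $\bar g_l\sim\normal(0,(c/M+c^2 h_l/M)I_d)$; the leading contribution $c/M$ gives $\E|\bar g_l^\top\Lambda^{-1/2}q|\le\sqrt{c/M}\sum_{i,j}|\Lambda^{-1/2}_{ij}q_j|$ (after triangle inequality and $\E|\mathsf N(0,1)|=\sqrt{2/\pi}$, or just $\sqrt{\E[(\bar g_l)_i^2]}=\sqrt{c/M}$), with the $h_l$-contaminated piece absorbed in $o(M^{-1/2})$.

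The quadratic piece $\tfrac12\E\big|\sum_{l,n}\tfrac{\partial^2\zeta_k(\xi)}{\partial a_l\partial a_n}b_l b_n\big|$ is the main technical obstacle, because $b_l b_n$ mixes $\widehat G$, $h$, and $g$ from different coordinates, and cross moments like $\E[h_l h_n]=-1/(Mc^2)$ for $l\ne n$ and $\E[(\bar g_l)_i(\bar g_n)_j]=\delta_{ln}\delta_{ij}c/M$ complicate a clean union bound. My strategy is the same bookkeeping used in Lemma \ref{lemma: stationary point, s}: classify the terms by the number of $\widehat G$-factors (two, one, or zero) and the total order $n_t$ in $(h,\bar g)$. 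Two-$\widehat G$ terms give $O(\|\widehat G\|_{\max}^2)=o(1/N)$; one-$\widehat G$ terms have $n_t\ge 1$ contributing $O(\|\widehat G\|_{\max}M^{-1/2})=o(1/N+1/\sqrt M)$; zero-$\widehat G$ terms have $n_t=2$ contributing $O(1/M)=o(1/\sqrt M)$, using the crude bound $\sup_\xi|\partial^2\zeta_k/\partial a_l\partial a_n|\le 1$. Combining the two pieces yields the claimed bound, with the remainder lumped into $o(1/N+1/\sqrt M)$ and the leading constants exactly those in the theorem statement.
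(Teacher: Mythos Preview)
Your proposal is correct and follows the paper's proof essentially step for step: the same first-order Taylor expansion of $\zeta_k(a+b)$ with Lagrange remainder, the same triangle-inequality bound on $\E|b_l|$ using Lemmas \ref{lemma: h, s} and \ref{lemma: hg, s} to extract the $\|\widehat G\|_{\max}$, $\sqrt{c(1-1/c)}/\sqrt{M}$, and $\sqrt{c}/\sqrt{M}$ coefficients, and the same three-way classification of the quadratic remainder by the count of $\widehat G$ factors. One caveat: the step $\E[\max_k X_k]\le\max_k\E[X_k]$ you wrote is the wrong direction in general, but the paper itself simply asserts $\E[\Delta]=\max_k\E[|\zeta_k(a+b)-\zeta_k(a)|]$ without justification, so your argument is no less rigorous than the original on this point.
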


\subsection{Proof of Theorem \ref{thm: multi_test}}

\begin{proof}
The output of the trained transformer is
\begin{align}
    \widehat{y}_\out=\softmax\pth{\pth{\frac{c}{M} \sum_{i=1}^M y_i \testx_i^\top}(\Lambda^{-1}+\widehat{G})\testx_\query}=\softmax(P^\top (\Lambda^{-1}+\widehat{G})q)
\end{align}
The probability of $y_{\query}=\textbf{e}_k$ given $x_\query$ is 
    $$\prob{y_{\query}=\textbf{e}_k | x_\query}=\softmax(\mu^\top \Lambda^{-1} x_\query)_k=\softmax(\mu^\top\Lambda^{-1}q)_k$$
Defining $a=\mu^\top  \Lambda^{-1} q$, $b=(\mu+\mu_h+g)^\top \widehat{G} q+(\mu_h +g)^\top\Lambda^{-1}q$, $a_k=\mu_k^\top \Lambda^{-1}q$, $b_k=(\mu_k+ch_k\mu_k+g_k)^\top \widehat{G} q + (ch_k\mu_k+g_k)^\top \Lambda^{-1}q$,
we have
\begin{align*}
    &\E\qth{\softmax(P^\top (\Lambda^{-1}+\widehat{G}) q)_k}=\E\qth{\zeta_k(a+b)}=\E[\zeta_k(a)+\sum_{l=1}^c \frac{\partial \zeta_k(a)}{\partial a_l} b_l+\sum_{l,n=1}^c R_{kln}(a,b)b_lb_n/2]
\end{align*}
where $|R_{kln}(a,b)|\leq \sup_x|\frac{\partial^2 \zeta_k(x)}{\partial x_l\partial x_n}|$. Thus, we have
\begin{align*}
\E[\abs{\zeta_k(a+b)-\zeta_k(a)}]
    \leq \E\qth{\sum_{l=1}^c\abs{\frac{\partial \zeta_k(a)}{\partial a_l}b_l}}+\E\qth{\abs{\sum_{l,n=1}^c R_{kln}(a,b)b_lb_n/2}}.
\end{align*}
We first consider the term $\E\qth{\sum_{l=1}^c\abs{\frac{\partial \zeta_k(a)}{\partial a_l}b_l}}$.
Defining $\Bar{g}_l=\Lambda^{-1/2}g_l$, we have 
\begin{align*}
    &\E\qth{\sum_{l=1}^c\abs{\frac{\partial \zeta_k(a)}{\partial a_l}b_l}}\\
    \leq& \sum_{l=1}^c\frac{\partial \zeta_k(a)}{\partial a_l}\pth{|\mu_l^\top \widehat G q|+\E[|ch_l\mu_l^\top \widehat G q|]+\E[|g_l^\top \widehat G q|]+\E[|ch_l\mu_l^\top \Lambda^{-1} q|]+\E[|g_l^\top \Lambda^{-1} q|]}\\
    \overset{(a)}{\leq}&\sum_{l=1}^c\frac{\partial \zeta_k(a)}{\partial a_l}\pth{|\mu_l^\top \widehat G q|+\frac{\sqrt{c(1-1/c)}}{M^{1/2}}|\mu_l^\top \widehat G q|+\E[|\Bar{g}_l^\top \Lambda^{1/2} \widehat G q|]+\frac{\sqrt{c(1-1/c)}}{M^{1/2}}|\mu_l^\top \Lambda^{-1} q|+\E[|\Bar{g}_l^\top \Lambda^{-1/2} q|]}\\
    \overset{(b)}{\leq}&\sum_{l=1}^c\frac{\partial \zeta_k(a)}{\partial a_l}\qth{\|\widehat{G}\|_{\max}\sum_{i,j\in[d]}|(\mu_l)_iq_j|+\frac{1}{M^{1/2}}\pth{\sqrt{c(1-1/c)}|\mu_l^\top \Lambda^{-1} q|+\sqrt{c}\sum_{i,j\in[d]}|\Lambda^{-1/2}_{ij}q_j|}}\\
    &+o\pth{\frac{1}{N}+\frac{1}{\sqrt{M}}},
\end{align*}
where $(a)$ is due to Lemma \ref{lemma: h, s} that $\E[|h|]\leq M^{-1/2}c^{-1/2}(1-1/c)^{1/2}$. $(b)$ is because that $\Bar{g}_l\sim\normal(0,c^2M_lI_d/M^2)$, $\E[\abs{(\Bar{g}_l)_i}]\leq \E[(\Bar{g}_l)_i^2]^{1/2}=(c/M)^{1/2}$, for $l\in[c], i\in[d]$.

For  $\E\qth{\abs{\sum_{l,n=1}^c R_{kln}(a,b)b_lb_n/2}}$, we have
\begin{align*}
    &\E\qth{\abs{\sum_{l,n=1}^c R_{kln}(a,b)b_lb_n/2}} 
    = O(1)\E\Biggr[\sum_{l,n=1}^c \Biggr(\underbrace{\sum_{\phi_1\in\{\mu_l, ch_l\mu_l,g_l\}, \phi_2\in\{\mu_n, ch_n\mu_n,g_n\} } \abs{\phi_1^\top \widehat Gq \phi_2^\top \widehat Gq} }_{(i)}\\
    &+\underbrace{\sum_{\phi_1\in\{\mu_l, ch_l\mu_l,g_l\}, \phi_2\in\{ ch_n\mu_n,g_n\} } \abs{2\phi_1^\top \widehat Gq \phi_2^\top \Lambda^{-1}q }}_{(ii)}
    +\underbrace{\sum_{\phi_1\in\{ch_l\mu_l,g_l\}, \phi_2\in\{ ch_n\mu_n,g_n\} } \abs{\phi_1^\top \Lambda^{-1}q \phi_2^\top \Lambda^{-1}q }}_{(iii)}\Biggr)\Biggr].
\end{align*}
For terms $(i)$ having two $\widehat G$, they are at most smaller than $O(\|\widehat G\|_{\max}^2)=O(1/N^2)$. For terms $(ii)$ having one $G$,  these terms must contain $n_{1j}$ number of $h_j$ and $n_{2ji}$ number  of $(\Bar{g}_j)_i$, we have 
$\sum_{j\in[c], i\in[d]}n_{1j}+n_{2ji}=n_t, n_t=1,2$. According to Lemma \ref{lemma: hg, s}, we know that for $n_t=1,2$, 
\begin{align*}
    \E[\prod_{j\in[c],i\in[d]}\abs{h_j^{n_{1j}} (\Bar{g}_j)_i^{n_{2ji}} }]=O(M^{-1/2}).
\end{align*}
Thus, terms in (ii) are at most smaller than $O(\|G\|_{\max}M^{-1/2})=O(1/(N\sqrt{M}))$.
For terms $(iii)$  without $G$, these terms must contain $n_{1j}$ number of $h_j$ and $n_{2ji}$ number  of $(\Bar{g}_j)_i$, we have 
$\sum_{j\in[c], i\in[d]}n_{1j}+n_{2ji}=n_t, n_t=2$.  
According to Lemma \ref{lemma: hg, s}, for $n_t=2$, we have
\begin{align*}
    \E[\prod_{j\in[c],i\in[d]}|h_k^{n_{1k}} (\Bar{g}_j)_i^{n_{2ji}} |]=O(M^{-n_t/2})=O(M^{-1}).
\end{align*}
Thus, these term are $ O(M^{-1})$. Therefore, we have $\E\qth{\abs{\sum_{l,n=1}^c R_{kln}(a,b)b_lb_n/2}}=O(1/N^2+1/M+1/(N\sqrt{M}))=o(1/N+1/\sqrt{M})$.

Finally, we have
\begin{align*}
    &\E[\Delta(y_\query, \widehat y_\query)]=\max_k\{\E[|\softmax(a+b)_k-\softmax(a)_k|]\}\\
    \leq&\max_{k\in[c]}\sth{\sum_{l=1}^c\frac{\partial \zeta_k(a)}{\partial a_l}\qth{\|\widehat{G}\|_{\max}\sum_{i,j\in[d]}|(\mu_l)_iq_j|+\frac{1}{M^{1/2}}\pth{\sqrt{c(1-1/c)}|\mu_l^\top \Lambda^{-1} q|+\sqrt{c}\sum_{i,j\in[d]}|\Lambda^{-1/2}_{ij}q_j|}}}\\
    &+o\pth{\frac{1}{N}+\frac{1}{\sqrt{M}}}.
\end{align*}

\end{proof}

\begin{remark}
\label{remark: multi, assumptions not hold}
    We note that Theorem \ref{thm: multi_test} requires Assumption \ref{assume: test prompt distribution, multi} to hold. 
    For example, we need the covariance $\Lambda$ in training and testing to be the same. A similar consistency requirement of the covariance $\Lambda$ in training and testing had also been observed for in-context linear regression in \citet{zhang2023trained} and for in-context binary classification in the previous section \ref{section: test, binary}. 
    
    Here, we discuss the consequences when Assumption \ref{assume: test prompt distribution, multi} does not hold. For example, suppose the labels of our data in test prompts are not balanced $\prob{y=\textbf{e}_k}=p_k$, $\mu$ do not have the same $\Lambda^{-1}$ weighted norm $\mu_k^\top\Lambda^{-1}\mu_k\triangleq \Psi_k$, and the covariance matrix of test data is $\Gamma\neq \Lambda$, then as $N,M\to\infty$, we have 
    $$\frac{c}{M} \sum_{i=1}^M \testy_i \testx_i^\top\to c(p_1\mu_1, p_2\mu_2, \dots, p_c\mu_c)^\top,$$
    and 
    $$\prob{\widehat y_\query=1}\to \softmax(c(p_1\mu_1, p_2\mu_2, \dots, p_c\mu_c)^\top\Lambda^{-1}x_\query).$$
    Denote $\Psi=(\Psi_1,\dots, \Psi_c)^\top$,  $\Phi=(\log(p_1), \dots, \log(p_c))^\top$ and $z=\mu^\top\Gamma^{-1}x_\query-\Psi/2+\Phi$. Then distribution of the ground truth label is $$\prob{y_\query=\textbf{e}_k}=\softmax(z)_k.$$ 
    Define $\hat z=c(p_1\mu_1, p_2\mu_2, \dots, p_c\mu_c)^\top\Lambda^{-1}x_\query$.
    Then, unless $\hat z=z$ or $\|\softmax(\hat z)-\softmax(z)\|_2$ is sufficiently small, the transformer cannot correctly perform the in-context multi-class classification.
\end{remark}

\section{Additional Experiments}
\label{app: exp}
In this section, we provide additional experimental results and the detailed experimental settings.
\subsection{Single-layer transformers}
\begin{figure}[!htp]
    \centering
    \vspace{-0.05in}
    \begin{center}
    	    \subfigure[\small $c=10$]{\includegraphics[width=18em]{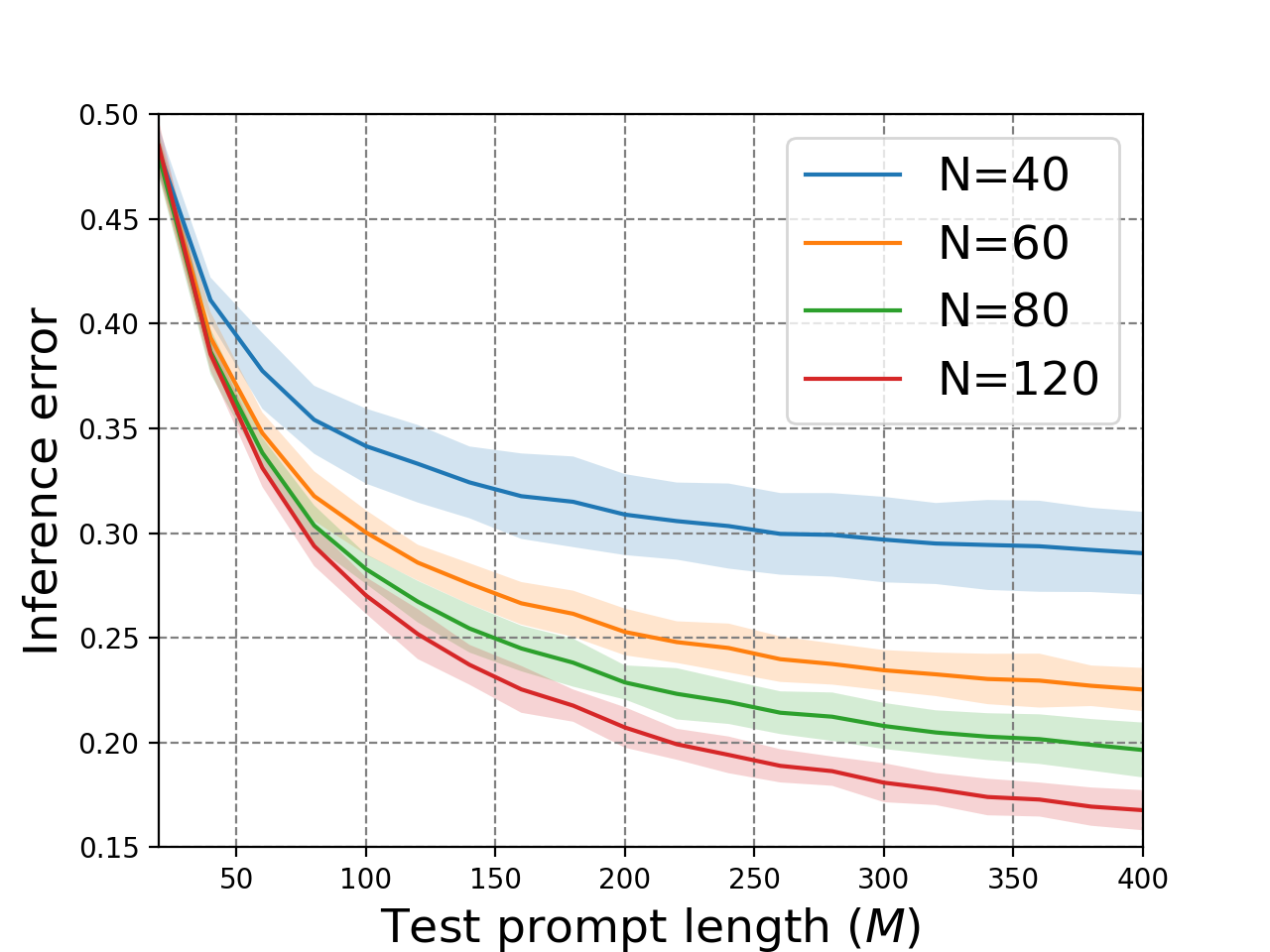}}
    	    \subfigure[\small $N=80$]{\includegraphics[width=18em]{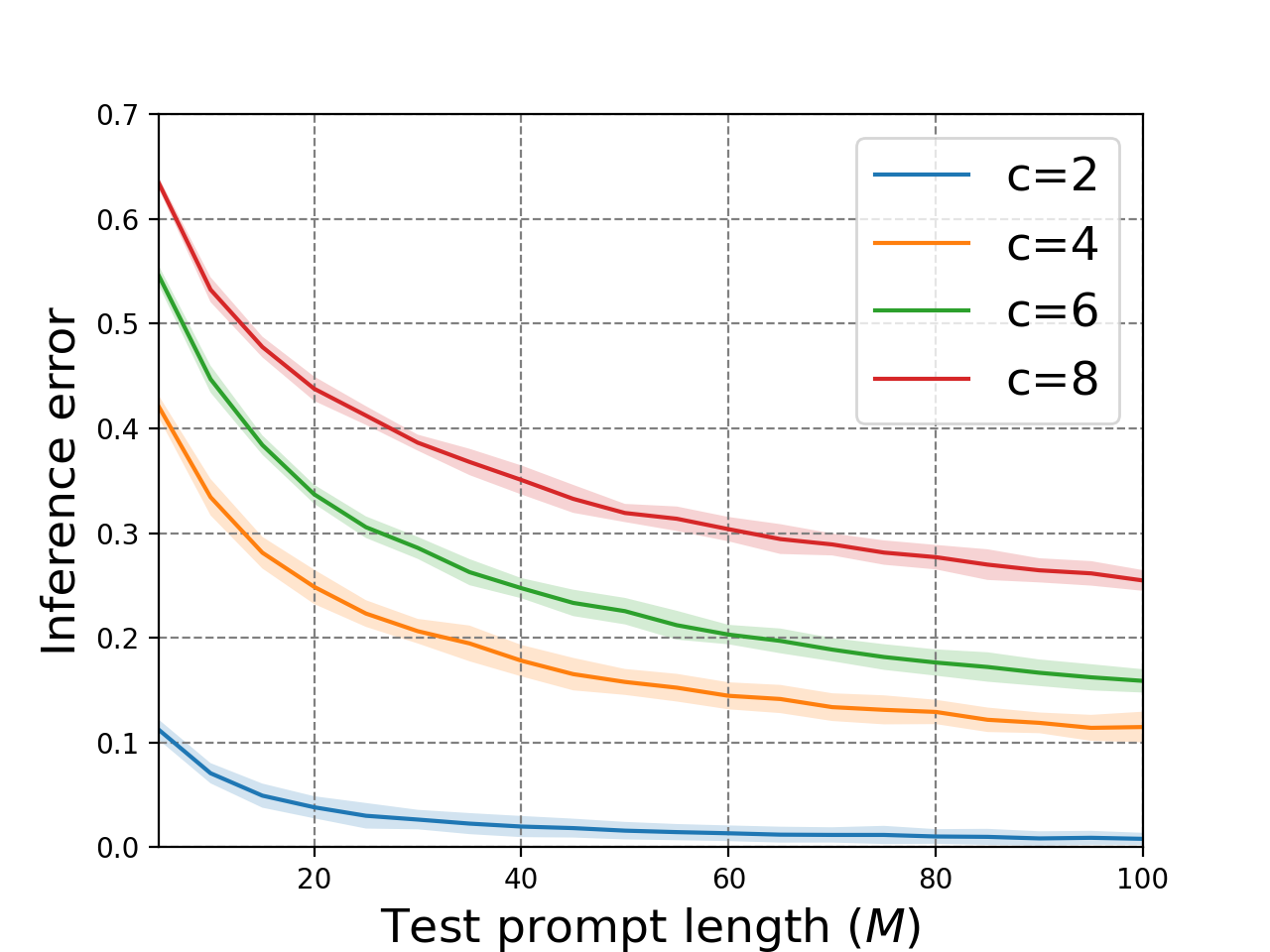}}
    	    \subfigure[\small log-log axes]{\includegraphics[width=18em]{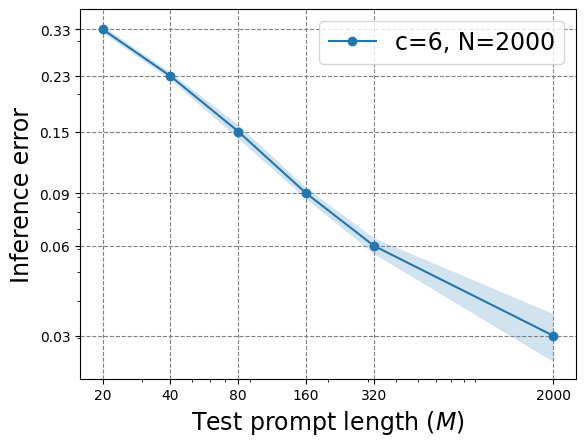}}
    	    \subfigure[\small log-log axes]{\includegraphics[width=18em]{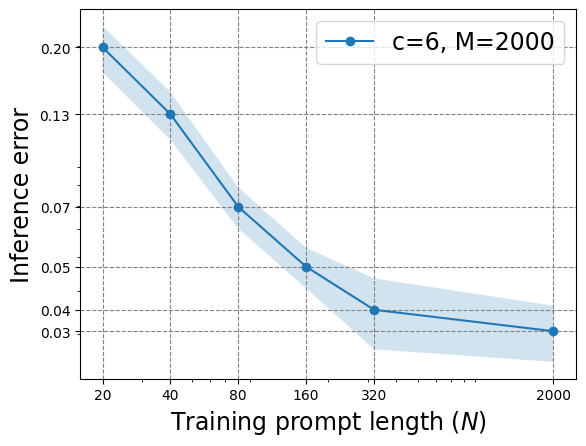}}
    \end{center}
    \vspace{-0.1in}
    \caption{\small {Inference errors of single-layer transformers. (a): Models trained on different training prompt lengths $N$ on classification tasks involving $c=10$ classes. (b): Models trained on different classification tasks involving $c$ classes with a fixed training prompt length $N=80$. (c): Relationship between the inference error and the test prompt length $M$ in log-log axes. Training prompt length $N=2000$ and number of classes $c=6$. (d): Relationship between the inference error and the training prompt length $N$ in log-log axes. Test prompt length $M=2000$ and number of classes $c=6$. 
    }}
    \vspace{-0.1in}
    \label{fig:nm}
\end{figure}

{We train single-layer transformers for in-context classification of Gaussian mixtures with different numbers of classes $c$, different lengths of training prompts $N$, and test them with different test prompt lengths $M$. The results are reported in Figure \ref{fig:nm}. We can see from Figure \ref{fig:nm} (a,b) that the inference errors decrease as $N$ and $M$ increase, and they increase as $c$ increases. In Figure \ref{fig:nm} (c,d), we first fix the training prompt length (test prompt length) to a large number $2000$, and then vary the test prompt length (training prompt length) from 20 to 2000. The results show that, as  $M$ and $N$ become sufficiently large, the inference error, which is an approximation of $\E[\Delta(y_\query, \widehat y_\query)]$ (see Appendix \ref{app: exp detail} for detailed definitions), decreases to near-zero. This indicates that the prediction of the trained transformer approaches the Bayes-optimal classifier. All these experimental results corroborate our theoretical claims.}

\subsection{Experiment Details}
\label{app: exp detail}
For all tasks, we set $d=20$ and we randomly generate a covariance matrix $\Lambda=\text{diag}(\lambda_1, \dots, \lambda_d)$, where $\lambda_i=|\hat \lambda_i|$ and $\hat \lambda_i\iid \normal(3,1)$. For each training dataset with different training prompt lengths $N$, and different class numbers $c$, we randomly generate $B$ training samples. Training prompts $P_\tau, \tau\in[B]$ and their corresponding labels $y_{\tau,\query}$ are generated according to Assumption \ref{assume: training data distribution, multi}. Moreover, we also generate testing datasets. For example, for each testing dataset, we first randomly generate 20 pairs of $(\mu_{j}, x_{j,\query}, y_{j,\text{prob}}), j\in[20]$, where $(\mu_{j})\iid \dis^m_\Omega(\Lambda)$, $x_{j,\query}\sim \dis^m_x(\mu_{j}, \Lambda)$. $y_{j,\text{prob}}=\softmax(\mu_j^\top \Lambda^{-1}x_{j,\query})$ are the corresponding probability distributions of the ground truth label $y_{j,\query}$. For each $j$, we generate 100 testing prompts $P_{jk}=(x_{jk,1}, y_{jk,1},\dots, x_{jk,M}, y_{jk,M}, x_{j,\query})$, where $(x_{jk,i}, y_{jk,i})\iid \dis^m(\mu_{j}, \Lambda), j\in[20], k\in[100], i\in[M]$. 
We denote a model's output for testing prompts $P_{jk}$ as $\widehat y_{jk}$. 
We calculate its inference error with $\frac{1}{20\times100}\sum_{j\in[20], k\in[100]}\max_{l\in[c]}\abs{\widehat (y_{jk})_l-(y_{j,\text{prob}})_l}$, which serves an approximation of the expected total variation distance we defined in \eqref{test error}. 

For the '3-layer' model, we used the x-transformers library and defined it as an encoder-only transformer with 64 embedding sizes, 3 layers, 2 heads and without positional encoding.

For experiments in Figure \ref{fig:nmc}, we set the size of the training dataset to $B=100,000$ and set the batch size to 50. We train the '1-layer' using Adam with learning rate $0.0005$ for 10 epochs, and train the '3-layer' using Adam with learning rate $0.0001$ for 5 epochs. Each experiment is repeated 3 times with different random seeds.
For experiments in Figure \ref{fig: cov and norm}, we also set the size of the training dataset to $B=100,000$ and set the batch size to 50. We train the '1-layer' using Adam with learning rate $0.001$ for 5 epochs, and train the '3-layer' using Adam with learning rate $0.0001$ for 5 epochs. In 'same norm' and 'same covariance' settings, pre-training data are sampled according to Assumption \ref{assume: training data distribution, multi} with a fixed $\Lambda$ that $\Lambda=\text{diag}(\lambda_1, \dots, \lambda_d)$, where $\lambda_i=|\hat \lambda_i|$ and $\hat \lambda_i\iid \normal(3,1)$. In 'different norms' setting, for each $\tau\in [B]$, with probability $\prob{k=j}=1/10, \mu_{\tau, i}\sim\normal(k, I_d), j=0,1,...,9$, then each Gaussian component is sampled according to $\normal(\mu_{\tau, i}, \Lambda)$. In (different covariances) setting, we randomly generate $v_1, v_2, v_3\in \R^d$ that half of their elements are 0.1 and the other half elements are 100. Then, we define $\Lambda_i=\text{diag}(v_i), i=1,2,3$ and generate pre-training data according to Assumption \ref{assume: training data distribution, multi} with $\Lambda, \Lambda_1, \Lambda_2, \Lambda_3$.
Each experiment is repeated 3 times with different random seeds.
For experiments in Figure \ref{fig:compare}, the structure of the transformer with full parameters '1-layer, full' is defined as
\begin{equation}\label{full para, multi}
    F(E;W^{V},W^{KQ}) = E + W^V E \cdot \frac{ E^\top W^{KQ} E}{\rho},
\end{equation}
where $W^V, W^{KQ}\in \R^{(d+c) \times (d+c)}$ are the parameters for optimization. 
For all three transformer models, we set the size of the training dataset to $B=400,000$ and set the batch size to 50. We train the '1-layer, sparse' and '1-layer, full' using Adam with learning rate 0.001 for 5 epochs, and train the 3-layer transformer model with softmax attention using Adam with learning rate 0.0001 for 5 epochs.
Each experiment is repeated 3 times with different random seeds.
For experiments in Figure \ref{fig:nm}, we train the single-layer transformers with the sparse-form parameters and structures defined in Section \ref{section: multi-class classification}. We set the size of the training dataset to $B=10,000$ and set the batch size to 50. We train the transformers using SGD with learning rate $\{0.1, 0.5, 1\}$ for 10 epochs, and get the best model on each training dataset. Then, we test these trained models on different testing datasets. Each experiment is repeated 10 times with different random seeds.

\end{document}